\documentclass[a4paper]{article}
\usepackage[utf8x]{inputenc}
\usepackage[T1]{fontenc}
\usepackage{import}

\usepackage[a4paper,margin=1in]{geometry}
\usepackage[colorlinks=true,allcolors=blue,pagebackref]{hyperref}
\usepackage[round]{natbib}
\bibliographystyle{abbrvnat}

\usepackage{algorithm,algorithmic}
\usepackage{amsfonts}
\usepackage{amsthm}
\usepackage{amsfonts}
\usepackage{hyperref}
\usepackage{enumitem}

\newtheorem{theorem}{Theorem}
\newtheorem{corollary}{Corollary}[theorem]
\newcommand{\cM}{\mathcal{M}}
\newtheorem{lemma}{Lemma}
\newcommand{\cbe}{C_{\textrm{BE}}}
\newcommand{\erf}{\textrm{erf}}
\newcommand{\cK}{\mathcal{K}}

\newcommand{\vectortwo}[2]{\left(\begin{smallmatrix}#1 \\ #2\end{smallmatrix}\right)}

\renewcommand{\v}{\mathbf{v}}
\newcommand{\interval}[2]{[#1,#2]}
\usepackage{framed}
\usepackage{amsthm,amsmath}
\usepackage{pdfpages}
\usepackage{lmodern}
\usepackage{caption}
\usepackage{subcaption}
\usepackage[capitalize]{cleveref}
\usepackage{thm-restate}
\newcommand{\z}{\mathbf{z}}

%
%
\newtheorem*{claim*}{Claim}
\usepackage{amsmath,amsfonts,graphicx}
\newcommand{\f}{\mathbf{f}}
\usepackage{environ}
\usepackage{comment}
\newcommand{\cS}{\mathcal{S}}
\newcommand{\cW}{\mathcal{W}}
\newcommand{\cA}{\mathcal{A}}

%
%

 \newcommand{\ignore}[1]{}
\renewcommand{\c}{\mathbf{c}}
\newcommand{\w}{\mathbf{w}}

\newcommand{\real}{\mathbb{R}}

\newcommand{\lboundary}{\xi_{0}}
\newcommand{\rboundary}{\xi_{1}}

\newcommand{\etwo}{\textbf{e}_2}
\newcommand{\thetatwo}{\theta_2}
\newcommand{\ERM}{F_S}

\newcommand{\wttwo}{w_2^{(t)}}

\newcommand{\eone}{\textbf{e}_1}
\newcommand{\wt}{\textbf{w}^{(t)}}
\newcommand\numberthis{\addtocounter{equation}{1}\tag{\theequation}}

\renewcommand{\u}{\mathbf{u}}

\newcommand{\K}{\mathcal{K}}

\crefname{property}{property}{properties}

\newcommand{\wsone}{w^S_1}
\newcommand{\wstwo}{w_2^S}

\newcommand{\ws}{\mathbf{w}_S}

\DeclareMathOperator*{\argmin}{arg\,min}

\newtheorem*{theorem*}{Theorem}
\newtheorem*{lemma*}{Lemma}

\newtheorem{claim}{Claim}[theorem]
\newtheorem{definition}{Definition}

\newcommand{\EE}{\mathop\mathbb{E}}

\usepackage{graphicx}
\usepackage{animate}
\newcommand{\wonet}[1]{w_1^{(#1)}}
\newcommand{\wtwot}[1]{w_2^{(#1)}}

\usepackage{wrapfig}
\usepackage{microtype}
\usepackage{graphicx}

\usepackage{booktabs} 

\usepackage[colorlinks=true,allcolors=blue,pagebackref]{hyperref}
\newcommand{\fullversion}[2]{#2}


\renewcommand{\smash}[1]{#1}

\title{Can Implicit Bias Explain Generalization? \\ Stochastic Convex Optimization as a Case Study}

\author{%
Assaf Dauber%
\thanks{Tel Aviv University, Department of Electrical Engineering; {\tt assafdauber@mail.tau.ac.il}.}
\and
Meir Feder%
\thanks{Tel Aviv University, Department of Electrical Engineering; {\tt meir@tauex.tau.ac.il}.}
\and
Tomer Koren%
\thanks{Tel Aviv University, School of Computer Science, and Google Research; {\tt tkoren@tauex.tau.ac.il}.}
\and Roi Livni%
\thanks{Tel Aviv University, Department of Electrical Engineering; {\tt rlivni@tauex.tau.ac.il}.}
}

\begin{document}
\maketitle
\begin{abstract}
The notion of implicit bias, or implicit regularization, has been suggested as a means to explain the surprising generalization ability of modern-days overparameterized learning algorithms. 
This notion refers to the tendency of the optimization algorithm towards a certain structured solution that often generalizes well. 
Recently, several papers have studied implicit regularization and were able to identify this phenomenon in various scenarios. 
We revisit this paradigm in arguably the simplest non-trivial setup, and study the implicit bias of Stochastic Gradient Descent (SGD) in the context of Stochastic Convex Optimization.
As a first step, we provide a simple construction that rules out the existence of a \emph{distribution-independent} implicit regularizer that governs the generalization ability of SGD.
We then demonstrate a learning problem that rules out a very general class of \emph{distribution-dependent} implicit regularizers from explaining generalization, which includes strongly convex regularizers as well as non-degenerate norm-based regularizations.
Certain aspects of our constructions point out to significant difficulties in providing a comprehensive explanation of an algorithm's generalization performance by solely arguing about its implicit regularization properties.
\end{abstract}

\section{Introduction}
One of the great mysteries of contemporary machine learning is the impressive success of \emph{unregularized} and \emph{overparameterized} learning algorithms. 
In detail, current machine learning practice is to train models with far more parameters than samples and let the algorithm \emph{fit} the data, oftentimes without any type of regularization. 
In fact, these algorithms are so overcapacitated that they can even memorize and fit random data. Yet, when trained on real-life data, these algorithms show remarkable performance in generalizing to unseen samples~\citep{NeyshaburTS14,ZhangBHRV17}.

This phenomenon is often attributed to what is described as the \emph{implicit-regularization} of an algorithm~\citep{NeyshaburTS14}. 
Implicit regularization roughly refers to the learner's preference to implicitly choosing certain structured solutions \emph{as if} some explicit regularization term appeared in its objective. 
As a canonical example, in linear optimization one can show that various forms of gradient descent, an apriori unregularized algorithm, behaves identically as regularized risk minimization penalized with the squared Euclidean norm on the parameters~\citep{shalev2011online}.

Understanding implicit regularization poses several interesting challenges. 
For example: how can we find the implicit bias of a given learning algorithm? 
what is the rate of convergence towards the biased solution? 
how (and if) does it govern the generalization of an algorithm? 
and, when and what types of regularizations can account for and explain the generalization in modern-days machine learning?

Towards answering these questions we revisit a fundamental setting that was extensively studied in recent years: Stochastic Convex Optimization (SCO), focusing on the SGD optimization algorithm.
In contrast to most previous work, we do not attempt to identify the implicit bias in specific problems. 
Instead, we study these questions in the general case, and we construct examples which rule out the existence of potential regularizers in general.
To some extent, these constructions demonstrate a behavior that might seem counter-intuitive or contradictory to the implicit-bias point of view.

Besides being a well-studied and well-understood model for learning, an important trait of SCO which makes it suitable for our investigation is that learning cannot in general be performed by naive \emph{Empirical Risk Minimization} (ERM). In detail, the work of \citet{shalev2009stochastic} showed the existence of SCO instances where naive-ERM fails but \emph{regularized}-ERM succeeds. Thus, we view SCO as a natural test-bed for exploring the role of regularization and its relation to generalization. Compellingly, the generalization of SGD in SCO is well-established, and we are left with the question of how well can we account for generalization through an investigation of its bias.

\subsection{Contributions}

\paragraph{Implicit distribution-independent bias.}

We begin with a simple construction which demonstrates that SGD does not have any  \emph{distribution-independent} implicit bias. To show that, we construct a case where SGD \emph{does not} converge to a Pareto-efficient (not even approximately) solution with respect to the empirical loss and a given regularization penalty. 
In fact, this result is also true for Gradient Descent over smooth functions. In other words, our construction here involves a distribution supported on a single smooth convex function.

Our result is general and rules out any (reasonable) regularizer from being the implicit bias of SGD in this distribution-independent setting. Since the Euclidean-norm distance is the immediate suspect for the implicit regularization of SGD, the first step towards achieving the result is to rule out that Euclidean norm is the implicit bias of SGD. We thus construct an example of a function with a plateau of minimizers where SGD does not converge to the closest point in Euclidean-norm sense.
While the result might not seem surprising, it is the technical engine behind the further constructions we provide. Previous to this work, \citet{suggala2018connecting} showed that gradient descent with an infinitely small step size (that is, gradient flow), might diverge from the closest point, and we provide a complementary construction combined with a full rigorous analysis for fixed step-size gradient descent. 

\paragraph{Implicit distribution-dependent bias.}

Having ruled out the possibility of a problem-independent regularizer, we proceed to study the more compelling \emph{distribution-dependent} implicit regularization. The question here is whether for every distribution over convex functions, we can associate a regularizer $r$ such that SGD tries to (approximately) find a Pareto-efficient solution with respect to $r$ and the empirical loss (notice that we allow the regularizer to depend on the distribution, but \emph{not} on the specific sample received by SGD.)

We first show that we can rule out the effect of strongly-convex regularizers in the relevant regime of learning (where the dimension and the number of training examples are of roughly the same order). In fact, we rule out a more general class of regularizers that have large range on sets with large diameters. Namely, in any ball with large diameter the regularizer shows preference towards a certain point.

We then continue and demonstrate a distribution where, given an input sample, there is a very large set of possible solutions that share the same empirical loss and the same regularization penalty, and yet, SGD chooses its solution arbitrarily within this set. 
Here, by ``very large'' we mean from a learning-theoretical point of view; namely, this set is large enough so that, in general, 
empirical risk minimization restricted to the set will fail (and yet, it appears that this is exactly what SGD does).
In other words, no regularizer $r$ is sufficient for narrowing down the set of possible SGD solutions to the point where non-trivial generalization can be deduced without appealing to other properties of the specific problem.


\paragraph{Implicit bias in constant dimension.}
Several of our constructions are given in high dimension, namely the number of parameters is larger than the number of examples. One could argue that this is the interesting regime, nevertheless it is still worthy to understand the role of implicit bias when the dimension of the problem is smaller than number of examples.
Here we cannot rule out the role of implicit bias in a similar fashion to before - namely, due to uniform convergence, any algorithm that is constrained to the unit ball will generalize and this implicit bias is indeed the explanation to that. It is interesting though to understand the existence of specific regularizers (such as, e.g., strongly convex regularizers).

While we do not provide an answer to this question, we make an intermediate step. Our final construction is in a slightly relaxed model, where the instances are non-convex, but the expected loss function is convex. While this result may be limited, because of the non-convexity, we stress that the learning guarantees of SGD are completely applicable to this setting: namely, SGD does learn the problem (as it is convex in expectation). 
We show that for any \emph{strictly quasi-convex} regularizer, namely a regularizer that has preference for a single point in any convex regime, the algorithm will not converge to the optimal solution with optimal regularization penalty (even though it converges to a convex domain where seemingly it can improve its parameter choice towards the regularized solution).



\subsection{Related work}

Understanding the implicit bias of learning algorithms and its importance in generalization is a central theme in machine learning, and in the study of many classical algorithms~\citep{buhlmann2003boosting, schapire1998boosting, wei2017early}.
Recently, implicit bias has received considerable attention in the past few years.
Starting with \cite{NeyshaburTS14,ZhangBHRV17}, it was suggested that implicit regularization might explain the success of networks to improve test error by increasing network size beyond what is needed to achieve zero training error. 
Subsequently, a line of work has focused on identifying implicit regularization in various problems and domains, e.g., linear and non parametric regression~\citep{ali2019continuous, raskutti2014early, wei2017early} matrix factorization~\citep{gunasekar2017implicit, arora2019implicit}, linearly separable data~\citep{soudry2018implicit, gunasekar2018implicit}, as well as deep networks~\citep{neyshabur2017implicit,neyshabur2017geometry} and others~\citep{NacsonLGSSS19,nakajima2010implicit,lin2016generalization,gunasekar2018characterizing}. Our work here can be seen as an attempt to investigate the limitations of implicit regularization. 
Most similarly to this work, \citet{suggala2018connecting} provides an example of a problem where gradient flow does not converge to closest Euclidean solution. Here we focus on the more concrete SGD algorithm with a fixed step size, and give finite-time analysis. We are also able to harness our example to construct further new constructions that rule out a richer class of implicit-type regularization schemes.

This work can also be seen as an attempt towards separation between \emph{learnability} and \emph{regularization}. 
Besides regularization, several other useful notions have been suggested as surrogates of learnability. 
Most classically, uniform convergence \citep{blumer1989learnability} has been shown to be equivalent to learnability in the binary, distribution-independent model of PAC learning~\citep{valiant1984theory}. As discussed, \cite{shalev2009stochastic} showed that in the stochastic convex setting naive-ERM fails (but not regularized-ERM), hence learnability and uniform convergence are no longer equivalent. The constructions of \citet{shalev2009stochastic} were later substantially strengthened by \citet{feldman2016generalization}. 
More recently, \citet{nagarajan2019uniform} also provided an example that rules out uniform convergence, perhaps in the strictest sense. Their construction, though, does exhibit tangible implicit regularization, which account to the generalization of the algorithm.

Another useful notion is the \emph{stability} of a learning algorithm.
Stability is very much related to regularization: e.g., regularizing empirical risk minimization with a strongly convex function induces stability~\citep{bousquet2002stability}, and smoothness can also be harnessed to argue for stability \citep{hardt2015train}.
As such, constructing a convex problem where an algorithm is unstable could also serve as a means to rule out certain types of implicit regularizers. 
Our examples are in fact stable, and as such, could also be interpreted as a certain weak separation between stability and regularization.

\section{Preliminaries}

\subsection{The Setup: Stochastic Convex Optimization}
We consider the following standard setting of stochastic convex optimization. 
A learning problem consists of a fixed domain $\cW$, which for concreteness we will assume it to be a closed and bounded set in $\mathbb{R}^d$ for some finite $d$, a class of functions $f(\w;z)$ that are convex over $\w$, and an unknown distribution $D$ over a random variable $z$. 
The objective of the learner is to minimize:
\begin{align*}
F(\w) := \EE_{z\sim D}[f(\w;z)]
.
\end{align*}
The goal of the learner, given a sample $S= \{z_1,\ldots,z_T\}$ of $T$ i.i.d.~examples from the distribution $D$, is to
return a parameter vector $\w_{S}$ such that 
\begin{equation}
\EE_{S}[F(\w_S)]< \min_{\w\in \cW} F(\w)+\epsilon,
\end{equation}
for a desired target accuracy $\epsilon>0$. (The sample size $T$ may be determined based on $\epsilon$.) 

We make the following assumptions throughout.
We will generally assume that the functions $f$ are also $O(1)$-Lipschitz. Specifically, in all our constructions we will have $\|\nabla_\w f(w,z)\| \le 23$ for all values of $z$ and $\w \in \cW$. We will mostly be concerned with the case that $\cW$ is a bounded unit ball of radius $r$ around $0$. For concreteness we will mostly take $r=5$. This is just for convenience and clearly our results apply to any constant radius ball. Since our main focus in this paper is on impossibility results, fixing the Lipschitz constant and the diameter  does not harm the generality of the setup.

We will also discuss strongly-convex functions (or regularizers): we say that a convex function is $\lambda$\emph{-strongly convex} if for any $\w_1,\w_2 \in \cW$ we have: $f(\w_1)\ge f(\w_2) +\nabla f(\w_2)^\top(\w_1-\w_2) + \lambda \|\w_1-\w_2\|^2$.

\subsection{Gradient Descent and Stochastic Gradient Descent}

%
The main focus of this paper is the well-known Stochastic Gradient Descent (SGD) algorithm.
Given a sample $S=\{z_1,\ldots,z_T\}$ and a step-size parameter $\eta>0$, SGD initializes at $\textbf{w}^{(1)}=\textbf{0}$ and performs iterations:
\begin{equation}\label{SGD_alg}
    \forall ~ t=1,\ldots,T :
    \quad
    \w^{(t+1)} = \Pi_{W}\big(\w^{(t)}-\eta \nabla f(\w^{(t)}; z_t)\big)~,
    \quad
    \text{and outputs:}
    \quad
    \w_S
    = \frac{1}{T}\sum_{t=1}^T \textbf{w}^{(t)}
    ,
\end{equation}
where $\Pi_{W}(w)$ is defined to be the projection of $w$ over the convex set $W$.
%
%
The standard SGD analysis guarantees the following (see, e.g.,~\cite{shai_book}): 
\begin{theorem*}
Let $B, \rho>0$. Le $\cW=\{w: \|w\|\le B\}$, and
 assume that $F(\cdot)$ is convex and $\|\nabla f(w,z)\| \leq \rho$ for all $z$ and $w\in W$. 
Suppose that SGD is run for $T$ iterations on the sample $S=\{z_1,\ldots,z_T\}$ with step size $\eta=\sqrt{B^2/(\rho^2 T)}$. 
Then,
\begin{align} \label{thm:sgd}
    \EE_S[F(\w_S)]-F(\w^\star) 
    \leq 
    \frac{B \rho}{\sqrt{T}}
    ,
\end{align}
where here $\w^\star \in \arg \min_{\w : \|\w\| \leq B} F(\w)$.
\end{theorem*}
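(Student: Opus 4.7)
The plan is to prove this via the classical online-to-batch conversion for projected SGD, combining convexity with the non-expansiveness of the Euclidean projection onto $\cW$. The proof is essentially standard; the only subtle point is where to invoke the fact that the stochastic gradients $\nabla f(\w^{(t)};z_t)$ are unbiased estimators of $\nabla F(\w^{(t)})$.

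First, I would use Jensen's inequality together with convexity of $F$ to pull the average inside: $F(\w_S) - F(\w^\star) \le \tfrac{1}{T}\sum_{t=1}^T \bigl(F(\w^{(t)})-F(\w^\star)\bigr)$. Applying convexity once more gives the linearization $F(\w^{(t)})-F(\w^\star) \le \nabla F(\w^{(t)})^\top (\w^{(t)}-\w^\star)$. Since $\w^{(t)}$ is a deterministic function of $z_1,\ldots,z_{t-1}$ and $z_t$ is drawn independently from $D$, we have the tower-rule identity $\EE_{z_t}[\nabla f(\w^{(t)};z_t)\mid \w^{(t)}] = \nabla F(\w^{(t)})$, so it suffices to control $\sum_{t=1}^T \nabla f(\w^{(t)};z_t)^\top (\w^{(t)}-\w^\star)$ in expectation.

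Next, I would carry out the one-step analysis of SGD. Using that $\w^\star \in \cW$ and that $\Pi_{\cW}$ is non-expansive on $\cW$,
$$\|\w^{(t+1)}-\w^\star\|^2 \le \|\w^{(t)}-\eta\nabla f(\w^{(t)};z_t)-\w^\star\|^2.$$
Expanding the right-hand side and rearranging yields
$$\nabla f(\w^{(t)};z_t)^\top(\w^{(t)}-\w^\star) \le \frac{\|\w^{(t)}-\w^\star\|^2 - \|\w^{(t+1)}-\w^\star\|^2}{2\eta} + \frac{\eta}{2}\,\|\nabla f(\w^{(t)};z_t)\|^2.$$
Summing over $t=1,\ldots,T$, the first term telescopes and is bounded by $\|\w^{(1)}-\w^\star\|^2/(2\eta) \le B^2/(2\eta)$, using $\w^{(1)}=\mathbf{0}$ and $\|\w^\star\|\le B$. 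The second term is bounded by $\tfrac{\eta \rho^2 T}{2}$ via the Lipschitz assumption $\|\nabla f(\w^{(t)};z_t)\|\le \rho$.

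Putting the pieces together and dividing by $T$ gives $\EE_S[F(\w_S)] - F(\w^\star) \le \tfrac{B^2}{2\eta T} + \tfrac{\eta \rho^2}{2}$. Plugging in the prescribed step size $\eta = B/(\rho\sqrt{T})$ balances the two terms and produces exactly $B\rho/\sqrt{T}$. The main thing to be careful about is the order of conditioning when invoking unbiasedness of the stochastic gradient; everything else reduces to algebraic manipulation of the one-step contraction and a straightforward telescoping sum.
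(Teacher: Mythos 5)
Your proof is correct and follows exactly the standard online-to-batch argument (Jensen to reduce to the average regret, convexity to linearize, non-expansiveness of projection for the one-step contraction, telescoping, and then balancing the two terms via the step-size choice) that the paper implicitly relies on by citing the textbook reference rather than reproving the bound. The conditioning/unbiasedness step is handled correctly, so there is nothing to flag.
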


%
We will also discuss in this paper the procedure of \emph{Gradient Descent} (GD). Given an objective function $F$ GD obtains the following update steps:
\begin{equation} \label{GD_alg}
    \forall ~ t=1,\ldots,T :
    \quad
    \w^{(t+1)} = \Pi_{W}\big(\w^{(t)} - \eta \nabla F(\w^{(t)})\big)~,
    \quad
    \text{and outputs:}
    \quad
    \w_F
    = \frac{1}{T}\sum_{t=1}^T \textbf{w}^{(t)}
    .
\end{equation}
In our context, given a sample $S=\{z_1,\ldots, z_T\}$, the gradient descent algorithm takes steps using the full gradient with respect to the empirical loss defined as follows $F_{S}(\w)=\frac{1}{T}\sum_{t=1}^T f(\w,z_t)$. We will then write in shorthand $\w_S$ for $\w_{F_S}$
\ignore{
\begin{equation} \label{GD_alg}
    \forall ~ t=1,\ldots,T :
    \quad
    \w^{(t+1)} = \Pi_{W}\left(\w^{(t)} - \eta \nabla F_S(\w^{(t)})\right)~,
    \quad
    \text{and outputs:}
    \quad
    \w_S
    = \frac{1}{T}\sum_{t=1}^T \textbf{w}^{(t)}
    .
\end{equation}
where here $F_{S}(\w) = \frac{1}{T}\sum_{t=1}^T f(\w;z_t)$ is the \emph{empirical loss} of $\w$.}

\paragraph{Other variants of SGD.}

While the above version of SGD is perhaps the most standard one, there are other variants that can be considered.
For example, it is common to consider, instead of a fixed step-size, a decaying step-size (where $\eta$ may depend on $t$), as well as taking the last SGD iterate rather than the average iterate. 
We focus on the version in \cref{SGD_alg} for several reasons. First, taking the last iterate is not always justified and attains suboptimal rates (see \cite{shamir2013stochastic}). Second, the algorithm in \cref{SGD_alg} is also the more challenging variant to argue about, in the sense that averaging and taking small fixed step size induces bias towards initialization, and as such, is more strongly regularized (and indeed, the constructions we provide here can be readily modified to address a decaying step-size or the last iterate.\footnote{In fact, the proofs will be significantly simpler; for example, in the proof overview we actually consider the last iterate for simplicity.}) Another variant to consider is \emph{unprojected} gradient descent. Convergence bounds can be derived for this variant that depend on the norm of the benchmark solution \citep{shai_book, shalev2011online}. Again, we note that in all of our constructions we pick domain large enough so that projections in fact don't take place.

Nevertheless, it could be an interesting future work to derive a natural variant of SGD whose implicit regularization properties induce the desired generalization guarantees.

\subsection{Regularized (Structural) Risk Minimization}
Another well studied approach to perform learning is through \emph{regularization}, Regularized Empirical Risk Minimization (ERM) solves the following minimization problem:
\begin{align}\label{eq:rerm}
    \widehat{w}_\lambda 
    =
    \arg \min_{w \in \mathcal{W}} \big\{ F_S(\textbf{w})+\lambda r(w) \big\},
\end{align}
where $\lambda \in \mathbb{R}^+$, and $r(w): \mathbb{R} \mapsto \mathbb{R}^+$ is a regularization function. When $f(\w;z)$ is Lipschitz-bounded and $r(\w),\lambda$ are properly chosen this method leads to a principled learning algorithm. 
For example, in the case $r(\w)=\|\w\|^2$, \citet{bousquet2002stability} showed that with the correct choice of $\lambda$, Regularized ERM is guaranteed to generalize.

\section{Regularization}

We next discuss the different classes of regularizers we will consider in this paper. While some of the results we provide make little to no  assumptions on the regularizers, sometimes we would like to add further structure and rule out specific classes as the implicit bias of SGD, in other cases we would like to formally explain in what sense we might assume that the regularizer does not allow a comprehensive explanation of the implicit bias.

Most generally, a regularizer is any function $r:\cW\to \real_{+}$. We will however make the following basic assumptions on the regularizers, to avoid degenerate cases: 
\begin{itemize}
    \item $\min_{\w\in \cW} r(\w)=0$
    \item $r$ is non-constant at $\cW \backslash \{0\}$;
    \item $r$ is upper semi-continuous; namely, for every point $\w \in \cW$ and every $\epsilon_0>0$ there exists a neighborhood $B_{\delta_0}(\w)=\{\u: \|\w-\u\|<\delta_0\}$ for which $r(\u)> r(\w)- \epsilon_0$ if $\u\in B_{\delta_0}(\w)$.
\end{itemize}
Any regularizer that satisfies these properties will be said to be an \emph{admissible regularizer} (or shortly, a regularizer). The first assumption above is only for normalization. For the second assumption, the algorithms we will consider are all initialized at zero and may prefer the zero solution if it is a minimizer of the empirical error. But we are mostly concerned with the implicit bias in more involved cases then that.

The last assumption is perhaps somewhat strongest, but it is intended to rule out pathological examples. For example, one could consider a regularizer $r$ which is $0$ on almost all points, but is $1$ on the negligible, dense, set of real numbers that SGD would never reach. 
One could argue that $r$ is an implicit bias of SGD. However, this does not capture our intuition of a regularizer. Thus, we add an assumption that a point penalized by the regularizer should also be penalized under small perturbations. 

\subsection{Strongly-convex Regularizers}
While some of the results we will present are given for general (admissible) regularizers, it is natural and expected to study more structured classes of regularizers and ask if they induce the generalization properties of a certain algorithm. 
One natural family of such regularizers is the class of $\lambda$-\emph{strongly-convex} functions, which we will also assume are $1$-Lipschitz. 
As discussed in length, many of the prominent generalization results are provided in the context of strongly convex regularizers~\citep{bousquet2002stability,shalev2009stochastic}. 

Strongly-convex regularizers come with a very natural property which allows us to rule out such regularizers on certain problems: a strongly convex function always attains a \emph{unique} minimizer on any convex set. 
As such we can always identify if the output of an algorithm minimizes (approximately) the strongly-convex regularizer, by comparing the output to the minimizer of the regularizer over the given empirical risk.

\subsection{General (Admissible) Regularizers}
Studying implicit bias that does not stem from a strongly convex regularizer is no less important; however, it becomes much more subtle to rule out the latter.
Once the regularizer is allowed to have non-unique minima we should be more careful in stating what we mean when we say it \emph{does not explain generalization}. 
In fact, almost any plausible algorithm can be said to be implicitly biased on any given distribution.
For example, the fact that the regularizer is constrained to the unit ball is a form of algorithmic bias---but as was shown by \citet{shalev2009stochastic}, it cannot explain generalization in the SCO setting.

Towards clarifying what we mean by ``explain generalization'', let us consider the following: given a regularizer $r$ and an algorithm $\cA$ that outputs a solution $\cA(S)$ on a sample $S$, define the set of ``competitive'' solutions
\begin{equation} \label{eq:ksr} 
\begin{aligned}
    K_{S,r}(\cA(S))
    =
    \{\w \in \cW \;:\; F_S(\w) \le F_S(\cA(S)) \;\;\text{and}\;\; r(\w) \le r(\cA(S))\}
    .
\end{aligned}
\end{equation}
For shorthand, we will also use the notation $K_{S,r}(\cA)$ instead of $K_{S,r}(\cA(S))$. 

In words, $K_{S,r}(\cA)$ is the set of solutions that are comparable with (or better than) the output of~$\cA$, with respect to both the empirical loss and the regularization penalty.
For example, consider a regularized ERM, as in \cref{eq:rerm}, then $K_{S,r}(\cA)$ depicts \emph{all} minimizers of \cref{eq:rerm} with comparable regularization penalty. For example, with a strongly-convex regularizer $r$ one can observe that the set $K_{S,r}(\cA)$ is in fact a set of a single \emph{unique} solution. 

More generally, if a regularizer~$r$ is said to be the implicit bias of an algorithm $\cA$, and as such it explains the generalization of the algorithm, it is expected that the set $K_{S,r}(\cA)$ would be ``small'' in the sense that choosing an arbitrary solution from it should provide principled guarantees.
If we cannot attain such guarantees without further investigation of the problem and algorithm, we argue that the regularizer does not provide a comprehensive explanation of generalization. This motivates the following definition for studying more general regularizers than, say, strongly convex ones:

\begin{definition}\label{def:scomplexity}
Let us say that a set $K$ is $(T,\epsilon_0)$-\emph{statistically complex} if for some distribution $D$ over $1$-Lipschitz convex functions, given $T$ i.i.d.~samples we have that with probability at least $1/10$ that for some $\w\in K$ it holds that
$
    \frac{1}{T}\sum_{i=1}^T f(\w,z_i)=0,
    \;\text{yet}\;\;
    \EE_{z} [f(\w,z)]>\epsilon_0.
$
\end{definition}
Note that the statistical complexity of the set $K$ is measured with respect to an \emph{arbitrary} distribution $D$ over convex functions: this captures our requirement that the set $K_{S,r}(\cA)$ should explain generalization, \emph{without further investigation of the problem}. In other words, it could be that for a correct choice of a regularizer, on a specific problem, all the models in $K_{S,r}(\cA)$ will generalize. However, what we want is to ensure that the generalization does not stem from any further structure in the problem that is not captured by the regularizer. Thus, we require that this set will be ``simple'' in the sense that on any arbitrary distribution over convex functions we can choose an arbitrary solution that minimizes the empirical risk.
\section{Results}
\subsection{Distribution Independent Implicit Regularization}
We start with the natural question, whether there is some distribution independent implicit regularization being promoted by SGD. As a warm-up we begin by ruling out the existence of a distribution-independent strongly convex regularizer that plays the role of the implicit bias of SGD. This family of regularizers is already very interesting, and has been studied extensively in the literature of stochastic convex optimization~\citep{bousquet2002stability,shalev2009stochastic}.

\begin{theorem}\label{thm:gdwarmup}
Let $\cW=\{\w: \|\w\|\le 5\}$. For every $1$-Lipschitz and $\lambda$-strongly convex $r$, there is a distribution $D_r$ over 
 $1$-Lipschitz and $1$-smooth functions over $\cW$, 
and $\w_r \in \cW$ such that, with probability $1$,
SGD with any step size $1/T^2 < \eta < 1$ over an input sample $S$ of size $T = \Omega(1/(\lambda \eta))$ outputs $\w_S$ such that:
\begin{alignat*}{3}
    &F_S(\w_r) \le  F_S(\w_S),~\quad~
    \text{and}\quad
    &r(\w_r) \le r(\w_S) - \Theta(\lambda)~.
\end{alignat*}
\end{theorem}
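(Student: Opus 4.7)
The plan is to let $D_r$ be the point mass on a single convex, $1$-Lipschitz, $1$-smooth function $f = f_r$, so that SGD reduces to full-batch GD with averaging on the deterministic objective $F_S = f$. The function $f$ is designed so that its set of minimizers forms a convex ``plateau,'' and the GD iterates follow a path that terminates at a point $\w_S$ in the plateau which is displaced by $\Omega(1)$ from some point $\w_r$ in the plateau chosen as (close to) the $r$-minimizer over the plateau. Since $\w_S$ and $\w_r$ both lie in the plateau we get $F_S(\w_r) = F_S(\w_S) = 0$ for free, and $\lambda$-strong convexity together with the variational inequality at the constrained minimizer $\w_r$ yields
\[
 r(\w_S) \;\ge\; r(\w_r) + \tfrac{\lambda}{2}\|\w_S - \w_r\|^2 \;=\; r(\w_r) + \Omega(\lambda),
\]
which is exactly the desired inequality $r(\w_r) \le r(\w_S) - \Theta(\lambda)$.

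The concrete construction of $f$ adapts to $r$ through the plateau geometry. A natural template is $f(\w) = \tfrac{1}{2}\bigl(h_1(\w) + h_2(\w)\bigr)^2$, where $h_i(\w) = \max\{0,\, a_i - \langle \v_i, \w\rangle\}$ are non-negative ramps; then $f$ is convex and smooth (after suitable rescaling it is also $1$-Lipschitz and $1$-smooth on $\cW$), and its minimizer set is the polyhedron $\bigcap_i\{\langle \v_i, \w\rangle \ge a_i\}$. Starting from $\mathbf{0}$ the initial gradient is proportional to $\v_1+\v_2$, but as soon as one of the ramps deactivates along the trajectory (say $h_2$ first), the effective gradient field abruptly switches direction, so the GD path ``bends'' and lands on the plateau at an interior boundary point that is generically \emph{not} the $r$-minimizer on the polyhedron. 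Choosing $(\v_1,a_1),(\v_2,a_2)$ as functions of the minimizer $\w^\star := \arg\min r$ --- in particular so that the bend direction is transverse to the level sets of $r$ near $\w^\star$ --- gives an $\Omega(1)$ displacement $\|\w_S - \w_r\|$ independent of $\lambda$. A toy instance of this (with $h_1 = \max\{0,1-w_1\}$, $h_2 = \max\{0, 1-w_1+w_2\}$) can be checked by hand: GD converges along the direction $(2,-1)$ until $h_2$ turns off at $\w=(2/3,-1/3)$, then along $\eone$ to the landing point $(1,-1/3)$, while the minimum-$\|\cdot\|^2$ point on the plateau is $(1,0)$.

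The main technical obstacle is the \emph{discrete} analysis of fixed step-size GD with averaging, rather than the continuous gradient-flow analysis of \citet{suggala2018connecting}. Two sources of error must be absorbed: discrete iterates may overshoot the kinks of $f$ (where a ramp deactivates) by $O(\eta)$, and the averaged output $\w_S = \tfrac{1}{T}\sum_t \wt$ lags the terminal iterate by an amount depending on the length of the transient phase. The step-size window $1/T^2 < \eta < 1$ and the sample-size lower bound $T = \Omega(1/(\lambda\eta))$ are exactly calibrated to handle these: $\eta \gg 1/T^2$ guarantees meaningful per-step progress along each linear segment, while $T\eta \gg 1/\lambda$ ensures that the transient is negligible compared with the time spent at the stationary landing point, so that $\w_S$ is within $o(1)$ of that point. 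The bulk of the rigorous work is therefore a case analysis on which ramps are active at each iteration, combined with an elementary induction tracking $\wt$ along the piecewise-linear gradient field through each segment until it enters the plateau, after which $\nabla f(\wt)=0$ and the average quickly concentrates on the landing point. Once $\w_S$ is pinned down to $o(1)$ of the predicted landing point, the triangle inequality preserves the $\Omega(1)$ displacement from $\w_r$, and the strong-convexity bound above closes the argument with probability $1$, as the entire construction is deterministic.
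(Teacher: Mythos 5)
Your high-level plan mirrors the paper's: concentrate the distribution on a single deterministic $f_r$ so that SGD degenerates to GD, arrange for $f_r$ to have a plateau of minimizers on which the GD trajectory lands at a point displaced by $\Omega(1)$ from the $r$-minimizer over the plateau, and then close with the variational inequality for strong convexity. That skeleton is exactly the paper's. However there are two concrete gaps in your execution.

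\paragraph{Smoothness of the constructed function.}
The theorem requires the distribution to be supported on $1$-smooth functions. Your template $f(\w)=\tfrac12\bigl(h_1(\w)+h_2(\w)\bigr)^2$ with piecewise-linear ramps $h_i(\w)=\max\{0,a_i-\langle\v_i,\w\rangle\}$ is convex and Lipschitz but is \emph{not} $C^1$. Writing $g=h_1+h_2$, you have $\nabla f = g\,\nabla g$; at a kink of (say) $h_2$, $\nabla g$ jumps by $\v_2$, and if $h_1>0$ there then $g>0$, so $\nabla f$ jumps by $g\,\v_2\ne 0$. In your toy instance this happens precisely along $\{w_1-w_2=1,\ w_1<1\}$, which is exactly the set through which you route the trajectory. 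So as written the distribution is not over smooth functions, and a bona fide smoothing would change the piecewise-linear gradient field and hence the ``bend'' analysis. The paper avoids this by taking $F_{\theta_1,\theta_2}(\w)=\min_{\v\in A}\tfrac12(\w-\v)^\top\Sigma(\w-\v)$, i.e., a squared distance to a segment in a skewed quadratic metric, which is genuinely $C^1$ because the argmin $\v(\w)$ (a projection onto a convex set) is continuous, and in fact is proven to be $4$-smooth and then rescaled.

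\paragraph{The ``aiming'' step is not pinned down.}
You propose to choose $(\v_1,a_1),(\v_2,a_2)$ as functions of $\w^\star=\arg\min r$ so that the bend direction is transverse to the level sets of $r$ near $\w^\star$, and assert this gives $\|\w_S-\w_r\|=\Omega(1)$ where $\w_r$ is the $r$-minimizer \emph{over the plateau}. But that minimizer is determined by the entire function $r$ on the plateau, not by $\w^\star$ or the local level-set geometry at $\w^\star$; for instance nothing rules out that the GD landing point happens to coincide with (or be $o(1)$ from) the $r$-minimizer over the plateau, in which case your bound is vacuous. The paper sidesteps this with a clean dichotomy: it fixes the segment $\interval{\etwo}{\c}$ with $\etwo=(0,1)$, $\c=(0.024,1)$ to be the minimizer set of \emph{both} candidate constructions, computes $\w^*=\arg\min_{\interval{\etwo}{\c}} r$, and notes that $\w^*$ cannot be within $0.012$ of both endpoints. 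If $\|\w^*-\etwo\|\ge 0.012$ it uses the isotropic quadratic (GD lands at $\etwo$); if $\|\w^*-\c\|\ge 0.012$ it uses $F_{\theta_1,\theta_2}$ (GD lands near $\c$ by \cref{lem:noneuclid}); in either case the landing point is $\ge 0.012$ from $\w^*$, and then the $\nabla r(\w^*)^\top(\Pi(\w_F)-\w^*)\ge 0$ term vanishes by the variational inequality while the $(\w_F-\Pi(\w_F))$ defect is absorbed by Lipschitzness of $r$ and the $T=\Omega(1/(\lambda\eta))$ lower bound. Your write-up needs an analogous, explicit mechanism guaranteeing the $\Omega(1)$ separation; ``transverse to the level sets near $\w^\star$'' does not supply one.
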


In words, for any strongly convex regularizer there exists an instance problem where SGD chooses a solution that is sub-optimal in terms of both empirical error, and regularization penalty.

The last result can be extended to general (admissible) regularizers. Here, the rate of divergence from a Pareto optimal solution depends on the structure of the regularizer $r$. 
This dependence of the divergence-rate on the regularizer $r$ is unavoidable. Indeed if we consider a regularizer $r$ such that $r\approx 0$, it is not hard to be convinced that it would take SGD longer to become $r$-suboptimal.

\begin{theorem} \label{thm:gdr}
Let $\cW=\{\w: \|\w\|\le 5\}$. For every admissible regularizer $r$, there are constants $c_{r}>0$, a distribution $D_r$ (over $1$-Lipschitz and $1$-smooth convex functions), and $\w_r\in \cW$ such that, with probability $1$ over the input sample $S$, SGD with any step size $1/T^2<\eta<1$ and sample size $T_r= \Omega_r (1/\eta)$ outputs $\w_S$ such that: 
\begin{alignat*}{3}
    &F_S(\w_r) \le  F_S(\w_S),~
    \quad \text{and}\quad ~
     r(\w_r) \le r(\w_S)-c_r~.
\end{alignat*}
\end{theorem}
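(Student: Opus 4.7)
The plan is to build on the plateau construction behind \cref{thm:gdwarmup} and replace the strongly-convex appeal to uniqueness-of-minimizer with a direct use of the admissibility axioms (non-constancy and semi-continuity). Since $r$ is non-constant on $\cW \setminus \{0\}$, fix $\w_a, \w_b \in \cW \setminus \{0\}$ with $r(\w_a) < r(\w_b)$ and set $c_r := (r(\w_b) - r(\w_a))/4$. Applying the semi-continuity clause of admissibility at $\w_b$ with $\epsilon_0 = c_r$ produces a radius $\delta = \delta(r) > 0$ such that $r(\u) > r(\w_b) - c_r \ge r(\w_a) + 3 c_r$ for every $\u \in B_\delta(\w_b)$.

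Next, take $D_r$ to be a point mass on a single $1$-Lipschitz, $1$-smooth convex function $f$ whose minimizer set $M$ is a convex plateau containing $\w_a$ and whose Euclidean projection $\Pi_M(0)$ lies inside $B_\delta(\w_b)$. A clean template is the Huber-smoothed distance $f(\w) = h\big(\mathrm{dist}(\w, M)\big)$ with $h(t) = t^2/2$ on $[0,1]$ and $h(t) = t - \tfrac{1}{2}$ on $[1, \infty)$; the composition is convex, $1$-Lipschitz and $1$-smooth. In the cleanest case $M$ is the half-space $\{\w : \langle \w, \w_b\rangle \ge \|\w_b\|^2\}$, for which $\Pi_M(0) = \w_b$ exactly and the inclusion $\w_a \in M$ reads $\langle \w_a, \w_b\rangle \ge \|\w_b\|^2$. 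Because $f$ depends only on the coordinate $x := \langle \w, \w_b/\|\w_b\|\rangle$, SGD starting at $\w^{(1)} = 0$ moves only along $\w_b/\|\w_b\|$: it advances by $\eta$ per step in the linear regime and by the factor $1 - \eta$ per step once within distance $1$ of $M$, exactly as analyzed in the proof of \cref{thm:gdwarmup}. A direct summation of the one-dimensional orbit $x_t$ shows that for $T \ge T_r := C_r / \eta$ (with $C_r$ depending polynomially on $\|\w_b\|$ and $1/\delta$) the running average satisfies $\|\w_S - \w_b\| \le \delta$. Setting $\w_r := \w_a$, we conclude $F_S(\w_r) = 0 \le F_S(\w_S)$ since $\w_a \in M$, and $r(\w_S) > r(\w_a) + 3 c_r \ge r(\w_r) + c_r$ since $\w_S \in B_\delta(\w_b)$.

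The hard part will be the geometric matching in the second step: for an arbitrary admissible $r$, the abstract witnesses $\w_a, \w_b$ delivered by non-constancy need not lie in the rigid configuration $\langle \w_a, \w_b\rangle \ge \|\w_b\|^2$ that the half-space plateau forces. I plan to resolve this by exploiting the remaining freedom---rescaling $\w_b$ along the ray $\{\lambda \w_b\}_{\lambda \in (0,1]}$ and using semi-continuity to keep the gap $r(\w_b) - r(\w_a) > 0$ on a small scale---or, when no such rescaling works (for instance when $r$ is strictly monotone along rays, as with $r(\w) = \|\w\|$), replacing the half-space $M$ by a richer convex plateau such as a thin slab or the convex hull of $\{\w_a\} \cup B_\delta(\w_b)$ arranged so that $\Pi_M(0) \in B_\delta(\w_b)$. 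The Huber-smoothed distance template still yields a valid $1$-Lipschitz, $1$-smooth $f$ for any such $M$, so the SGD analysis transfers, and the desired inequalities follow once the geometric configuration is in place.
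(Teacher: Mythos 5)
Your construction hinges on an isotropic gadget: a Huber-smoothed \emph{Euclidean} distance to a convex plateau $M$. Gradient descent initialized at the origin on any such function moves straight toward $\Pi_M(0)$ and converges to it, i.e., to the \emph{Euclidean minimal-norm point} of $M$. This means the construction can only ever rule out regularizers that \emph{disagree} with the Euclidean norm about which plateau point is preferable. You flag the problematic case $r(\w)=\|\w\|$ yourself, but none of your proposed fixes resolve it, and it is not a corner case: whenever $r(\w_a)<r(\w_b)$ forces $\|\w_a\|<\|\w_b\|$ (as with $r(\w)=\|\w\|$ or any $r$ increasing along rays), \emph{every} convex plateau $M$ containing $\w_a$ has $\|\Pi_M(0)\|\le\|\w_a\|<\|\w_b\|-\delta$, so $\Pi_M(0)\notin B_\delta(\w_b)$ regardless of whether $M$ is a half-space, a slab, or $\conv(\{\w_a\}\cup B_\delta(\w_b))$. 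Rescaling $\w_b\mapsto\lambda\w_b$ does not help either, since for such $r$ the rescaled point just flips the sign of the gap. In short, the isotropic distance-to-plateau template structurally cannot steer GD to the worse-regularized endpoint when the regularizer agrees with the Euclidean norm.

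The paper's proof avoids this by never relying on the isotropic plateau. Its key engine is \cref{lem:noneuclid}, which uses the \emph{skewed} quadratic form $\Sigma=\left(\begin{smallmatrix}1&1/2\\1/2&1\end{smallmatrix}\right)$ in $F_{\theta_1,\theta_2}(\w)=\min_{\v\in A}\tfrac12(\w-\v)^\top\Sigma(\w-\v)$; the anisotropy makes GD's average converge to the \emph{far} endpoint $(\theta_1,\theta_2)$ of the plateau rather than the Euclidean-nearest point $(0,\theta_2)$. This is precisely what lets the argument handle the case where $r$ prefers the low-norm solution (Case 2 of the paper's proof). A second missing ingredient is \cref{cl:fichs1}: \cref{lem:noneuclid} requires $\theta_1\le 0.025\,\theta_2$, i.e., the two witness points must be nearly collinear with the origin and very close to each other. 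Your proposal picks arbitrary $\w_a,\w_b$ from non-constancy and then tries to fit a plateau around them; the paper instead massages the non-constancy witnesses into a specific near-orthogonal configuration $\w_2=\w_1+\delta\w_1^\perp$ with $|\delta|$ small, which then rotates into the exact $(\theta_1,\theta_2)$ template. Without an analogue of this step, you have no control over the geometry of the witnesses and the anisotropic construction cannot be applied.
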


The $\Omega_r(\cdot)$ notation hides constant that may depend on the regularizer $r$. The dependence on the regularizer is expected here, as we would need a very strong level of accuracy if we want to rule out a nearly-constant regularizer, for example.

\subsection{Distribution-Dependent Implicit Regularization}

Having ruled out a class of implicit regularizers in the distribution-independent model, we next move on to discuss the possibility of distribution dependent regularizers.

\begin{theorem}\label{thm:sgdr} 
For every $T\ge 1$, a constant $C>2$ and dimension $d> T/10$:  there exists a distribution $D$ over $1$-Lipschitz convex functions over $\cW=\{\w: \|\w\|\le 1\}\subseteq \real^d$, such that if we run SGD with learning rate $1/T^2<\eta \le C/\sqrt{T}$ over a sample set of size $T$, then for any $1$-Lipschitz, $\lambda$-strongly convex regularizer $r$, with probability $0.1$ over the sample, SGD outputs $\w_{S}$ for which there is $\w^\star\in \cW$, such that 
\begin{align*}
    F_{S}(\w^\star) &\leq F_{S}(\w_S),~
    \quad
    \text{and}\quad
    r(\w^*) \leq r(\w_S) - 10^{-2}\frac{\lambda T\eta^2}{C}~.
\end{align*}
\end{theorem}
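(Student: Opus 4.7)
The plan is to construct a distribution $D$ of high-dimensional linear losses so that SGD's averaged iterate $\w_S$ carries a sample-ordering-induced fluctuation orthogonal to the empirical gradient of magnitude $\Theta(\eta\sqrt T)$, while the minimizer on the empirical level-set of any fixed $\lambda$-strongly convex $r$ is determined by $r$ and the empirical gradient direction alone. Consequently $\w_S$ systematically fails to land at the $r$-minimizer of its own $F_S$-level set, and strong convexity converts this distance into the claimed $r$-gap.

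Concretely, I take $d>T/10$, let $z=(j,b)$ be uniform over $[d]\times\{\pm 1\}$, and set $f(\w;z)=-b\w_j$; this family is linear and $1$-Lipschitz. Unrolling SGD gives the closed form $\w_S=\tfrac{\eta}{T}\sum_s(T-s)\,b_s e_{j_s}$, to be compared with $\bar v=\tfrac{1}{T}\sum_s b_s e_{j_s}$ defining $F_S(\w)=-\langle\bar v,\w\rangle$. Averaging over the random ordering of the sample and using that $\|\bar v\|^2\approx 1/T$ in dimension $d>T/10$, direct second-moment computations give $\EE\langle\bar v,\w_S\rangle\approx\eta/2$ and $\EE\|\w_S\|^2\approx \eta^2T/3$, so after decomposing $\w_S=\w_S^{\parallel}+\w_S^{\perp}$ along and perpendicular to $\bar v$, the fluctuation has $\EE\|\w_S^{\perp}\|^2=\Theta(\eta^2T)$.

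Given any such $r$ with global minimizer $\w_0$, let $\w^\star=\argmin_{\w\in K}r(\w)$ where $K=\{\w\in\cW:F_S(\w)\le F_S(\w_S)\}$. By KKT for strongly convex $r$ on the halfspace-cap $K$, $\w^\star$ is either $\w_0$ (when $\w_0\in K$) or lies on $\partial K$ with $\nabla r(\w^\star)\propto\bar v$; either way the projection $(\w^\star)^{\perp}$ of $\w^\star$ onto $\bar v^{\perp}$ depends only on $r$ and the direction of $\bar v$, not on the ordering of the sample given $\bar v$. Strong convexity yields $r(\w_S)-r(\w^\star)\ge \tfrac{\lambda}{2}\|\w_S-\w^\star\|^2\ge \tfrac{\lambda}{2}\|\w_S^{\perp}-(\w^\star)^{\perp}\|^2$. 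Since $\w_S^{\perp}$ is mean-zero conditional on $\bar v$ with second moment $\Theta(\eta^2T)$ while $(\w^\star)^{\perp}$ is fixed given $\bar v$, a Paley--Zygmund-style lower bound then gives $\|\w_S^{\perp}-(\w^\star)^{\perp}\|^2\ge c\,\eta^2T/C$ with probability at least $1/10$, yielding the desired gap $10^{-2}\lambda T\eta^2/C$.

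The main obstacle will be handling SGD's projection step when $C>2$, since the unprojected $\w_S$ has $\|\w_S\|=\Theta(\eta\sqrt T)=O(C)$ and may exit the unit ball, invalidating the clean closed form. One must verify that projection acts essentially radially (along $\bar v$) and hence preserves the perpendicular fluctuation $\w_S^{\perp}$, or alternatively adjust the construction (e.g.\ by a mild truncation of the loss outside $\cW$) so that iterates remain in $\cW$ throughout while the orthogonal variance is preserved. A secondary technical point is that the unconstrained halfspace-minimizer of $r$ may leave $\cW$; intersecting with $\cW$ and re-doing the gap estimate costs only constants.
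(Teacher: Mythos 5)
Your proposal takes a genuinely different route from the paper. The paper's proof builds the distribution from \emph{truncated} piecewise-linear losses $f(\w;z)=\max\{0,-\v_z^\top\w+c\|\v_z\|^2\}$ placed on random $2$-coordinate blocks, and then uses an explicit \emph{sample-reflection}: it constructs $S'$, identically distributed with $S$, such that $F_S(\w_{S'})=F_S(\w_S)$ holds exactly (the flat region of the max makes the flipped ``good'' coordinates invisible to the empirical loss), while $\|\w_S-\w_{S'}\|=\Omega(\eta\sqrt T)$ with constant probability; by symmetry $r(\w_{S'})\le r(\w_S)$ with probability $1/2$, and strong convexity converts the distance to the claimed $r$-gap. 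Your plan instead uses raw linear losses, analyzes the ordering-induced orthogonal fluctuation of the averaged iterate, and compares $\w_S$ to the $r$-minimizer on the SGD-induced sublevel set $K$. That is a legitimate strategic alternative (it sidesteps the reflection trick), but you would need to close several gaps.

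First and most seriously, the \textbf{projection step}. With $1$-Lipschitz linear losses and $\eta$ as large as $C/\sqrt T$ with $C>2$, the unprojected iterates reach norm $\approx\eta\sqrt T=C>1$, so SGD projects, and the closed form $\w^{(t)}=\eta\sum_{s<t}b_s e_{j_s}$ fails from the first time the ball is hit. Your suggested fix is not sound: projection onto a Euclidean ball is radial toward the origin, not ``along $\bar v$'', and it does not preserve the perpendicular fluctuation; more importantly, once projection fires the recursion is broken and the downstream moment computations do not apply. The natural fix --- and the one the paper uses, via the parameter $\rho=2/C$ in Corollary~\ref{cor:r2} --- is to scale the loss to be $\Theta(1/C)$-Lipschitz so the iterates stay inside $\cW$ for all $t$. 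This shrinks $\EE\|\w_S^\perp\|^2$ to $\Theta(\eta^2 T/C^2)$ and you would have to recheck that the stated $10^{-2}\lambda T\eta^2/C$ gap is still achievable.

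Second, $\w^\star$ is \emph{not} fixed given $\bar v$ alone. The set $K$ depends on the threshold $F_S(\w_S)=-\langle\bar v,\w_S\rangle$, which itself varies with the ordering. So $(\w^\star)^\perp$ is determined by $\bar v$ \emph{and} by the parallel component of $\w_S$, and the anti-concentration argument has to condition on both. You would need to show that after conditioning on this single scalar, $\w_S^\perp$ still has $\Omega(\eta^2T)$ variance --- plausible since it is one linear constraint in a high-dimensional space, but it does not follow from the second-moment computations you give. (Relatedly, your KKT claim about $\w^\star$ ignores the spherical part of $\partial K$ and the edge where the two boundaries meet; this is minor but should be handled.)

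Third, the \textbf{anti-concentration step is only sketched}. $\w_S^\perp$ is a function of a random permutation, not a sum of independent terms, so Paley--Zygmund needs fourth moments over permutations, and you need them for $\w_S^\perp-(\w^\star)^\perp$ with the variable target of the previous paragraph. None of this is present. By contrast, the paper's reflection construction produces an explicit feasible competitor $\w_{S'}$ that \emph{exactly} ties the empirical loss, so it only needs Markov's inequality on the count of ``good'' samples and a probability-$1/2$ symmetry for $r$, with no anti-concentration and no reasoning about the geometry of $r$'s level sets. That is what the extra structure in the truncated losses buys.
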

Utilizing a construction of a statistically complex set due to \citet{feldman2016generalization}, we can also obtain the following result:
 
\begin{theorem}\label{thm:nouc} For every $T\ge 1$, a constant $C>2$ and dimension $d\ge T/10^5$: there exists a distribution $D$ over convex $1$-Lipschitz functions over $\cW=\{\w:\|\w\|\le 1\}\subseteq \real^d$, such that if we run SGD with stepsize $1/T^2 <\eta \le  C/\sqrt{T}$ over a sample set of size $T$, then for any regularizer $r$ we have that with probability at least $1/10$ over the sample, the set $K_{S,r}(\w_{S})$ is $\left(2T,10^{-5}\frac{T\eta^2}{C}\right)$-statistically complex.
\end{theorem}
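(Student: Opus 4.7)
The plan is to combine the construction underlying Theorem~\ref{thm:sgdr} with the statistically complex set of \citet{feldman2016generalization}. The idea is to show that, whatever the regularizer $r$, the competitive set $K_{S,r}(\w_S)$ is forced to contain a planted set $W_F$ whose statistical complexity Feldman already bounds. Once this containment is established, the definition of $(2T,\epsilon_0)$-statistical complexity immediately yields the conclusion, with Feldman's hard distribution playing the role of the certifying $D$ in \cref{def:scomplexity}.

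Concretely, I would take a variant of the hard instance from Theorem~\ref{thm:sgdr} whose key structural feature is that the empirical loss $F_S$ vanishes on a high-dimensional affine slab $P_S \subseteq \cW$ while SGD's average iterate $\w_S$ lands inside $P_S$, retaining the quantitative bias guaranteed in Theorem~\ref{thm:sgdr}. Into the ``silent'' directions of this slab I would embed Feldman's construction: a distribution $D'$ over $1$-Lipschitz convex functions and a large candidate set $W_F \subseteq P_S$ such that, with probability at least $1/10$ over $2T$ samples from $D'$, some $\w \in W_F$ has zero empirical loss on those samples yet expected loss $\Omega(T\eta^2/C)$. The embedding should be arranged so that Feldman's functions contribute zero gradient along the SGD trajectory (for instance by placing $W_F$ in a subspace orthogonal to the SGD-active directions used in Theorem~\ref{thm:sgdr}), so that both SGD's trajectory and the plateau $P_S$ are preserved.

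With the construction in hand, the two conditions defining $K_{S,r}(\w_S)$ must be verified for elements of $W_F$. The empirical condition $F_S(\w) \le F_S(\w_S)$ holds automatically throughout $P_S$, and hence throughout $W_F$. The regularizer condition $r(\w) \le r(\w_S)$ is the main obstacle, since $r$ is arbitrary and could in principle be minimized on $P_S$ exactly at $\w_S$. I would break this by exploiting sample randomness: the location of $\w_S$ within $P_S$ depends on $S$, so for any fixed admissible $r$ the event that $\w_S$ is not near-minimal for $r$ on $P_S$ has probability at least $1/10$, and on that event a constant fraction of $W_F$ lies in $\{\w : r(\w) \le r(\w_S)\}$. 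The admissibility hypotheses, in particular non-constancy and the semi-continuity condition, are used precisely here to rule out pathological $r$ that could concentrate their minimum at the random point $\w_S$, while the assumption $d \ge T/10^5$ ensures that $W_F$ is large enough that the surviving intersection inherits Feldman's statistical complexity with the claimed loss parameter $10^{-5}\, T\eta^2/C$.
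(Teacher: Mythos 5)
You have the right destination in mind (force $K_{S,r}(\w_S)$ to contain a large Feldman-style set and then invoke Feldman's hardness), but the mechanism you propose for the regularizer condition has a genuine gap, and it is precisely the gap that the paper's exchangeability argument is designed to close.

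If $W_F$ is planted in a ``silent'' subspace orthogonal to the directions in which SGD moves, then the SGD output $\w_S$ stays at the origin in that subspace on every sample. Now take, for instance, $r(\w)=\|\Pi\w\|^2$ where $\Pi$ is the projection onto the $W_F$-subspace. This $r$ is continuous, non-constant, and has minimum $0$, so it passes every admissibility test; yet $r(\w_S)=0$ deterministically, while $r(\w)>0$ for every nonzero $\w$ in the planted subspace. Hence $K_{S,r}(\w_S)\cap W_F=\{\w_S\}$ with probability $1$, and your claim that ``the event that $\w_S$ is not near-minimal for $r$ on $P_S$ has probability at least $1/10$'' simply fails. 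Neither non-constancy nor the semi-continuity condition rules this out; indeed the theorem is stated for \emph{any} regularizer, not only admissible ones, and the paper's proof never appeals to admissibility here.

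What is actually needed, and what the paper does, is to make the Feldman-style set out of \emph{alternative SGD outputs} rather than out of a fixed planted cube. Concretely, for each sample $S$ the paper defines a family $\cS(S)$ of samples obtained by flipping signs on ``good'' coordinates, and sets $\cW_S=\{\w_{S'}: S'\in\cS(S),\ r(\w_{S'})\le r(\w_S)\}$. Because $(S,S')$ is an exchangeable pair, for any $r$ whatsoever one gets, with probability $1/2$ over $S$, that at least half of the points $\w_{S'}$ satisfy $r(\w_{S'})\le r(\w_S)$ --- no assumption on $r$ is used. The first condition ($F_S(\w_{S'})\le F_S(\w_S)$) holds by the flatness of the loss on those coordinates, exactly as you anticipate. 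The remaining step is that any subset of size $\ge 2^{d-1}$ of the cube $\{\pm\eta\}^d$ is already $(d/6,1/4)$-statistically complex (the paper's \cref{cor:feldman}), so the surviving half of $\{\w_{S'}\}$ inherits Feldman's complexity. In short: you must tie $W_F$ to the sampling randomness so that $\w_S$ is itself a uniformly random element of the candidate cube, rather than a fixed spectator outside it; only then is the regularizer condition unavoidable for a constant fraction of the cube.
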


In words, \cref{thm:nouc} asserts that for a certain given distribution $D$ the output of SGD cannot be interpreted as coming from a ``small'' structured family of solutions that would generalize regardless of other specialized properties of the particular learning problem.

The requirement that $T\le O(d)$ is tight. Note that for a sample $S$ of order $T=O(d/\epsilon^2)$, by a standard covering argument, we can show that the set $K_{S,r}(\cW)$ is not $(2T,\epsilon)$ statistically complex (see, for example, Theorem 5 of \cite{shalev2009stochastic}). In particular, since $K_{S,r}(\w_S)\subseteq \K_{S,r}(\cW)$ we obtain an upper bound of the statistical complexity of the given set.

\subsection{Implicit Bias in Constant Dimension}

In the results above we provided constructions in spaces with more parameters than samples. We next discuss the case $d \ll T$, which is interesting for certain contexts.

Regarding \cref{thm:nouc}, we again point out that such a result cannot hold in the aforementioned regime. Indeed, in this case uniform convergence over the unit-ball applies. In that sense, restricting an algorithm to choose a solution in the unit ball provides an inductive bias that provides generalization guarantees. But what about \cref{thm:sgdr}? It is interesting to know if one can rule out regularizers that are not benign like the unit ball.\footnote{We treat a set $K$ as a regularizer by identifying $K$ with a regularizer $r$ such that $r(\w)=1$ if $\w\notin K$ and $0$ otherwise.}
We do not know the answer to this question and we leave it as an open problem. Nevertheless, we can provide the following intermediate result in a slightly more relaxed setting, where the instances may be non-convex, (and in fact non-Lipschitzian) but the expected loss function is indeed convex, and at each iteration the learner observes a bounded gradient $\|\nabla f(\w,z)\|\le 1$ Thus, SGD's learning guarantee still apply.

We will state the next result for a slightly larger class of regularizers than merely convex regularizers. Recall that a function $f$ is called \emph{quasi-convex} if $f(\lambda x+ (1-\lambda)y)\le \max \{f(x),f(y)\}$ for every $0\le \lambda\le 1$ and $x,y\in \cK$, and \emph{strictly} quasi-convex if $f(\lambda x+ (1-\lambda)y)< \max \{f(x),f(y)\}$.

\begin{theorem}\label{thm:nonconvex}
Let $\cW=\real^2$.
There exists a distribution $D$ over, not necessarily convex, functions in $\real^2$ such that $\EE_{z}[f(\w;z)]=0$ for every $\w\in \cW$,
and for every strictly quasi-convex regularizer $r$, and for large enough $T$, if $\eta=\Theta(1/\sqrt{T})$ then with some positive probability, $\Theta(1)$, there exists $\w^\star$ such that:
\begin{align*}
    \ERM(\w^\star) \leq \ERM(\ws);
    \qquad
    r(\w^\star) < r(\ws);
    \qquad
    \|\ws-\w^\star\| =\Theta(1).
\end{align*}
\end{theorem}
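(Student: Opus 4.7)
The plan is to construct a simple two-dimensional distribution, show that SGD's averaged iterate $\ws$ is approximately an isotropic Gaussian random vector in $\real^2$, and then use the randomness of the SGD output together with strict quasi-convexity of $r$ to produce a dominating $\w^\star$ at Euclidean distance $\Theta(1)$ from $\ws$.

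For the construction, I would take $z$ uniformly distributed on the unit circle in $\real^2$ and set $f(\w;z)=\langle\w,z\rangle$. Then $f$ is $1$-Lipschitz (the observed gradient $z$ has unit norm) and $\EE_z[f(\w;z)]=0$ for every $\w$. Although $f$ is linear (hence convex), the theorem's "not necessarily convex" clause only says non-convex $f$ is allowed, so this construction is permissible. The SGD iteration reduces to the random walk $\w^{(t+1)}=-\eta\sum_{s\le t}z_s$, giving $\ws=-\tfrac{\eta}{T}\sum_{s=1}^{T-1}(T-s)\,z_s$. A direct variance computation with $\eta=\Theta(1/\sqrt T)$ yields $\operatorname{Cov}(\ws)=\Theta(1)\cdot I$, and by the CLT $\ws$ is approximately $N(0,\sigma^2 I)$ for a constant $\sigma>0$. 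The empirical risk is $F_S(\w)=\langle\w,\bar z\rangle$ with $\bar z=\tfrac1T\sum_t z_t$ of magnitude $\Theta(1/\sqrt T)$, so the dominated set $H=\{\w:F_S(\w)\le F_S(\ws)\}$ is a half-plane whose boundary line $L$ passes through $\ws$ perpendicular to $\bar z$. A joint covariance calculation shows that the direction of $\bar z$ retains a $\Theta(1)$ random component even conditional on $\ws$, so the orientation of $L$ relative to $\ws$ carries non-trivial randomness.

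Given a strictly quasi-convex regularizer $r$, I use that on $\real^2$ strict quasi-convexity forces strictly convex sublevel sets and, outside a few degenerate unbounded cases that I would handle separately, a unique global minimizer $\w_r$. Along the segment $[\ws,\w_r]$, strict quasi-convexity guarantees that $r$ is strictly less than $r(\ws)$ at every interior point. My candidate $\w^\star$ lies either on the portion of this segment inside $H$ (the case $\w_r\in H$), or along $L$ toward the $r$-minimizer on $L$ (the case $\w_r\notin H$). The probabilistic core is to show that with constant probability both (i) $\|\ws-\w_r\|\ge c$ and (ii) a feasible dominating travel of length $\Theta(1)$ is available inside $H$. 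Claim (i) follows by Gaussian anti-concentration: for any fixed $\w_r\in\real^2$ and $\sigma=\Theta(1)$, $\Pr[\|\ws-\w_r\|\ge 1]=\Theta(1)$. For (ii), the case $\w_r\in H$ is immediate since the whole segment lies in $H$; the case $\w_r\notin H$ requires the displacement from $\ws$ to the $r$-minimizer on $L$ to be $\Theta(1)$, which follows from the random orientation of $L$.

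The main obstacle, I expect, is extending the analysis of the case $\w_r\notin H$ from Euclidean-like $r$, where the $r$-minimizer on $L$ is simply the Euclidean projection of $\w_r$, to a \emph{general} strictly quasi-convex $r$ whose sublevel sets need not be round. For such $r$ one must lower bound, uniformly in $r$, the distance along $L$ from $\ws$ to the $r$-minimizer on $L$. I would exploit strict convexity of the sublevel sets together with the random orientation of $L$ relative to the local asymmetries of $r$ at $\ws$, to argue that the set $\{r<r(\ws)\}$ intersects $H$ at distance $\Theta(1)$ from $\ws$ with constant probability. This geometric step, rather than the Gaussian anti-concentration on $\ws$, is the delicate technical core of the proof.
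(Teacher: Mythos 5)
The paper's construction is very different from yours. The paper's distribution $D$ is supported on four non-convex (in fact discontinuous, non-Lipschitz) instance functions that make SGD perform a two-dimensional random walk inside the box $A=\{|w_1|,|w_2|\le 1/4\}$, while freezing the $w_1$-coordinate as soon as $|w_2|>1/4$. The whole point of the non-convexity is that once SGD escapes the box, $F_S$ becomes \emph{exactly independent of $w_1$}, so the horizontal line through $\ws$ is a \emph{fixed, data-independent} free direction. The $w_1\to-w_1$ symmetry of $D$ then makes $(\ws,\w^\star_{-1})=((w_1^S,w_2^S),(-w_1^S,w_2^S))$ an exchangeable pair, and the conclusion follows either from this symmetry (if $r(\w^\star_{-1})\neq r(\ws)$) or from strict quasi-convexity applied to the midpoint $\w^\star_0=(0,w_2^S)$. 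None of this depends on the local shape of $r$ near $\ws$.

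Your route is linear $f(\w;z)=\langle\w,z\rangle$, so $F_S=\langle\cdot,\bar z\rangle$ is linear and the comparable set $H$ is a half-plane whose boundary line $L$ is perpendicular to $\bar z$ — a \emph{data-dependent} direction that is, as you compute, strongly anticorrelated with $\ws$ (the per-coordinate correlation is $-\sqrt{3}/2$). This is precisely what makes your Case $\w_r\notin H$ genuinely hard: the boundary $L$ of $H$ tends to be roughly tangent, at $\ws$, to the sublevel set $\{r\le r(\ws)\}$ for natural regularizers such as $r=\|\cdot\|^2$, so $\{r<r(\ws)\}\cap H$ is typically a thin lens, and extracting a $\Theta(1)$-far point inside it requires a non-degenerate angle between $\bar z$ and the outward normal of $\{r\le r(\ws)\}$ at $\ws$ — uniformly over all strictly quasi-convex $r$. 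You flag this geometric step as ``the delicate technical core'' but do not supply the argument. That is the gap.

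More importantly, notice that your instance functions are \emph{convex}. The paper states explicitly that whether Theorem~\ref{thm:sgdr}'s conclusion can be obtained in constant dimension with convex instances is an \emph{open problem}, and that Theorem~\ref{thm:nonconvex} is proved in a ``slightly more relaxed setting, where the instances are non-convex'' precisely because the authors could not do the convex case. If your linear construction and segment-to-$\w_r$ argument went through, it would resolve that open problem and give a strictly stronger result than the paper claims. You do not address this, and it is a strong signal that the unresolved geometric step in Case~B is not a technicality but the actual obstruction that the paper's non-convex, fixed-free-direction design is engineered to avoid.
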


\section{Constructions}

\fullversion{Here we give a high level description of our first construction which rules out any norm-based distribution-independent regularizers and any strongly convex distribution-independent regularizer (\cref{thm:gdwarmup}). This construction is also the basis of the rest of the results (\cref{thm:gdr,thm:sgdr,thm:nouc,thm:nonconvex}).
The other constructions, as well as the full proofs, are provided in the full-version \cite{fullversion}.}{Here we give a high level description of the constructions as well as the proofs of the main results.}
We note that for simplicity of exposition, the following description refers to the last iterate, but our full proofs  refers to \cref{SGD_alg} (i.e., the algorithm that outputs $\w_S = \smash{\frac{1}{T}\sum_{t=1}^T \wt}$) .

\subsection{Distribution Independent Regularization}

Our constructions build upon the following class of functions in $\real^2$. Let $A$ be a set of the form $\{(\alpha,\theta): 0 \le \alpha \le b\}$, where $\theta, b$ are parameters of the set and $\Sigma$ is a PSD matrix. We then consider the function $f_{A,\Sigma}$ defined as follows: 
\begin{align}\label{eq:f_overview} 
    f_{A,\Sigma}(\w)
    =
    \tfrac12 \min_{\v\in A} \big\{ (\w-\v)^\top\Sigma (\w-\v) \big\}
    .
\end{align}
One can observe that these functions are convex, and further the gradient of $f_{A,\Sigma}$ at point $\w$ will equal
\begin{align}\label{eq:grad} 
    \nabla f_{A,\Sigma}(\w) 
    = 
    \Sigma(\w-\v(\w))
    ,
    \qquad\text{where}\qquad
    \v(\w)
    = 
    \argmin_{\v\in A} \{ (\w-\v)^\top \Sigma (\w-\v) \}
    .
\end{align} 

\paragraph{Warm-up: GD need not converge to a minimal-norm solution.}
We start by showing how we can construct a function (of the type in \cref{eq:f_overview}) that does not converge to minimal norm solution. Let us take a concrete case where
\begin{align*}
    A
    =
    \{(\alpha,1): 0\le \alpha \le \infty\}
    ,
    \qquad
    \Sigma = \begin{pmatrix}1 & \tfrac12 \\ \tfrac12 & 1\end{pmatrix}
    .
\end{align*}

We will suppress dependence on $A$ and $\Sigma$, and simply write~$f$.
The main observation is that the trajectory of $f$ is characterized by two phases.

At the first phase the closest point to $\smash{\wt}$ (with respect to the $\Sigma$-norm) is at the boundary of $A$ (i.e $\alpha=0$). At this phase, $\smash{\wt}$ can be seen to move ``towards'' the center of the interval, namely $\smash{w^{(t)}_1}$ is increasing (see \cref{eq:grad}). At the end of this phase, $\smash{w^{(t)}_1}$, is sufficiently large irrespective of the step size $\eta>0$.
The second phase, starts when $\etwo \equiv (\begin{smallmatrix} 0 \\1 \end{smallmatrix})$ stops being the closest point, and the closest point to $\smash{\wt}$ is at the interior of the interval. One can show that at this phase, the gradient moves upward hence $\smash{w^{(t)}_1}$ does not decrease and overall the trajectory will converge to a point away from $\etwo$: the Euclidean closest minimizer to~$0$. 

To see that when $\v(\w)$ is at the interior of $A$ then $\nabla f(\w) \propto \eone$, consider the following scalar function $g(a) = (\w- (a,1))^\top \Sigma (\w-(a,1)).$
Our assumption is that $g$ attains its minimum at $0<v_1$. Taking the derivative at $v_1$ and equating to $0$ (because the minimum is attained at the interior), we can see that $g'(v_1)= (\w-(v_1,1))^\top\Sigma \eone =0.$ Hence, $\nabla f(\w)= (\w-v(\w))\Sigma \perp \eone$. We depicted here the trajectory of GD without the projection step, however one can observe that throughout, the algorithm never escapes the $2$-ball, hence projections are indeed never implemented.
The trajectory of $\smash{\wt}$ is illustrated in \cref{fig:trajectory1} (green line). 

\paragraph{No strongly-convex implicit bias (\cref{thm:gdwarmup}).}

\begin{figure}[t]
\centering\small
    \begin{subfigure}{0.24\textwidth}
        \centering
        \includegraphics[width=0.95\linewidth]{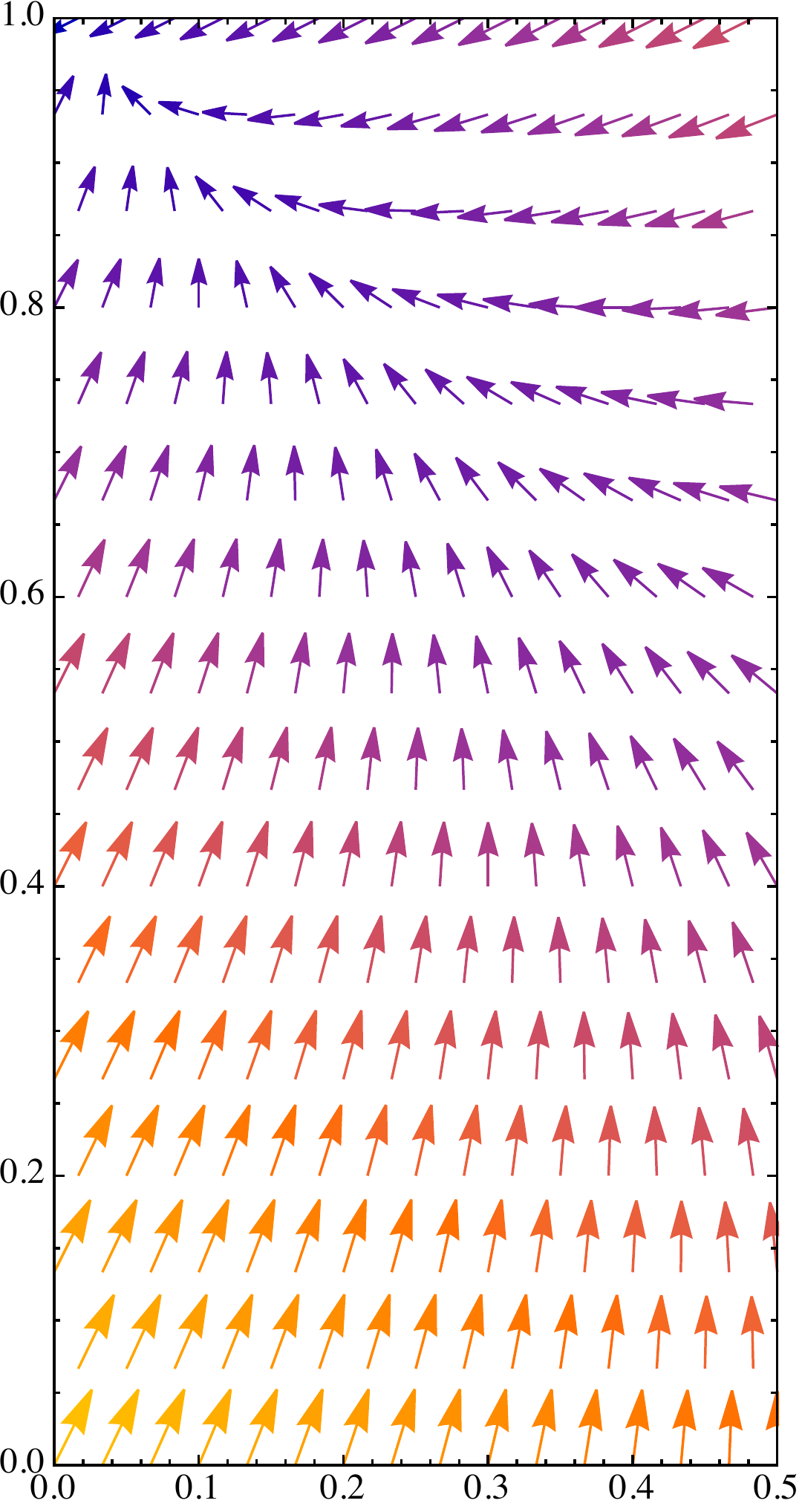}
        \caption{$b=0$;}
        \label{fig:field-a}
    \end{subfigure}
    \begin{subfigure}{0.24\textwidth}
        \centering
        \includegraphics[width=0.95\linewidth]{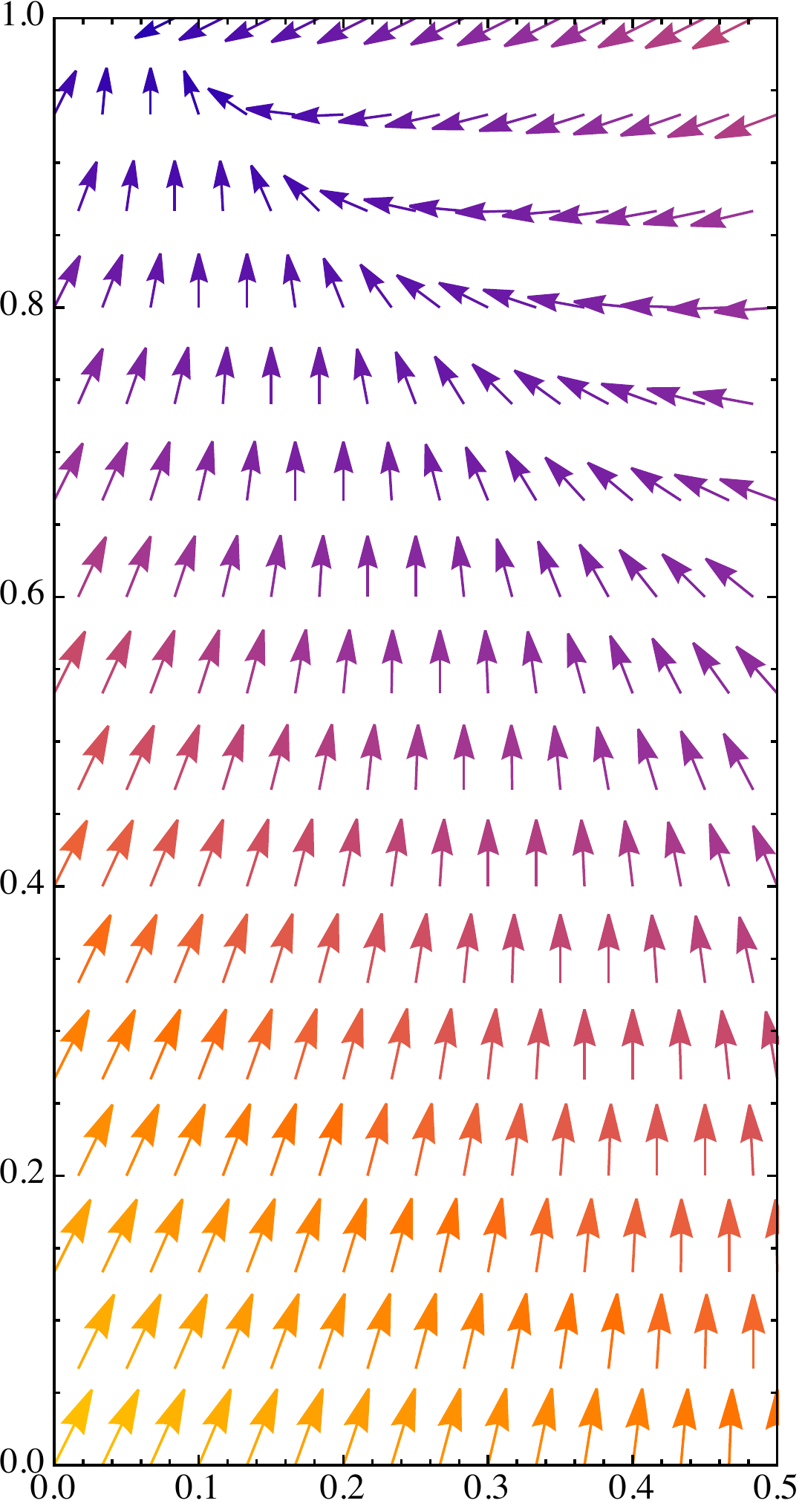}
        \caption{$b=0.05$;}
        \label{fig:field-b}
    \end{subfigure}
    \begin{subfigure}{0.24\textwidth}
        \centering
        \includegraphics[width=0.95\linewidth]{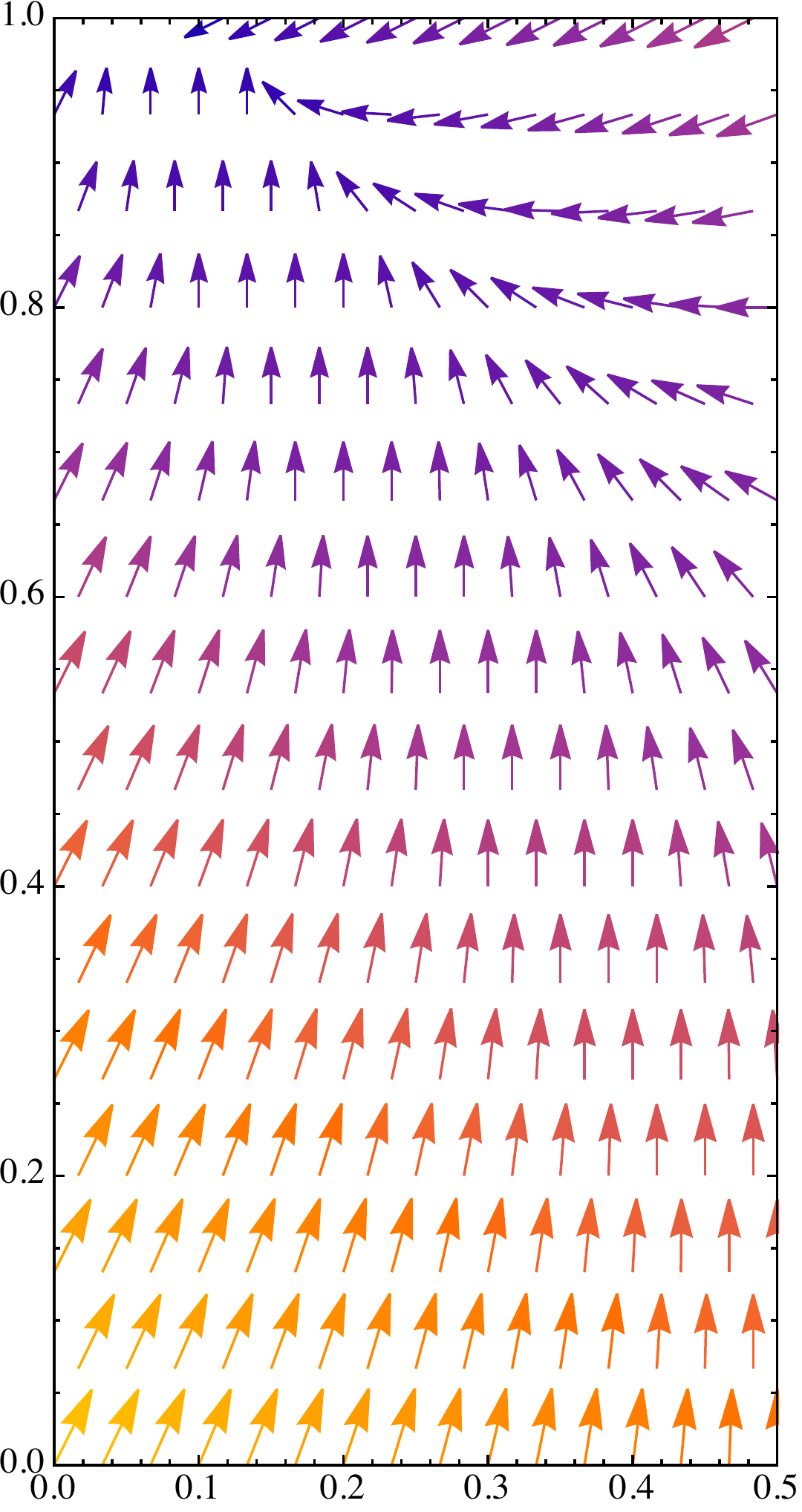}
        \caption{$b=0.1$;}
        \label{fig:field-c}
    \end{subfigure}
    \begin{subfigure}{0.24\textwidth}
        \centering
        \includegraphics[width=0.95\linewidth]{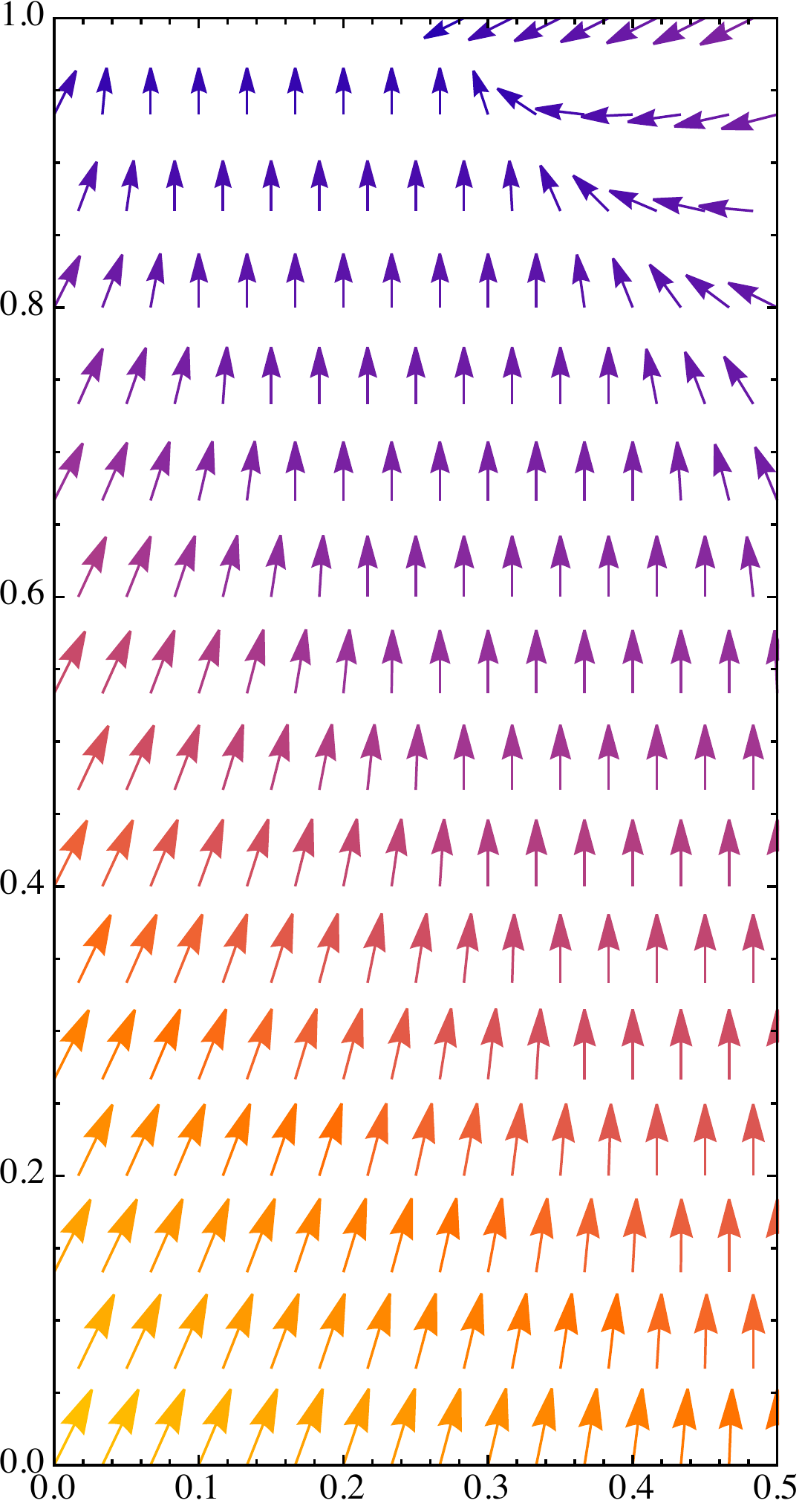}
        \caption{$b=0.25$.}
        \label{fig:field-d}
    \end{subfigure}
    \caption{\small%
    The gradient field of $f_{A,\Sigma}$ (see \cref{eq:f_overview}) for $\theta=1$ and varying values of $b$;
    near the origin, gradients (see \cref{eq:grad}) are skewed to the right, which causes GD to diverge from the nearest solution $\etwo=(0,1)$.
    }
    \label{fig:trajectory1}
\end{figure}

\begin{figure}[t]
    \centering\small
    \begin{subfigure}{0.24\textwidth}
        \centering
        \includegraphics[width=0.95\linewidth]{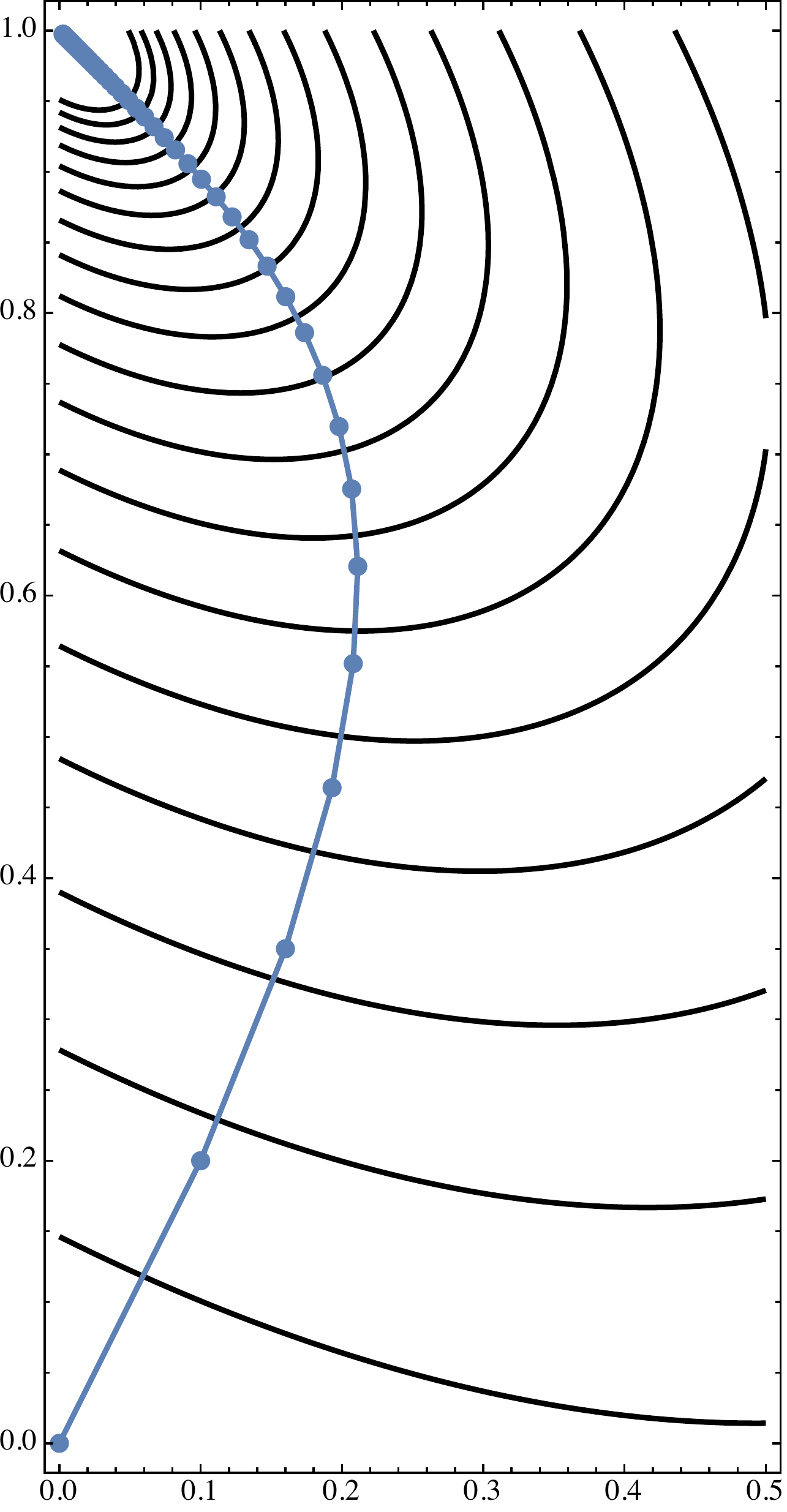}
        \caption{$b=0$;}
        \label{fig:trajectories-a}
    \end{subfigure}
    \begin{subfigure}{0.24\textwidth}
        \centering
        \includegraphics[width=0.95\linewidth]{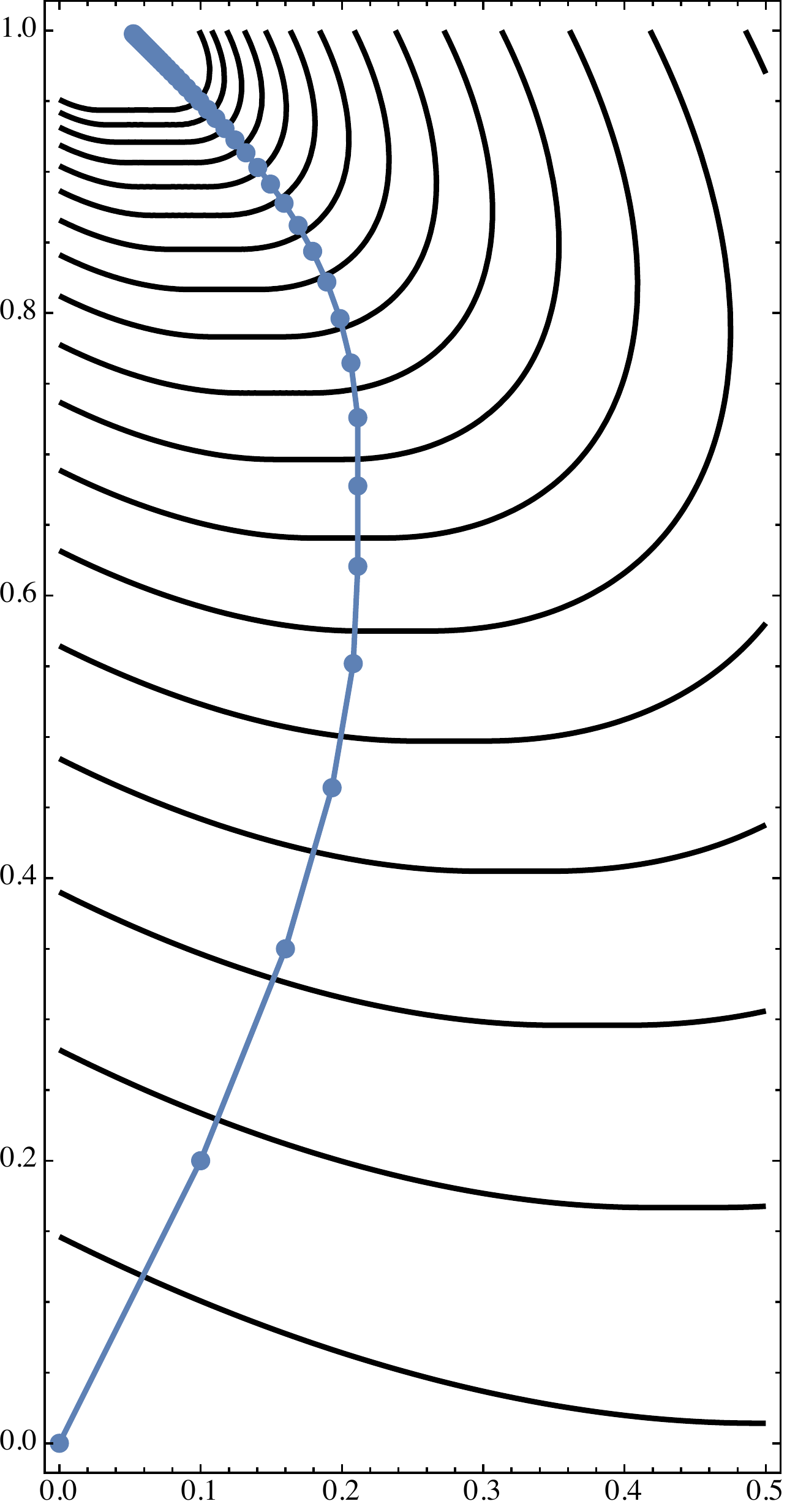}
        \caption{$b=0.05$;}
        \label{fig:trajectories-b}
    \end{subfigure}
    \begin{subfigure}{0.24\textwidth}
        \centering
        \includegraphics[width=0.95\linewidth]{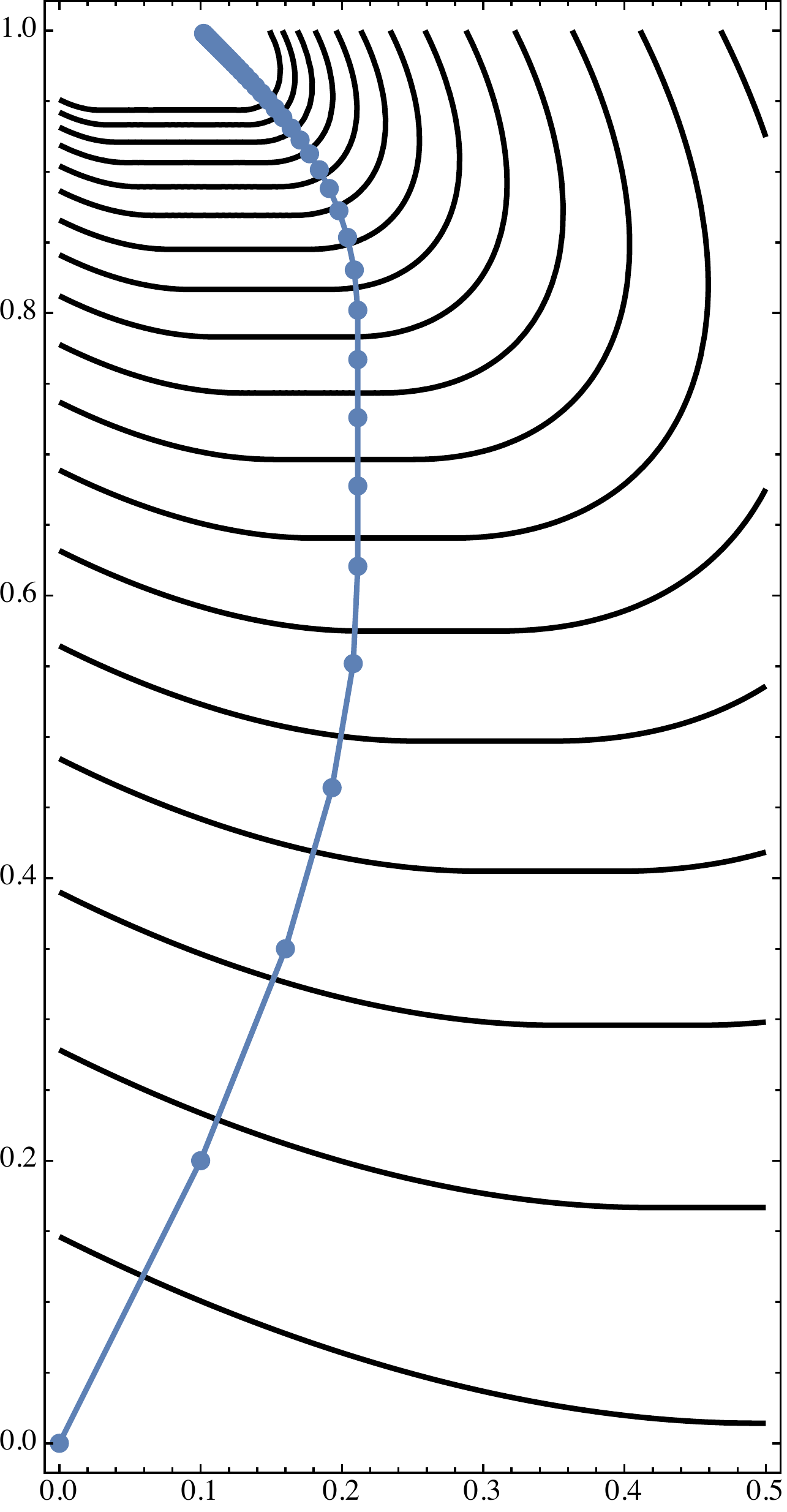}
        \caption{$b=0.1$;}
        \label{fig:trajectories-c}
    \end{subfigure}
    \begin{subfigure}{0.24\textwidth}
        \centering
        \includegraphics[width=0.95\linewidth]{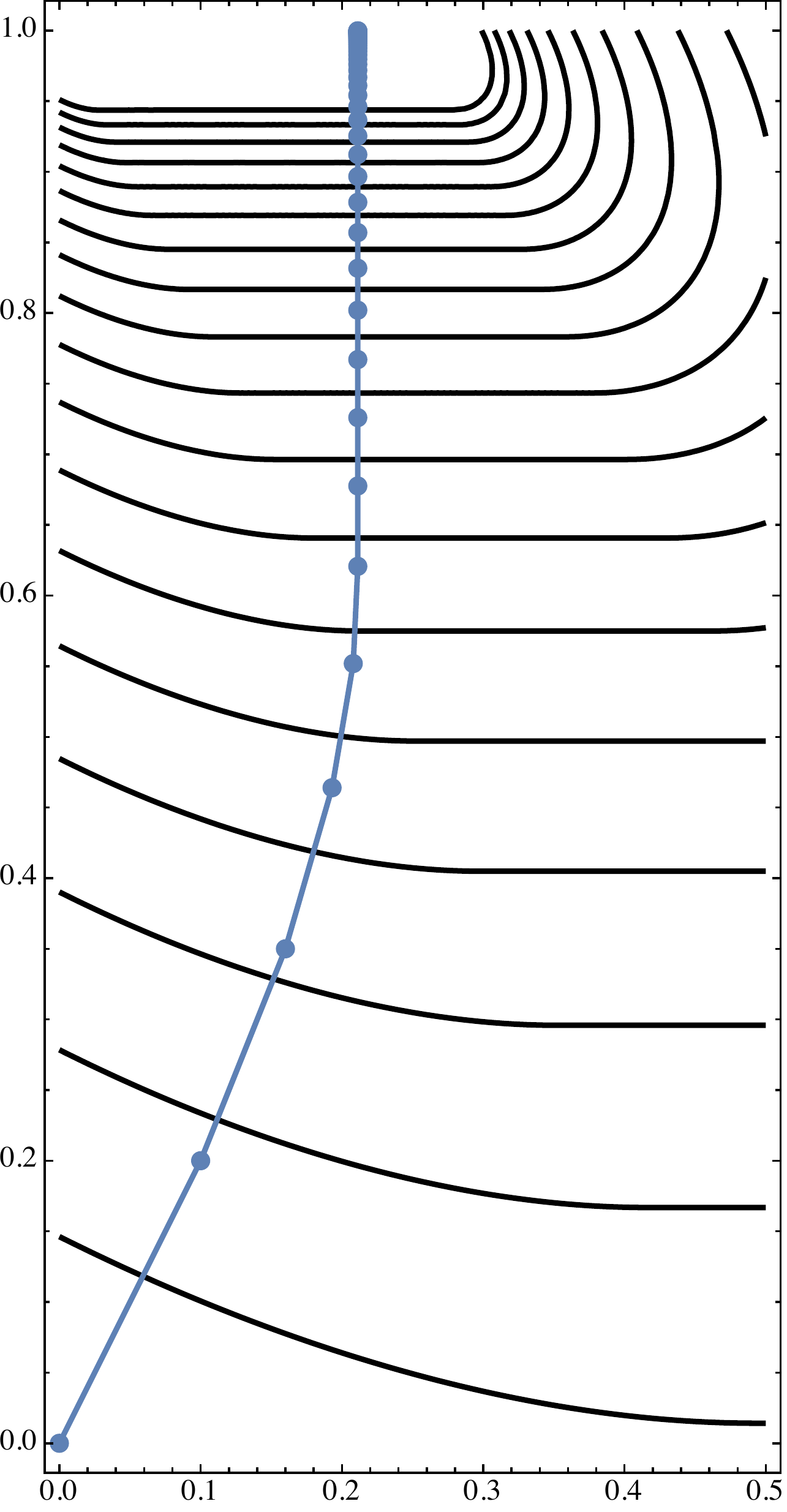}
        \caption{$b=0.25$.}
        \label{fig:trajectories-d}
    \end{subfigure}
    \caption{\small%
    Simulation of GD (with step size $\eta=0.2$) on $f_{A,\Sigma}$ for $\theta=1$ and varying values of $b$. We see that GD does not necessarily converge to the nearest solution, and tuning $b$ changes the point towards which it is biased.}
    \label{fig:trajectories}
\end{figure}

The construction above is the heart of most of our results.
Let us illustrate how it rules out a strongly convex regularizer (in the distribution-independent setting) and attain \cref{thm:gdwarmup}.

The key property of strongly-convex regularizers is that in any convex set they have a unique minimum. Moreover, two far away points cannot simultaneously attain close-to-minimal value. This is in fact the only property we will use. Thus, our result can in fact be extended to any regularizer that is a ``tie-breaker''---namely, it always prefers a single unique solution amongst a class of possible solutions with large diameter.

The construction above will allow us to generate two instances of convex learning problems, where SGD converges to two far away points. The first instance is the standard Euclidean distance. Namely, we take a function $f_{1}$ of the form in \cref{eq:f_overview}, with $\Sigma$ the identity and $A$ with boundaries $(-\infty,\infty)$. In this case, SGD is biased towards the nearest solution $\etwo=(0,1)$. 
The second instance, $f_2$, is the construction above where SGD is biased towards another point on the interval (see  \cref{fig:trajectories-b,fig:trajectories-c,fig:trajectories-d}).

Now both points are global minima, for both $f_1$ and $f_2$, hence if SGD is implicitly biased towards solutions with minimum regularization penalty $r$, we must have that $r(\etwo)= r(\v)$, where $\v$ is the choice of SGD when it observes $f_2$.
However, if $r$ is strongly convex,
because $\|\etwo-\v\|=\Theta(1)$, there has to be a point on the interval between them that attain a strictly lesser regularization penalty, moreover it also attains minimal loss value. This contradicts the existence of such an $r$.

\paragraph{The general case.}

Our second result (\cref{thm:gdr}) rules out the existence of any distribution-independent regularizer. In contrast with the strongly-convex case we can not give uniform bounds that depend on parameters of strong convexity. As such, the rates depend on the regularizer.

But the construction here is similar. We basically start with the assumption that there are two points $\w_1$ and $\w_2$ with different regularization penalty, and we want to construct two functions $f_1,f_2$ that maps $\w_1,\w_2$ to the same empirical loss. 
It might seem that through a simple linear transformation that maps, say, $\w_1$ to $\etwo$ and $\w_2$ to $\v$ we can reduce this case to the case above. However, there is some subtlety since gradient descent is not invariant to linear transformations.%
\footnote{We note though that it can be turned to an affine invariant optimization algorithm~\citep{koren2017affine}.}

Towards this, we extend the construction above by constructing a more general example, where we can tune the point of convergence of SGD to \emph{any} point on the interval between $\v$ and $\etwo$. This allows us to avoid scaling, and use only rotations (which SGD is invariant to) in order to reduce the problem to the former case.
This is done by changing the set $A$ from allowing $0\le \alpha < \infty$ and $\theta=1$, to adding a second boundary condition on the right and also scaling $\theta$. In \cref{fig:trajectories} we illustrate how changing the boundary condition changes the trajectory.

\subsection{Distribution-Dependent Implicit Bias}

We next discuss our second sets of results that argue about \emph{distribution-dependent regularization}. Here we want to study if, for a given distribution, the set of solutions on which SGD converges has some meaningful structure on which we can argue why it generalizes.

Note that so far, our problem instances considered only a single function and the results were applicable to GD also. Here, though, in the distribution dependent setting such an example cannot work. Indeed, given a single function as an instance problem, SGD behaves deterministically and the solution it chooses is a unique solution which trivially generalizes.

\paragraph{No strongly-convex distribution-dependent bias.}

We next discuss our argument that rules out a strongly convex regularizer, even if it may depend on the distribution at hand. We again utilize the property that a strongly convex regularizer obtains approximately minimum solutions only on a small diameter around the unique minimum.

Our strategy is as follows: assume that there are two samples $S_1$ and $S_2$ such that, when SGD observes $S_1$ it converges to $\w_1$ and when it observes $S_2$ it converges to $\w_2$. However, assume also that $\|\w_1-\w_2\|=\Theta(1)$, and that the empirical loss of $\w_1$ and $\w_2$ is comparable, on both samples: namely $F_{S_1}(\w_1)=F_{S_2}(\w_1)$, and similarly with $\w_2.$

In the case above, as we argued in the distribution-independent case, clearly the algorithm failed to choose the minimizer of the regularization penalty, in at least one of the realization $S_1$ or $S_2$. So if $S_1$ and $S_2$ are equally likely, we obtain that with probability half (conditioned on the event that we saw $S_1$ or $S_2$) the algorithm failed to minimize $r$.
Now, if the probability to observe one of such couple of samples $S_1,S_2$ is positive, then we obtain the desired result.

To generate this setting, we rely on the following auxiliary construction in $\real^2$. We construct two functions such that, if SGD observes the first function, at the first iteration, then the gradient points upward and right. But if SGD observes the second function, at the first iteration, then the gradient points upward. This ensures that in each case SGD will move towards a different solution. If the size of the gradient is constant then the gap between the two iterations will be $\Theta(\eta)$.

We will also construct the examples in such a way that both points enter a regime where all points obtain the same empirical loss on both functions. This construction can in fact be done using piece-wise linear functions and it is illustrated in \cref{fig:high_dim}. We also give the formal statement here:
\begin{restatable}{lemma}{rtwo}\label{lem:r2}
For every constant $0<c<1$, there are two $1$-Lipschitz functions $f(\w; \pm1)$ over $\real^2$ such that if $\v_1=-\nabla f(0;1)$ and $\v_{-1}=-\nabla f(0;-1)$ and $ c <\frac{1}{2}\eta<1$ then $\|\v_1-\v_{-1}\|\ge 1/4$ and $f(\eta \v_1;z)=f(\eta \v_{-1};z)$ for any $z\in \{-1,1\}$.
\end{restatable}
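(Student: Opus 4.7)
The plan is to exhibit an explicit piecewise-linear pair $f(\cdot;\pm 1)$ on $\real^2$, each defined as the maximum of two affine pieces: a ``private'' piece that is active near the origin (and determines the negative gradient $\v_{\pm 1}$), and a ``shared'' piece common to both functions that takes over once one has moved a distance of order $\eta$ from the origin. If the shared piece is symmetric between $\v_1$ and $\v_{-1}$, then $\eta\v_1$ and $\eta\v_{-1}$ automatically lie on the same level set of both $f(\cdot;1)$ and $f(\cdot;-1)$, which is exactly what the lemma requires.

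Concretely, I would take $\v_1=(1,0)$ and $\v_{-1}=(0,1)$, so $\|\v_1-\v_{-1}\|=\sqrt{2}\ge 1/4$, and define
\[
f(\w;1)=\max\!\Bigl\{-w_1,\; \tfrac12(w_1+w_2)-c\Bigr\},
\qquad
f(\w;-1)=\max\!\Bigl\{-w_2,\; \tfrac12(w_1+w_2)-c\Bigr\}.
\]
Each affine piece has gradient norm at most $1$, so both functions are $1$-Lipschitz (and convex).

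The verification then proceeds by three short calculations. First, at $\w=0$ the private piece equals $0$ while the shared piece equals $-c<0$, so the private piece is uniquely active and yields $-\nabla f(0;1)=e_1=\v_1$ and $-\nabla f(0;-1)=e_2=\v_{-1}$. Second, at $\w=\eta\v_1=(\eta,0)$ the shared piece evaluates to $\eta/2-c$, while the private pieces evaluate to $-\eta$ (for $z=1$) and $0$ (for $z=-1$); both of these are strictly less than $\eta/2-c$ exactly under the hypothesis $c<\eta/2$, so $f(\eta\v_1;z)=\eta/2-c$ for both $z$. The identical computation at $\w=\eta\v_{-1}=(0,\eta)$, together with the swap-symmetry of the shared piece, gives $f(\eta\v_{-1};z)=\eta/2-c$ as well, which is precisely the desired equality.

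The upper bound $\eta/2<1$ is not used in the algebra above and simply reflects the ambient range of step sizes in later proofs; the construction works for every $\eta>2c$. The only real ``design'' choice is the coefficient $\tfrac12$ in the shared piece, which is chosen so that its gradient has norm below $1$ (Lipschitz condition) while still being large enough to dominate each private piece on the cross-step $\w=\eta\v_{\mp 1}$. Once the right pair of pieces is written down, there is no substantive obstacle: all the required inequalities collapse to $\eta/2>c$, which is exactly the hypothesis of the lemma.
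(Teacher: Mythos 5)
Your construction is correct as a proof of the lemma as stated, and it is in a similar spirit to the paper's (both are piecewise-linear, max-of-affines constructions). But the specific shape of the paper's construction encodes an additional property that the lemma statement omits but that the subsequent \cref{cor:r2} and its downstream uses rely on, and your construction loses it.

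In the paper, $f(\w;z)=\max\{0,-\v_z^\top\w+c\|\v_z\|^2\}$ with $\v_1=(-\tfrac14,-\tfrac34)$, $\v_{-1}=(0,-\tfrac34)$. The essential design point is that the ``flat'' branch is the constant $0$: once the argument exits the region where the affine part is positive, $f$ is identically zero, so at both $\eta\v_1$ and $\eta\v_{-1}$ one has not only $f(\eta\v_1;z)=f(\eta\v_{-1};z)$ (as the lemma asserts) but also $f=0$ and hence $\nabla f(\eta\v_{\pm1};z)=0$ for both $z$. This zero-gradient property is exactly the second bullet in \cref{cor:r2}, and the proof of that corollary invokes \emph{the construction}, not merely the lemma statement. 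It is then used crucially in the SGD trajectory analysis (\cref{eq:wti} and its analogue in \cref{thm:noucquant}) to argue that once a coordinate pair has been touched, SGD never moves it again — the whole amplification argument depends on each touched pair ``freezing'' after one step.

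In your construction, the second branch is the shared affine piece $\tfrac12(w_1+w_2)-c$, which is the \emph{active} piece at $\eta\v_1$ and $\eta\v_{-1}$, so $\nabla f(\eta\v_{\pm1};z)=(\tfrac12,\tfrac12)\ne 0$ there. The function values are equal, so the lemma is literally true, but a distribution built from your pair would not satisfy the second bullet of \cref{cor:r2}, and SGD would continue to drift after the first visit to a coordinate pair, breaking the later proofs. So: correct proof of the stated lemma, genuinely different construction, but not a drop-in replacement — the paper's choice of $\max\{0,\cdot\}$ with the two reached points sitting at the global minimum is doing additional load-bearing work that the lemma's phrasing hides.
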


\begin{figure*}[t]
    \centering
    \includegraphics[width=0.7\textwidth]{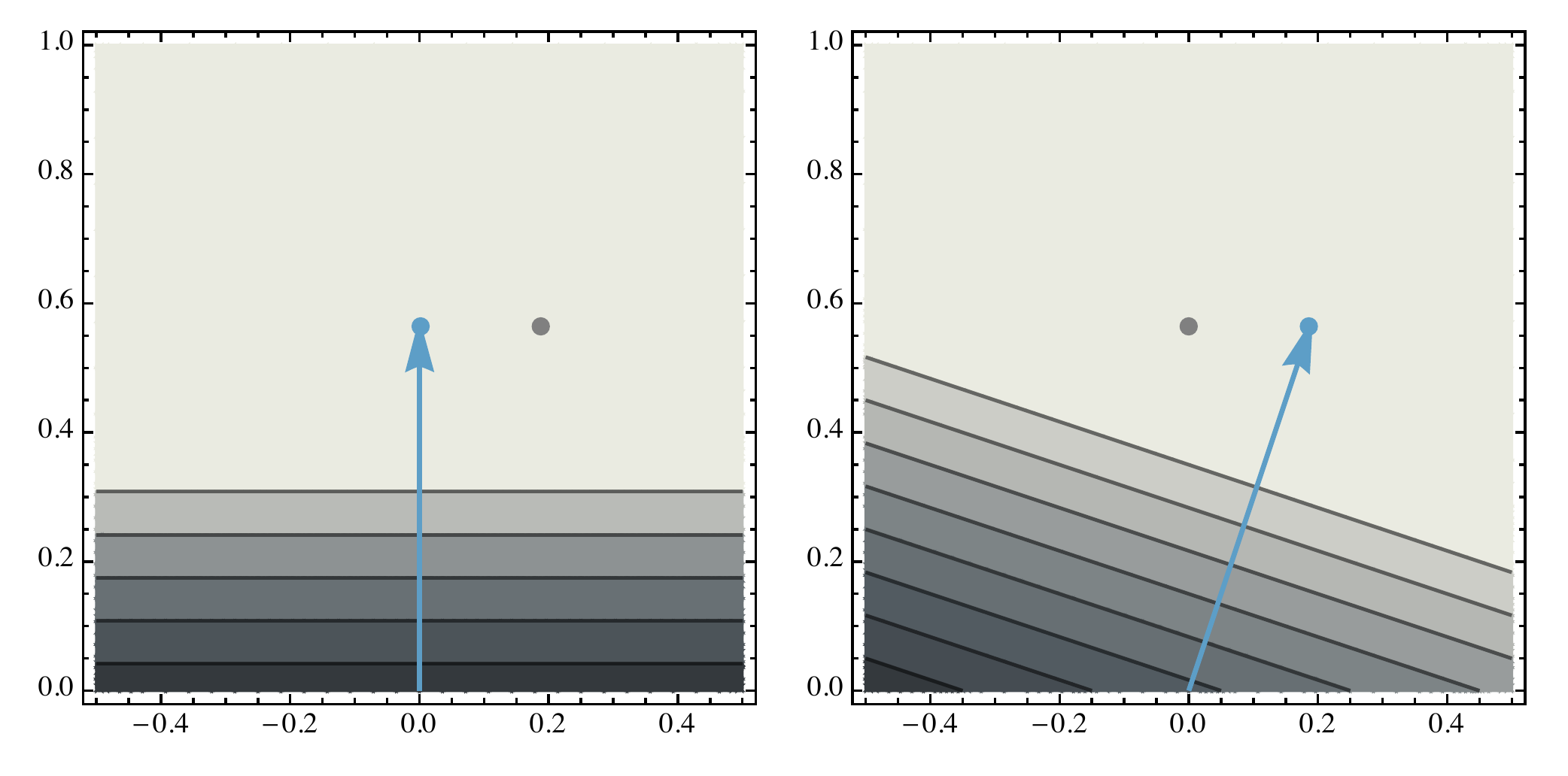}
    \caption{\small%
    Depiction of the auxiliary construction in \cref{lem:r2}. The left sketch illustrate the first function, and the right sketch the second function. Both functions are piece-wise linear of the form $x\to \max\{0,\v\cdot x\}$. The left function, the gradient of the loss points upward at the origin and is flat at a second regime. The right function, the gradient at origin points sideways, and again flat at the second regime.
    }
    \label{fig:high_dim}
\end{figure*}

We next utilize the above construction to generate the problem in $\real^d$. Note that the construction above generates a problem where SGD will converge to two different solutions with distance $\eta$ but same empirical loss (after one step). Indeed, we just need to randomly pick one of these functions. 

We next want to amplify the distance. To do that, we consider $d=\Omega(T)$ Cartesian copies of $\real^2$. Then at each example, we show one of the functions above, at one of the products.
Assuming enough coordinates were seen only once (which is going to happen w.h.p.), the variance on each sub-plane will be $\eta^2$: if we have $\Theta(T)$ such coordinates, the overall variance is going to be $\Theta(T\eta^2)$ which ensures that we will converge to far away solutions on different realizations of the problem, if $\eta=\Theta(1/\sqrt{T})$.

\paragraph{SGD might be biased towards statistically-complex sets.}

Next, we derive \cref{thm:nouc} which addresses implicit regularization in a much broader setting. As discussed, here we cannot rule out the existence of an implicit bias; indeed, some form of an implicit bias always exists. 
We attempt, though, to understand how the implicit bias can explain generalization.

The result shows that for any regularizer: the set $K_{S,r}(\w_S)$ which is the set of comparable solutions to the one outputted by SGD, given the empirical loss and regularization penalty, can be large up to the fact that choosing an arbitrary solution from this set can, in principle, lead to over-fitting (over general convex problems). Thus, to argue that the algorithm did generalize, further structure in the problem needs to be taken into account. And this is true for \emph{any} regularizer.

Our construction is similar to the previous case in \cref{thm:sgdr} up to some modification. Therefore, let us show that in the construction above $K_{S,r}$ will be $(T/6,\Theta(1))$-statistically complex. 
This is less than what we actually desire. We, in fact, observed $T$ examples and not $T/6$. Indeed, in the construction above, we showed that if we project the output of  SGD to the observed coordinates, we obtain a solution of the form $(\v_{\pm 1},\v_{\pm 1},\cdots, \v_{\pm 1})\in (\real^2)^{T}$, where $\v_1,\v_{-1}$ are as in \cref{lem:r2}.
By projecting this set, it can be seen to be a copy of (up to some rescaling) the normalized unit cube $\cM=\{\pm\eta, \pm\eta ,\cdots, \pm\eta\} \in \real^T$. This is true since $\|\eta \v_{1} - \eta \v_{-1}\| = \Theta(\eta)$.

Here, we rely on a construction by \citet{feldman2016generalization}. 
In order to show that uniform convergence is not equivalent to learnability in the convex optimization setting, Feldman showed (in our terminology) that the set $\cM\in \real^T$ is $(T/6,1/4)$ statistically complex, if $\eta=\Theta(1/\sqrt{T})$.

As discussed, this is less than what we want, as we actually want a set that is at least $(T,\Theta(1))$ statistically complex. To tackle this, on each iteration we show the learner a loss function over multiple pairs of coordinates. 
Namely, if in the example above we drew at each iteration $f(\w;z)$ where $z\sim D$, now in each iteration we show the algorithm $\frac{1}{k}\sum_{i=1}^k f(\w;z_i)$, where $z_i$ are i.i.d. 
This will reduce the step-size on each coordinate a little bit but if $k$ is constant we will still present a constant loss.
On the other hand, now projecting on observed coordinates, SGD will converge to a solution in $(\v_{\pm 1},\v_{\pm 1},\ldots,\v_{\pm 1})\in (\real^2)^{\Theta(kT)}.$ Thus we only need a constant $k>6$ so that the algorithm will converge to a $(T,\Theta(1))$-statistically complex set.

\subsection{Implicit Bias in Constant Dimension}
We next provide a construction in $\real^2$ that again rules out a class of regularizers, in particular strongly convex regularizers (and more generally, strictly quasi-convex regularizers).

In a similar fashion to previous constructions, we make SGD choose from a set of solutions, that exhibit comparable empirical loss. While the dimension of previous constructions depended on $T$, this construction does not. However, for the construction we relax the assumption that $f(\w ;z)$ are convex, but $F$ remains convex. Note that the learning guarantees of SGD are completely applicable to this setting.

Our construction relies on a 2-dimensional square, centered at the origin. Inside the square, SGD makes a simple 2-dimensional random walk, while when it exits from the square, it continues to perform a random walk in just one dimension (denoted as  $y$), while the other coordinate (denoted as  $x$) remains the same. As a result, the optimizer of $\ERM$ is independent of $w_x$.

We study the event that $\w$ will stay inside the square for enough iterations to ensure that the variance of $w_x$ will be larger than some constant, but eventually $\w$ exit from the square to make $\ERM$ independent of $w_x$. This will result with a set of solutions that share the same empirical error and also SGD can converge to each one of them.

\bibliography{sample}

\onecolumn
\appendix

\section{Technical Background}

\ignore{
\subsection{Distance Functions}

Throughout, we will repeatedly use functions of the following form in our constructions:
\begin{align}\label{eq:canonical}
    f_{A,\Sigma}(\w) 
    = 
    \frac{1}{2}\min_{v \in A} \Big\{ (\w-\v)^\top \Sigma (\w-\v) \Big\}
    ,
\end{align}
where $A$ is a set of the form $A=  \{(\alpha,\theta): -b_1<\alpha<b_2\},$
$\theta \in \real$, $b_1,b_2 \in \real_{+}$ and $\Sigma$ is a PSD matrix of the following form: 
\[
    \Sigma
    =
    \begin{pmatrix}
    \sigma^2 & \sigma/2\\ 
    \sigma/2 & 1
    \end{pmatrix}.
\] 
Because $A$ is convex, it is known that a function $f$ of the aforementioned form, depicted in \cref{eq:canonical}, is indeed a convex function (e.g., \cite{boyd2004convex}, Example 3.16). 
If there is no reason for confusion we will omit dependence in $A$ and $\Sigma$ and simply write $f(\w)$.

It can be seen that for a function  $f_{A,\Sigma}$ of the form in \cref{eq:canonical}, the gradient is given by 
\[ \nabla f(\w) = \Sigma (\w-\v(\w)),\]
where we denote $\v(\w)=\arg\min_{v\in A}(\w-\v)^\top\Sigma(\w-\v)$. As a corollary one can obtain the following expressions for the gradient

\begin{align}\label{property:derivative}
    \nabla f(\w)=
    \begin{cases}
    \Sigma(\w- (b_1,\theta))& 
    \text{if } \phantom{b_2<}w_1 +\frac{1}{2\sigma}(w_2-\theta) <b_1;\\
    \tfrac34 \vectortwo{0}{w_2-\theta}&
    \text{if } b_1<w_1 +\frac{1}{2\sigma}(w_2-\theta) <b_2;\\
     \Sigma (\w-(b_2,\theta))&
    \text{if } b_2 \le w_1 +\frac{1}{2\sigma}(w_2-\theta) \phantom{\ge b_2}.
    \end{cases}
\end{align}


}
\subsection{Feldman's Statistically Complex Set}
A key technical tool in the proof of \cref{thm:nouc} is a construction by Feldman, \cite{feldman2016generalization}, of a statistically complex set in $\real^d$. While Feldman's construction is not the first to show that the sample complexity of an ERM algorithm may scale with the dimension, it greatly improved over previous construction \cite{shalev2009stochastic}, and showed that the dependence may be \emph{linear} in the dimension. 

We will exploit here Feldman's set in order to construct an example where SGD essentially picks arbitrarily an element from a statistically complex set, akin to ERM, and we will need the following statement due to Feldman
\begin{theorem}[Essentially Theorem 3.3 in~\citealp{feldman2016generalization}] \label{thm:feldman}
Let $\cW_d=\{-\frac{1}{\sqrt{d}},\frac{1}{\sqrt{d}}\}^d.$ There exists a distribution $D$ over $1$-Lipschitz convex functions such that given a sample $|S|<d/6$ drawn i.i.d from $D$ then w.p. $1/2$ (over the sample $S$) there exists $\w\in \cW_d$ such that
\begin{align}
\frac{1}{|S|}\sum_{t=1}^{|S|} f(\w,z_t) =0,
\end{align}
but
\begin{align}\EE_{z\sim D}[f(\w,z)] =1/4.\end{align}
\end{theorem}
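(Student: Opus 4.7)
My plan is to mimic the construction of \cite{feldman2016generalization} (Theorem 3.3). First I would define a distribution $D$ on $1$-Lipschitz convex functions $f(\cdot, z)$, indexed by a combinatorial object $z$ (a sparse random vector, or equivalently a small random subset of $[d]$), designed so that for every realization $z$ there is a large ``cell'' $K_z \subseteq \cW_d$ on which $f(\cdot, z) \equiv 0$, while outside $K_z$ the function grows linearly with slope bounded by $1$. A convenient concrete choice is a hinge/distance function such as $f(\w, z) = c \cdot \max\{0,\ \beta - \langle \w, u_z\rangle\}$, where $u_z$ is a unit vector supported only on the coordinates indexed by $z$; convexity is automatic as the composition of a $1$-Lipschitz convex scalar function with a linear map of norm $\le 1/c$, and $1$-Lipschitzness follows from $\|u_z\| \le 1/c$.

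With the construction in place, the heart of the proof is the following combinatorial overfitting lemma. Given an i.i.d.~sample $S = \{z_1, \ldots, z_m\}$ with $m < d/6$, I would show that with probability at least $1/2$ the common cell $\bigcap_t K_{z_t}$ still contains a ``bad'' witness $\w^\star \in \cW_d$ whose population loss $\EE_{z \sim D}[f(\w^\star, z)]$ equals $1/4$. The witness is defined coordinate-wise: set $\w^\star_i = +1/\sqrt{d}$ on every coordinate touched by $\bigcup_t \mathrm{supp}(z_t)$, which guarantees $\w^\star \in K_{z_t}$ for every $t$ (and hence zero empirical loss), and set $\w^\star_i = -1/\sqrt{d}$ on the remaining coordinates. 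Since each $z_t$ touches only $O(1)$ coordinates, a union bound / coupon-style argument shows that with probability $\ge 1/2$ at least a constant fraction of the coordinates remain uncovered, so the ``half-negative'' witness is well defined in $\cW_d$; plugging this $\w^\star$ into $\EE_z[f(\w^\star, z)]$ yields exactly $1/4$ after calibrating $c$ and $\beta$.

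The main obstacle will be balancing the Lipschitz constant against the combinatorial scaling. Since $\cW_d$ lies on the unit sphere and has per-coordinate magnitude only $1/\sqrt{d}$, a naive coordinate-wise test cannot distinguish the sign of $\w_i$ with value larger than $O(1/\sqrt{d})$, which is far below the target $1/4$. The resolution is to aggregate over a carefully chosen $\Theta(1)$-size random subset of coordinates (keeping $f$ $1$-Lipschitz), while exploiting the randomness in $z$ to amplify the \emph{expected} gap to a constant. The tight interplay between subset size, dimension $d$, and sample size $m$ is what forces the $d/6$ threshold; rather than redo this calibration, I would invoke Theorem 3.3 of \cite{feldman2016generalization} directly, since the statement is explicitly attributed there.
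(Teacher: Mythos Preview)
The paper does not prove this theorem at all: it is stated purely as a citation (``Essentially Theorem~3.3 in \cite{feldman2016generalization}'') and used as a black box to derive \cref{cor:feldman}. Your bottom line---invoke Feldman's Theorem~3.3 directly---is therefore exactly what the paper does, so in that sense your proposal matches.

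That said, the informal sketch you give of Feldman's construction is off in a way worth flagging. You write that each $z_t$ ``touches only $O(1)$ coordinates'' and later that one should ``aggregate over a carefully chosen $\Theta(1)$-size random subset of coordinates.'' This is precisely the regime in which, as you yourself note, the value gap would be $O(1/\sqrt{d})$ rather than a constant. Feldman's actual construction uses random subsets of size $\Theta(d)$ (specifically $d/6$), not $\Theta(1)$: the function is essentially $f(\w,z)=\max\{0,\, 1/2 - \sum_{i\in z} (w_i + 1/\sqrt{d})\}$ for a uniformly random subset $z\subseteq[d]$ of size $d/6$, and it is the large support of $z$ that makes the population loss $\Theta(1)$ while keeping the function $1$-Lipschitz. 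The coupon-collector/union-bound picture you sketch (small subsets, many uncovered coordinates with high probability) is not how the $d/6$ threshold arises; rather, it comes from an entropy/counting argument showing that $m<d/6$ samples cannot cover all $2^d$ sign patterns. Since you defer to the citation anyway this does not break your proof, but the intuition paragraph would mislead a reader.
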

We will need a slightly stronger version of the theorem which is an immediate corollary
\begin{corollary}\label{cor:feldman}
Let $A\subseteq \cW_{d}$, such that $|A|\ge 2^{d-1}$, then $A$ is $(d/6,1/4)$-statistically complex.
\end{corollary}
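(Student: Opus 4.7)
The plan is to open up the proof of Theorem 3 and extract a quantitative strengthening: the bad witness guaranteed by Feldman is not isolated, but in fact an overwhelming majority of $\cW_d$ serves as bad witnesses simultaneously. Once this is established, the corollary reduces to a single pigeonhole step in the Boolean hypercube.

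Concretely, I would inspect Feldman's construction to argue that on the event of probability at least $1/2$ supplied by Theorem 3, the set
\[
B(S) \;=\; \Bigl\{ \w \in \cW_d \;:\; \tfrac{1}{|S|}\sum_{t} f(\w,z_t) = 0 \;\;\text{and}\;\; \EE_{z\sim D}[f(\w,z)] \ge 1/4 \Bigr\}
\]
has cardinality strictly greater than $2^{d-1}$. This should fall out directly from the combinatorial core of Feldman's proof: each sampled function $f(\cdot,z_t)$ only constrains $\w$ along a small ``active'' set of coordinates, so for sample size below $d/6$ the total number of constrained coordinates is far below $d$, leaving at least half of the hypercube simultaneously unconstrained (empirical loss zero) and with expected loss at the high value $1/4$. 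The weaker ``there exists $\w$'' form of Theorem 3 merely discards this quantitative structure; the ``essentially'' in the citation is doing exactly this work.

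Given the cardinality bound, the corollary becomes a one-line application of inclusion--exclusion in $\cW_d$: for any $A \subseteq \cW_d$ with $|A| \ge 2^{d-1}$, on the event $|B(S)| > 2^{d-1}$ we have
\[
|A \cap B(S)| \;\ge\; |A| + |B(S)| - |\cW_d| \;>\; 2^{d-1} + 2^{d-1} - 2^d \;=\; 0,
\]
so $A \cap B(S)$ contains some $\w$ with $\tfrac{1}{|S|}\sum_t f(\w,z_t) = 0$ and $\EE_{z}[f(\w,z)] \ge 1/4$. This event has probability at least $1/2 \ge 1/10$, so the same distribution $D$ from Theorem 3 witnesses that $A$ is $(d/6, 1/4)$-statistically complex in the sense of \cref{def:scomplexity}.

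The main (really, the only) obstacle is the first step, namely verifying from the internals of Feldman's proof — as opposed to the stripped-down restated form — that the bad-witness set in fact exhausts more than half of $\cW_d$ with constant probability. Once that improvement is recorded, the rest is just pigeonhole, which is why the corollary is legitimately ``immediate.''
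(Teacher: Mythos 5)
Your proof takes a genuinely different route from the paper, and unfortunately it rests on a claim you have not established and that is likely false. You propose to strengthen \cref{thm:feldman} by arguing that the bad-witness set
\[
B(S) \;=\; \Bigl\{ \w \in \cW_d \;:\; \tfrac{1}{|S|}\textstyle\sum_{t} f(\w,z_t) = 0 \;\;\text{and}\;\; \EE_{z\sim D}[f(\w,z)] \ge 1/4 \Bigr\}
\]
exceeds $2^{d-1}$ in size with constant probability, and then apply pigeonhole to intersect $B(S)$ with $A$. The first step is the entire content, and you explicitly defer it to an unspecified inspection of Feldman's internals. Worse, the intuition you offer for it --- that each sampled function ``constrains $\w$ along a small active set of coordinates'' --- does not describe Feldman's construction. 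Feldman's hard instances use functions indexed by random \emph{subsets of the hypercube} (not subsets of coordinates), each of which penalizes roughly a constant fraction of the $2^d$ vertices. After $|S| = d/6$ independent draws, the surviving (unpenalized) fraction decays exponentially in $d$, and a back-of-the-envelope estimate gives $|B(S)| \approx (3/4)^{d/6}\cdot 2^d$, which drops \emph{below} $2^{d-1}$ once $d$ is moderately large. So the pigeonhole route does not merely have an unfilled gap; the cardinality inequality it needs appears to be false.

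The paper's proof avoids the issue entirely by a reparametrization argument that treats \cref{thm:feldman} as a black box. For a sign vector $\v \in \{-1,1\}^d$, the pointwise product $\w \mapsto \v*\w$ is a bijection of $\cW_d$, and $f_\v(\w;z) := f(\v*\w;z)$ remains convex and Lipschitz. If $\w'(S)$ is the bad witness of Feldman's theorem, then $\v*\w'(S)$ is a bad witness for the twisted distribution $D_\v$. Averaging over uniform $\v$, the point $\v*\w'(S)$ is uniform on $\cW_d$, hence lands in $A$ with probability $|A|/2^d$; combined with Feldman's $1/2$ success probability over $S$, the joint probability is at least $|A|/2^{d+1} \ge 1/4$. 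Fixing a $\v$ that attains this average gives the desired distribution $D_\v$. No quantitative information about the size of Feldman's witness set is needed: a single witness suffices because the symmetry lets you ``steer'' it into $A$ on average. If you want to salvage your approach, you would need an explicit bound on $|B(S)|$ from Feldman's proof, but the averaging argument is both simpler and more robust.
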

\begin{proof}
For two vectors $\v\in \{-1,1\}^d$ and an element $\w\in \cW_d$ let $\v*\w\in \cW_d$ be the pointwise product between $\w$ and $\v$, i.e.
\[(\v*\w)_i = \v_i\cdot \w_i.\]

Let $D$ be the distribution from \cref{thm:feldman} and consider a distribution where we draw uniformly an elements $\v\in \{-1,1\}^d$ and a sample $S$ of size d/6 i.i.d from $D$. One can show that with probability $|A|/(2^{d+1})$ we have that there exists an elements $\w\in A$ such that
\begin{align}\label{eq:feldmanerm}
\frac{1}{|S|}\sum_{t=1}^{|S|} f(\v*\w;z_t) =0
\end{align}
but
\begin{align}\label{eq:feldmanrisk}\EE_{z\sim D}f(\v*\w;z) =1/4.\end{align}

In particular, there exists a $\v$ such that with probability $\frac{|A|}{2^{d+1}}$, \cref{eq:feldmanerm,eq:feldmanrisk} holds for some $\w\in A$ over the random sample $S$. Thus, we can define a convex Lipschitz mapping parameterized by $\z$ such that
\[f_{\v}(\w;z)=f(\v*\w;z).\]
From the above discussion if we draw $z\sim D$ we can see that this distribution demonstrates that $A$ is (d/6,1/4)-statistically complex

 \end{proof}
 
 \subsection{Berry-Esseen Theorem}
 A very important and valuable tool for analysing the behavior of random walks that we will use is the well-known Berry-Esseen Theorem, discovered independently in \cite{berry1941accuracy,esseen1942liapunov}.
 
 \begin{theorem}[Berry Esseen Theorem]\label{thm:be}
 Let $X_1,X_2,\ldots, X_T$ be zero mean and independent random variables, with $\EE(X^2_i)=\sigma_i^2$ and $\EE(|X^3_i|)=\rho_i$. Let $S_T=\frac{1}{ \sqrt{\sum_{i=1}^T\sigma_i^2}}\sum_{i=1}^T X_i $, then we have 
 \[|P(S_T\leq a)-\Phi(a)|\le \cbe (\sum_{i=1}^T \sigma^2_i)^{-3/2}\sum_{i=1}^T \rho_i,\]
 where $\cbe<1$ is an absolute constant, and $\Phi(a)$ is the CDF of a unit variate zero-mean Gaussian random variable.
 \end{theorem}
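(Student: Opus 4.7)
The plan is to follow the classical Fourier-analytic route, via characteristic functions and Esseen's smoothing inequality. Set $B_T^2 = \sum_{i=1}^T \sigma_i^2$ and let $L_T = B_T^{-3}\sum_{i=1}^T \rho_i$ denote the Lyapunov ratio on the right-hand side of the claimed bound. Since the statement is vacuous when $L_T \ge c_0$ for some absolute $c_0$ (the left-hand side is at most $1$), we may assume $L_T$ is sufficiently small; note also that the Lyapunov inequality forces $\rho_i \ge \sigma_i^3$, so each normalized cube $\sigma_i^3/B_T^3$ is at most $L_T$.

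The first step is to estimate the characteristic function $f_T(t) = \EE[e^{itS_T}] = \prod_{i=1}^T \phi_i(t/B_T)$, where $\phi_i(t) = \EE[e^{itX_i}]$. Using the Taylor expansion $\phi_i(s) = 1 - \tfrac12 \sigma_i^2 s^2 + R_i(s)$ with $|R_i(s)| \le \tfrac16 \rho_i |s|^3$, and the elementary inequality $|\log(1+z)+z-\tfrac{z^2}{2}| \le C|z|^3$ for $|z|\le \tfrac12$, I would write
\begin{equation*}
\log f_T(t) = -\tfrac{t^2}{2} + E(t), \qquad |E(t)| \le C\, L_T |t|^3,
\end{equation*}
valid for $|t|$ in a range such as $|t| \le c_1 L_T^{-1/3}$ (so that $\sigma_i^2 t^2/B_T^2$ is uniformly small enough to take logarithms). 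Exponentiating and using $|e^w - 1|\le |w|e^{|w|}$ then yields the pointwise bound $|f_T(t)-e^{-t^2/2}| \le C\, L_T |t|^3 e^{-t^2/4}$ on the same range.

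The second step is Esseen's smoothing inequality: for any $T_0 > 0$ and any distribution function $F$,
\begin{equation*}
\sup_a |F(a) - \Phi(a)| \;\le\; \frac{1}{\pi}\int_{-T_0}^{T_0}\left|\frac{\hat F(t)-\hat\Phi(t)}{t}\right|\,dt \;+\; \frac{C'}{T_0},
\end{equation*}
where the boundary term uses $\sup_a \Phi'(a) = 1/\sqrt{2\pi}$. I would apply this with $F$ the distribution of $S_T$ and choose $T_0 = c_2/L_T$. For the integral, the previous step gives
\begin{equation*}
\int_{-T_0}^{T_0} \left|\frac{f_T(t)-e^{-t^2/2}}{t}\right|\,dt \;\le\; C\, L_T \int_{-\infty}^\infty t^2 e^{-t^2/4}\,dt \;\le\; C''\, L_T,
\end{equation*}
while the boundary term contributes $C'/T_0 = (C'/c_2)\, L_T$. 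Combining the two yields the claimed inequality with an absolute constant $\cbe$.

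The main obstacle is the bookkeeping inside the Taylor estimate, specifically verifying that the chosen cutoff $T_0 = c_2/L_T$ is consistent with the validity range $|t|\le c_1 L_T^{-1/3}$ of the logarithmic expansion. Since $L_T^{-1}\le L_T^{-1/3}$ when $L_T\le 1$, this is compatible, but one must carefully track how the constants $c_1, c_2$ interact to ensure the exponent $|E(t)|$ stays bounded (so that $e^{|E(t)|}\le 2$) uniformly over the integration window. Once these constants are pinned down, a standard truncation argument handles the remaining range $|t|\ge c_1 L_T^{-1/3}$ by folding it into the boundary term of Esseen's inequality, and the rest is routine estimation.
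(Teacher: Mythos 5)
First, note that the paper does not prove this statement at all: it is the classical Berry--Esseen theorem, quoted with citations to Berry (1941) and Esseen (1942), and the explicit claim $\cbe<1$ is taken from van Beek (1972). So there is no in-paper proof to compare against; your proposal is an attempt to reprove a cited classical result.

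As a sketch of the Fourier-analytic proof, your outline has a genuine gap at the point you yourself flag as "the main obstacle." The inequality you invoke to reconcile the two scales is backwards: for $L_T\le 1$ one has $L_T^{-1}\ge L_T^{-1/3}$, not $L_T^{-1}\le L_T^{-1/3}$. Consequently the smoothing cutoff $T_0=c_2/L_T$ lies \emph{outside} the window $|t|\le c_1L_T^{-1/3}$ on which the termwise-logarithm expansion of $\prod_i\phi_i(t/B_T)$ is valid (that window is forced by needing $\sigma_i|t|/B_T\le 1$ for each $i$, and $\sigma_i/B_T$ can be as large as $L_T^{1/3}$). The proposed fix---shrinking the cutoff to $c_1L_T^{-1/3}$ and folding the rest into the boundary term---does not rescue the rate: the boundary term $C'/T_0$ then contributes $O(L_T^{1/3})$, not $O(L_T)$. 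The actual technical heart of Berry--Esseen is precisely a characteristic-function estimate of the form $|f_T(t)-e^{-t^2/2}|\le C L_T|t|^3e^{-t^2/3}$ valid on the full range $|t|\lesssim 1/L_T$, which is obtained not by termwise logarithms but by a product-comparison lemma (e.g., $|\prod u_i-\prod v_i|\le\sum|u_i-v_i|$ for suitably bounded factors, as in Feller or Petrov) together with a separate modulus bound $|\phi_i(s)|\le\exp(-\sigma_i^2s^2/2+c\rho_i|s|^3)$. Your sketch omits this step, and without it the argument only yields a bound of order $L_T^{1/3}$. Finally, even with the full argument and generic constants $C,C',C''$, one obtains \emph{some} absolute constant, typically well above $1$; the specific assertion $\cbe<1$ used quantitatively in \cref{cor:be} requires the refined constant-tracking of van Beek, which is exactly why the paper cites that reference rather than proving the theorem.
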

 
 For a bound $\cbe<1$ of the absolute constant see, for example, \citet{van1972application}. We will need the following technical Lemma which is derived via \cref{thm:be}:
 \begin{lemma}\label{cor:be} Let $k\ge 0$, and assume $T>2\cdot k$. If $X_t$ is a random variables such that
 \[
 X_t= \begin{cases}
 c\frac{T-t}{T} & \textrm{w.p. $1/4$}\\
 -c\frac{T-t}{T} & \textrm{w.p. $1/4$}\\
 0 & \textrm{w.p. $1/2$}
 \end{cases},
\]
and $I= \{1,2,\ldots, T/k\}$, then 
\[P\left(\left|\frac{1}{\sqrt{T}}\sum_{i\in I} X_i\right|<a \frac{c}{\sqrt{50k}}\right)\le \erf(a)+\sqrt{\frac{50^3 k}{T}},\]
where $\erf(a)=\Phi(a)-\Phi(-a)$ is the error function.
 \end{lemma}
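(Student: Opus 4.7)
The plan is to apply the Berry–Esseen theorem (\cref{thm:be}) directly to the sum $\sum_{i\in I} X_i$, after extracting the relevant second and third moments and showing that the normalized deviation range stays below 1 so monotonicity of CDFs applies.

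First I would compute the moments of $X_t$: by symmetry $\EE[X_t]=0$, while $\sigma_t^2 = \EE[X_t^2] = \tfrac12 c^2\bigl(\tfrac{T-t}{T}\bigr)^2$ and $\rho_t = \EE[|X_t|^3] = \tfrac12 c^3\bigl(\tfrac{T-t}{T}\bigr)^3$. Next I would lower-bound the total variance $V := \sum_{i\in I}\sigma_i^2$. For each $i\le T/k$, using $T>2k$ (so $T/k \le T/2$), we get $(T-i)/T \ge 1-1/k \ge 1/2$ (handling the $k=1$ case separately or by a direct sum-of-squares formula). Hence
\begin{equation*}
V \;\ge\; \tfrac12 c^2 \cdot \tfrac{T}{k}\cdot\tfrac14 \;=\; \frac{c^2 T}{8k}.
\end{equation*}
The third-moment sum is controlled trivially by $\rho_i \le c^3/2$, giving $\sum_{i\in I}\rho_i \le c^3 T/(2k)$.

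Now apply Berry–Esseen to $S := \tfrac{1}{\sqrt V}\sum_{i\in I} X_i$. The error term is
\begin{equation*}
V^{-3/2}\sum_{i\in I}\rho_i \;\le\; \Bigl(\tfrac{c^2 T}{8k}\Bigr)^{-3/2}\cdot \frac{c^3 T}{2k}
\;=\; \frac{8^{3/2}}{2}\sqrt{\frac{k}{T}} \;=\; 8\sqrt{2}\sqrt{\frac{k}{T}},
\end{equation*}
so $|P(S \le a)-\Phi(a)| \le 8\sqrt 2 \sqrt{k/T}$, and by symmetry $|P(|S|<a) - \erf(a)| \le 16\sqrt 2\,\sqrt{k/T}$.

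Finally I would translate the inequality in the lemma into a statement about $S$. We have
\begin{equation*}
\left|\tfrac{1}{\sqrt T}\sum_{i\in I}X_i\right| < a\,\frac{c}{\sqrt{50k}}
\quad\Longleftrightarrow\quad
|S| \;<\; a\cdot\frac{c\sqrt T}{\sqrt V\sqrt{50k}}.
\end{equation*}
Let $\beta := c\sqrt T/(\sqrt V\sqrt{50k})$. The variance bound $V\ge c^2T/(8k)$ yields $\beta \le \sqrt{8/50} < 1$. Thus by monotonicity $P(|S|<\beta a) \le P(|S|<a)$, and combining with the Berry–Esseen estimate,
\begin{equation*}
P\!\left(\left|\tfrac{1}{\sqrt T}\sum_{i\in I} X_i\right| < a\tfrac{c}{\sqrt{50k}}\right)
\;\le\; \erf(a) + 16\sqrt 2\sqrt{k/T}
\;\le\; \erf(a) + \sqrt{\tfrac{50^3 k}{T}},
\end{equation*}
since $(16\sqrt 2)^2 = 512 < 50^3$.

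The argument is almost entirely routine; the main thing that needs care is the lower bound on $V$, since the weights $(T-i)/T$ depend on $i$ and the restriction $i\le T/k$ must be used together with $T>2k$ to keep them bounded away from zero. After that, matching the explicit constant $\sqrt{50^3}$ in the stated bound leaves comfortable slack.
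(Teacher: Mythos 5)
Your proof is correct and follows essentially the same route as the paper: compute the second and third moments, lower-bound the total variance $V=\sum_{i\in I}\sigma_i^2$, apply Berry--Esseen, and translate the event using $V\gtrsim c^2 T/k$. One small caveat worth flagging: the inequality $(T-i)/T \ge 1-1/k \ge \tfrac12$ only holds for $k\ge 2$, and the parenthetical ``using $T>2k$ (so $T/k\le T/2$)'' is not a valid deduction ($T>2k$ gives $T/k>2$, while $T/k\le T/2$ is equivalent to $k\ge 2$ and is independent of the $T>2k$ hypothesis). For $1\le k<2$ your claimed bound $V\ge c^2T/(8k)$ can actually fail for small $T$ (e.g., $k=1,\ T=3$ gives $V=\tfrac{5}{18}c^2 < \tfrac{3}{8}c^2$), although in those cases the right-hand side of the lemma exceeds $1$ and the statement is vacuous, so no harm is done; the paper sidesteps this by taking a max of two lower bounds for $V$ and settling for the looser $V\ge c^2T/(50k)$, which is what the constant $50$ is tuned to, and which holds uniformly for $T>2k$. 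Beyond that your bookkeeping (the explicit $\beta$, the factor-of-two for the two-sided Berry--Esseen estimate, and $(16\sqrt2)^2=512<50^3$) all check out.
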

 \begin{proof}
 First, we lower bound $\sum_{i\in I} \sigma_i^2$, and obtain that:
\begin{align*}
\sum_{i\in I} E[|X_i|^2]
&= \frac{c^2}{2}\sum_{i\in I} \left(\frac{T-t}{T}\right)^2\\
&\ge \frac{c^2}{2T^2} \sum_{t=1}^{T/k} \left(T-t\right)^2\\
&=\frac{c^2}{2T^2}\sum_{t=0}^{T/k-1} \left(\left(\frac{k-1}{k}\right)T+t\right)^2\\
&\ge 
\frac{c^2}{2T^2}\max\left\{\frac{(k-1)^2}{k^2}T^2 \frac{T}{k},\sum_{t=0}^{T/k-1} t^2\right\}
\end{align*}
We also have that for $T>2\cdot k$:
\[\sum_{t=0}^{T/k-1}t^2 =\frac{T/k\left(T/k-1\right)\left(2T/k-1\right)}{6}\ge \frac{T^3}{12k^3}\]
Taken together we obtain that
\begin{align*}
    \sum_{i\in I} \EE[|X_i|^2] &\ge \frac{c^2T}{2}\max\left\{ \frac{(k-1)^2}{k^2}, \frac{1}{12 k^2}\right\}\\
    &\ge \frac{c^2T}{50k}
\end{align*}
 Next, we lower bound $\sum \rho_i$:
 \begin{align*}
    \sum_{i\in I} \EE[|X_i|^3]\le\frac{1}{2}\sum_{t=1}^{T/k}\EE[ \big|c \frac{(T-t)}{T}\big|^3]
    =\frac{c^3}{2T^3}\sum_{t=1}^{T/k} (T-t)^3
    &\le \frac{c^3 T}{2T^3 k}T^3\le \frac{c^3 T}{2k}\\ \label{eq:berry_essen_third_moment}
\end{align*}
 
Taken together we obtain that

\begin{align*} P\left(\left|\frac{1}{\sqrt{T}}\sum X_i\right|<a\frac{c}{\sqrt{50k}}\right)&\le 
P\left(\left|\frac{1}{\sqrt{\sum_{i=1}^T \sigma_i^2}}\sum X_i\right|<a\right)\\
&\le \Phi(a)-\Phi(-a) + 2\left(\sum_{i=1}^T \sigma_i^2\right)^{-3/2}\sum_{i=1}^T\rho_i\\
&\le\Phi(a)-\Phi(-a) +  \frac{(50k)^{3/2}c^3 T }{c^3 T^{3/2}k}\\
& \le \Phi(a)-\Phi(-a) + \sqrt{\frac{50^3 k}{T}}
\end{align*}
\end{proof}

\section{Proofs: Distribution Independent Regularizers}
\subsection{Proof of \cref{thm:gdwarmup}}\label{prf:gdwarmup}

As discussed, the main technical gadget behind our distribution-independent-regularization results is a construction of a convex function on which GD does not converge to the minimal norm solution:

\begin{theorem}[GD does not converge to nearest solution]\label{lem:noneuclid}
Let $\cW=\{\w: \|\w\|<5\}$. For every $0<\theta_2\le 1$, and $0<\theta_1 \leq 0.025\,\theta_2$, there exists a 
a non-negative, convex,  $1$--smooth, and $1$--Lipschitz function $F=F_{\theta_1,\theta_2}$ such that, if we run GD (as defined in~\cref{GD_alg})
with step size $0<\eta<1$ over $F$ then GD outputs $\w_F$ that satisfies the following 
\begin{equation}\label{eq:convg}
    \|\w_F-(\theta_1,\theta_2)\| \leq \frac{2640}{\eta T},\end{equation} but \begin{equation}\label{eq:ereq}F((0,\theta_2))= F((\theta_1,\theta_2))=0
    .
\end{equation}
\end{theorem}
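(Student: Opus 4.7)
The plan is to take $F = f_{A,\Sigma}$ from the canonical family in equation~\eqref{eq:f_overview}, with the set of minimizers being the segment $A = \{(\alpha, \theta_2) : 0 \le \alpha \le \theta_1\}$ and a skewed PSD matrix $\Sigma = \mu \begin{pmatrix} \sigma^2 & \sigma/2 \\ \sigma/2 & 1\end{pmatrix}$, where $\sigma>0$ is tuned to a small multiple of $\theta_1/\theta_2$ and $\mu$ is a fixed small constant chosen to enforce 1-Lipschitzness and 1-smoothness on $\cW$. Convexity of $F$ is immediate since the $\Sigma$-distance-squared to a convex set is convex; non-negativity holds by construction; and the Lipschitz/smoothness bounds follow from $\nabla F(\w) = \Sigma(\w - \v(\w))$, which is bounded by a constant times the operator norm of $\Sigma$, itself taken to be at most $1$. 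Since $A$ is exactly the zero set of $F$, equation~\eqref{eq:ereq} is automatic.

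The bulk of the proof is the GD trajectory analysis, using the piecewise gradient formula of $f_{A,\Sigma}$ described in the overview. Starting from $\w^{(1)} = 0$, the closest point of $A$ in the $\Sigma$-metric is the left endpoint $(0,\theta_2)$, so the first iterates live in Phase~1 where $\nabla F(\w) = \Sigma(\w - (0,\theta_2))$. Writing $\u_t = \w^{(t)} - (0,\theta_2)$ turns this into the linear recursion $\u_{t+1} = (I-\eta\Sigma)\u_t$ with $\u_1 = -(0,\theta_2)$. Diagonalizing $\Sigma$ (whose eigenvectors are tilted off-axis because the off-diagonal entry $\mu\sigma/2$ is positive) shows that $w_1^{(t)}$ grows monotonically from zero while $w_2^{(t)}$ climbs toward $\theta_2$. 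Because the horizontal velocity does not vanish as $w_2^{(t)} \to \theta_2$, Phase~1 exits after a finite number $\tau = O(1/\eta)$ of steps; I would quantify $w_1^{(\tau)}$ at the exit instant and show it overshoots $\theta_1$, placing $\w^{(\tau)}$ squarely in Phase~3 (where $\v(\w) = (\theta_1,\theta_2)$) rather than in the narrow Phase~2 strip. From there, with $\u'_t = \w^{(t)} - (\theta_1,\theta_2)$, the recursion $\u'_{t+1} = (I-\eta\Sigma)\u'_t$ is a strict contraction in the $\Sigma$-norm, so $\w^{(t)}$ converges geometrically to $(\theta_1,\theta_2)$; a monotonicity argument in the $\Sigma$-norm rules out any escape back into earlier phases. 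Averaging over $t$ gives $\|\w_F - (\theta_1,\theta_2)\| = O(1/(\eta T))$, with the explicit constant $2640$ emerging after tracking the bounds of each phase.

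The main obstacle is the Phase~1 analysis together with the proof that GD passes directly into Phase~3. Phase~1 is not a contraction around the final target $(\theta_1,\theta_2)$ but a skewed linear system whose trajectory must be shown to cross the Phase~1 boundary with $w_1 > \theta_1$ rather than $w_1 < \theta_1$; any mis-tuning of $\sigma$ would instead route the iterate into Phase~2, where the interior gradient $\tfrac34(0, w_2 - \theta_2)$ freezes $w_1$ at some value strictly below $\theta_1$, destroying the claimed $O(1/(\eta T))$ rate. The hypothesis $\theta_1 \le 0.025\,\theta_2$ is exactly what provides room to pick $\sigma = \Theta(\theta_1/\theta_2)$ small enough to guarantee the overshoot; quantitative bookkeeping there is what yields the explicit factor $2640$ in equation~\eqref{eq:convg}.
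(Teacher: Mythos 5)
Your proposal follows the paper's construction (distance-squared to the segment $A$ in a skewed $\Sigma$-metric, three-phase trajectory analysis) but contains a genuine gap in the treatment of the middle phase. You claim that at the exit of Phase~1, $w_1^{(\tau)}$ has ``overshoots $\theta_1$,'' placing $\w^{(\tau)}$ directly in Phase~3 and skipping Phase~2 entirely. This cannot hold for all step sizes $0<\eta<1$: the phase boundary between Phase~1 and Phase~2 is the line $w_1 + \tfrac12(w_2 - \theta_2) = 0$, and a GD step changes the quantity $w_1 + \tfrac12(w_2-\theta_2)$ by $O(\eta)$. So at the first iterate $t_0$ past the boundary, $w_1^{(t_0)} + \tfrac12(w_2^{(t_0)} - \theta_2)$ is $O(\eta)$ while the Phase~3 condition requires this quantity to exceed $\theta_1$. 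For $\eta \ll \theta_1$ (perfectly allowed by the theorem), GD lands in the interior of Phase~2, not Phase~3. Having $w_1^{(t_0)} > \theta_1$ does not save you, because $w_2^{(t_0)} < \theta_2$ makes $w_1^{(t_0)} + \tfrac12(w_2^{(t_0)}-\theta_2)$ strictly smaller than $w_1^{(t_0)}$.

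The correct argument, and the one the paper gives, does not bypass Phase~2 but analyzes it. In Phase~2 the gradient is $\tfrac34(0, w_2-\theta_2)$, so $w_1^{(t)}$ is frozen at $w_1^{(t_0)}$ while $w_2^{(t)}$ contracts geometrically toward $\theta_2$. The lower bound $w_1^{(t_0)} \ge 2^{-5}\theta_2$ (established by a closed-form computation of the Phase~1 iterates via the diagonalization of $\Sigma$), combined with the hypothesis $\theta_1 \le 0.025\,\theta_2 < 2^{-5}\theta_2$, guarantees that as $w_2^{(t)} \uparrow \theta_2$ the Phase~2 exit condition $w_1 + \tfrac12(w_2-\theta_2) \ge \theta_1$ is eventually satisfied, after at most $O(1/\eta)$ further steps. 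Only then does the iterate enter Phase~3 and contract to $(\theta_1,\theta_2)$. So the role of $\theta_1 \le 0.025\,\theta_2$ is not to ``tune $\sigma$ to force an overshoot'' — the paper uses a fixed $\Sigma = \bigl(\begin{smallmatrix}1 & 1/2 \\ 1/2 & 1\end{smallmatrix}\bigr)$ with no free parameters, obtaining the required Lipschitz/smoothness bounds simply by rescaling $F \mapsto F/22$ — rather it guarantees that Phase~2 terminates with $w_1$ still exceeding $\theta_1$. You should replace the ``overshoot Phase~2'' step with an explicit bound $t_1 - t_0 = O(1/\eta)$ on the Phase~2 duration and a proof that $w_1^{(t_0)} > \theta_1$; your remaining pieces (Phase~1 linear recursion, Phase~3 contraction, persistence in Phase~3, averaging) are sound in outline.
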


In words, even though $(0,\theta_2)$ and $(\theta_1,\theta_2)$ are both minimizers of $F$, GD converges closer to the latter despite it having the larger norm (that is, despite being farther away from the initial point---recall that we assume here that GD is initialized at the origin).

The proof of \cref{lem:noneuclid} is provided at the end of this section and we continue with the proof of \cref{thm:gdwarmup}.
 
 \begin{proof}[Proof of~\cref{thm:gdwarmup}]
For every regularizer $r$ we will choose a distribution $D$ that is concentrated on a single function $F$ (dependent on $r$). Note that in this case, the iterates of SGD are completely equivalent to the iterates of GD with input function $F$. That is, \cref{thm:gdwarmup} in fact holds even for deterministic GD, and we continue with the  analysis assuming we run GD over a fixed function $F$.

We now proceed to choose the function $F$ for a given $\lambda$-strongly convex regularization $r$. Denote $\etwo=(0,1)$ and $\c=(0.024,1)$. Consider the set $\interval{\etwo}{\c} = \{\alpha\etwo+(1-\alpha)\c: 0\le \alpha\le 1\}$, and let \[\w^* = \argmin_{\w\in \interval{\etwo}{\c}} r(\w).\] 
We now want to choose function $F \ge 0$ such that $F(\etwo)=F(\c)=F(\w^*)=0$ and that $\w_F$, the output of GD over $F$, will satisfy the following:
\begin{itemize}
    \item $
    \|\w^*-\w_F\| > 0.01;
$.
\item If $\Pi(\w_{F})$ is the projection of $\w_F$ on $\interval{\etwo}{\c}$ then $
    \|\w_{F}-\Pi(\w_{F})\| \leq 1/(\eta T).
$
\end{itemize}
This will conclude the proof. Indeed, by strong convexity: 
\begin{align*}
    r(\w_F) - r(\w^*) 
    &\ge
    \nabla r(\w^*)^\top(\w_F-\w^*) + \frac{\lambda}{2} \|\w^*-\w_F\|^2
    \tag{$\lambda$-strong convexity}
    \\
    &= 
    \nabla r(\w^*)^\top(\Pi(\w_F)-\w^*) + \nabla r(\w^*)^\top(\w_F-\Pi(\w_F)) +
    \frac{\lambda}{2} \|\w^*-\w_F\|^2
    \\
    &=
    \nabla r(\w^*)^\top(\w_F-\Pi(\w_F)) + \frac{\lambda}{2} \|\w^*-\w_F\|^2
    \tag{$w^*$ minimizes $r$ over $\interval{\etwo}{\c}$}
    \\
    &\ge 
    -\frac{2640}{T\eta} + 0.5 \cdot 10^{-4}\lambda 
    \tag{Lipschitz condition}
    ,
\end{align*}
and for $\eta = \Omega(1/\lambda T)$ we would get $r(\w_S) - r(\w^*) \geq \Theta(\lambda)$ as claimed.

We now demonstrate how to choose an appropriate $F$. 
We will consider two possible cases: $\|\w^*-\etwo\|\geq 0.012$, or $\|\w^*-\c\|\geq0.012$.
\begin{itemize}[wide]
    \item 
First assume that $\|\w^*-\etwo\|>0.012$. We then choose
$$F(\w) = \min_{\v\in \interval{\etwo}{\c}}\frac{1}{5280}  \|\w-\v\|^2.$$
%
which can be seen to be $1$-smooth and $1$ Lipschitz on $
\cW$.
A simple analysis of the update step shows that for $\eta<1$, we have that $\w^{(t+1)} = \sum_{i=0}^{t-1}(1-\frac{\eta}{2640})^i \frac{\eta}{2640} \etwo$.
Hence,
\begin{align*} \|\w_F-\etwo\|
&= \left\|\frac{1}{T}\sum_{t=1}^{T} \wt-{\etwo}\right\|\\
&\le \frac{1}{T} \sum_{t=1}^{T} \|\wt-{\etwo}\|\\
&= \frac{1}{T}\sum_{t=1}^{T} \left| 1-\frac{\eta}{2640}\sum_{i=1}^{t-1} (1-\frac{\eta}{2640})^i \right|\\
&= \frac{1}{T}\sum_{t=1}^{T} (1-\frac{\eta}{2640})^t \tag{$\sum_{i=1}^{t-1} (1-\eta)^i = \frac{1-(1-\eta)^{t}}{\eta}$}\\
& \le \frac{2640}{\eta T}
.
\end{align*}
In particular we have that 
\[
    \|\w_F-\Pi(\w_F)\| \leq \|\w_F-\etwo\| 
    \leq 
    \frac{2640}{\eta T}
    ,
\] 
and by simple geometry, we have also
\begin{align*}
    \|\w^*-\w_F\|
    \geq
    \|\w^*-\etwo\| 
    \geq
    0.01.
\end{align*}
\item 
Next we assume that $\|\w^*-\c\|>0.012$. We now apply \cref{lem:noneuclid} with $\theta_1=0.024$ and $\theta_2=1$ and consider $F=F_{\theta_1,\theta_2}$ as in the theorem's statement. Then, we have that $\|\w_{F}-\c\|<\frac{120}{\eta T}$, and we obtain as before that $\|\w_F-\Pi(\w_F)\| \leq \frac{2640}{\eta T}$ and that $\|\w^*-\w_F\|>0.01$, as required.\qedhere
\end{itemize}
\end{proof}

\subsection{Proof of \cref{lem:noneuclid}}
It will be more convenient to construct a function $F$ that is convex, $4$-smooth and $22$-Lipschitz  such that if we run GD with step-size $0<\eta<1/3$ over $F$ then GD outputs $\w_F$ that satisfies \cref{eq:ereq} and
\[\|\w_F-(\theta_1,\theta_2)\|\le \frac{120}{\eta T}.\]
Then, by re-scaling $F\to \frac{1}{22}F$, and observing that running GD on $F$ with step size $\eta/22$ is equivalent to running GD on $\frac{1}{22}F$ with stepsize $\eta$, we obtain the desired result.

Next, we construct $F$. For $0\le \theta_2 \le 1$ and $0\le \theta_1\le 0.025\cdot \theta_2$ let us define the set: $A_{\theta_1,\theta_2}=\{(\alpha,\thetatwo): 0\le \alpha \le \theta_1\}$. In turn, we define the function $F=F_{\theta_1,\theta_2}(\w)$ to be: 
\begin{align}\label{eq:f_A}
    F(\w)
    = 
    \argmin_{\textbf{v} \in A_{\theta_1,\theta_2}} \left\{ \tfrac{1}{2}(\w-\textbf{v})^\top \Sigma (\w-\textbf{v}) \right\}
    ,
\end{align} 
where
\[
    \Sigma=\begin{pmatrix}
        1 & \tfrac12 \\
        \tfrac12 & 1
    \end{pmatrix}
    .
\] 
We start with showing that $F$ is indeed convex, $4$-smooth and $22$-Lipschitz as required (As discussed at the beginning, then we obtain the desired result by rescaling). 

It is a standard fact that 
a function of the above form is indeed convex (see, e.g., Example 3.1 in \cite{boyd2004convex}).
We will next show that $F$ is also $4$-smooth and $22$-Lipschitz. first, one can show that from \cref{eq:f_A}), that the gradient is given by 
 \begin{equation}\label{eq:thegrad} \nabla F(\w) = \Sigma (\w-\v(\w)),\end{equation}
where we denote $\v(\w)=\arg\min_{v\in A_{\theta_1,\theta_2}}(\w-\v)^\top\Sigma(\w-\v)$.
Next, observe that for any $\w,\w'$ we have $\|\v(\w)-\v(\w')\|^2 \leq (\w-\w')^\top \Sigma (\w-\w') \leq \tfrac32 \|\w-\w'\|^2$ as $\v(\w)$ is the projection of $\w$ onto $A_{\theta_1,\theta_2}$ with respect to the norm $\|x\|^2 = x^\top \Sigma x$, and since projections are contracting distances. Then,
\begin{align*}
    \| \nabla F(\w) - \nabla F(\w') \|
    &\leq
    \|\Sigma\| \big( \|\w-\w'\| + \|\v(\w)-\v(\w')\| \big)
    \\
    &\leq
    \big( \tfrac32 + (\tfrac32)^{3/2} \big) \|\w-\w'\|
    \\
    &\leq
    4\|\w-\w'\|
    .
\end{align*}
Also, since $\v(0)=(0,\theta_2)$. We obtain that $\|\nabla F(0)\|\le \theta_2 \frac{\sqrt{5}}{2}$. Thus from smoothness we also get that for any $\w\in \cW$, we have that $\|\nabla F(\w)\|\le \frac{\sqrt{5}}{2}+20\le 22$. This proves that indeed $F$ is convex, $4$-smooth and $22$-Lipschitz.

To next prove the statement, we begin with the following analysis for trajectory of GD over the function $F_{\theta_1,\theta_2}$.

\begin{lemma} \label{cl:trajectory_analysis} 
Let $\w^{(1)},...,\w^{(T)}$ be the sequence defined by running unprojected GD (i.e., with $\cW= \real^d$) over $F$ with step size $\eta \leq \frac{1}{3}$, starting from $\w^{(1)} = 0$ for $T$ iterations. Then there exist $\frac{1}{2\eta}\leq t_0 \leq \frac{3}{\eta} ,t_0\le t_1\le t_0+ \frac{7}{\eta}$ s.t.:
\begin{align*}
    \textrm{for~$1 \le t\le t_0$:}&\quad
    \w^{(t)} = \big(I-(I-\eta\Sigma)^{t-1}\big) \lboundary&\numberthis\label{eq:wtt0}& \quad
    \textrm{where $\lboundary=(0,\theta_2)$;}\\
    \textrm{for $t_0<t\le t_1$:}& \quad  \w^{(t)}= \begin{pmatrix}
     w^{(t_0)}_1 \\
    \left(1-\frac{3\eta}{4}\right)^{t-t_0}\big(w_2^{(t_0)}-\thetatwo\big)+\thetatwo
    \end{pmatrix};&\numberthis\label{eq:wtt0t1}
    \\
    \textrm{for $t_1<t \le T$:}&\quad
    \w^{(t)} = \big(I-(I-\eta \Sigma)^{t-t_1}\big) \rboundary+ \left(I-\eta \Sigma\right)^{t-t_1} \w^{(t_1)} &\numberthis\label{eq:wtt1}&\quad
    \textrm{where $\rboundary=(\theta_1,\thetatwo)$.}
\end{align*}
\end{lemma}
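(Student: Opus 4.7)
The plan is to classify the plane according to where the optimal $\v(\w) \in A_{\theta_1,\theta_2}$ sits, derive a closed-form gradient in each region, and then show by induction that the iterates traverse three consecutive regions in order. Computing the unconstrained minimizer of the scalar map $a \mapsto (\w-(a,\theta_2))^\top\Sigma(\w-(a,\theta_2))$ gives $a^\star(\w)=w_1+\tfrac12(w_2-\theta_2)$, so $\v(\w)=\xi_0$ when $a^\star(\w)\le 0$, $\v(\w)=(a^\star(\w),\theta_2)$ when $a^\star(\w)\in[0,\theta_1]$, and $\v(\w)=\xi_1$ when $a^\star(\w)\ge \theta_1$. Using $\nabla F(\w)=\Sigma(\w-\v(\w))$ from \eqref{eq:thegrad} and the particular form of $\Sigma$, the gradient is $\Sigma(\w-\xi_0)$ in region~1, equal to $(0,\tfrac34(w_2-\theta_2))$ in region~2 (the $e_1$-component vanishes by the first-order condition defining $a^\star$), and $\Sigma(\w-\xi_1)$ in region~3.

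Phase 1 is a linear recursion $\w^{(t+1)}=(I-\eta\Sigma)\w^{(t)}+\eta\Sigma\xi_0$ starting from $\w^{(1)}=0\in$ region~1, and its solution is precisely \eqref{eq:wtt0} by induction. To locate $t_0$ I will diagonalize $\Sigma$ (eigenvalues $\tfrac12$ and $\tfrac32$, eigenvectors $(1,\pm1)/\sqrt2$), express $\xi_0$ in that basis, and derive the closed form
\begin{equation*}
a^\star(\w^{(t)})= \frac{\theta_2}{4}\Bigl[(1-\eta/2)^{t-1}-3(1-3\eta/2)^{t-1}\Bigr].
\end{equation*}
Monotonicity in $t$ and the elementary estimate $(1-\eta/2)/(1-3\eta/2)=1+\eta/(1-3\eta/2)\in[1+\eta,1+2\eta]$ for $\eta\le 1/3$ then pin $t_0-1$ between $\tfrac12\log 3/\eta$ and $\log 3/\eta$, giving $t_0\in[1/(2\eta),3/\eta]$. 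The same expansion shows $w_1^{(t_0)}=\tfrac{\theta_2}{3}(1-\eta/2)^{t_0-1}\approx \theta_2/(3\sqrt 3)\approx 0.19\,\theta_2$, which is strictly larger than $\theta_1\le 0.025\,\theta_2$; this is exactly what makes phase~2 non-degenerate and bounded in length.

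Phase 2 follows by induction using the region-2 gradient $(0,\tfrac34(w_2-\theta_2))$: $w_1^{(t)}$ is frozen at $w_1^{(t_0)}$ and $w_2^{(t)}-\theta_2$ contracts by the factor $(1-3\eta/4)$ per step, yielding \eqref{eq:wtt0t1}. Because $w_2^{(t)}-\theta_2<0$ and $w_2^{(t)}\nearrow\theta_2$, the quantity $a^\star(\w^{(t)})=w_1^{(t_0)}+\tfrac12(w_2^{(t)}-\theta_2)$ increases monotonically from $\approx 0$ toward $w_1^{(t_0)}>\theta_1$, and we must stay in region~2 until it first hits $\theta_1$. Solving the resulting scalar equation $(1-3\eta/4)^{t_1-t_0}=2(w_1^{(t_0)}-\theta_1)/(\theta_2-w_2^{(t_0)})$ and plugging in the phase-1 estimates of $w_1^{(t_0)}$ and $\theta_2-w_2^{(t_0)}$ (both $\Theta(\theta_2)$, with a ratio bounded away from $1$) gives $t_1-t_0\le 7/\eta$.

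Phase 3 is again a linear recursion $\w^{(t+1)}=(I-\eta\Sigma)\w^{(t)}+\eta\Sigma\xi_1$, whose solution from initial condition $\w^{(t_1)}$ is exactly \eqref{eq:wtt1}; since $I-\eta\Sigma$ is a strict contraction and $\xi_1$ is its fixed point, $a^\star(\w^{(t)})$ can only increase further, so region~3 is absorbing and the induction carries through for all $t\in(t_1,T]$. The main bookkeeping obstacle is purely Phase 1/Phase 2 arithmetic: verifying that $t_0$ really falls in $[1/(2\eta),3/\eta]$ and that $w_1^{(t_0)}$ exceeds $\theta_1$ by a definite margin, both of which reduce to the uniform eigenvalue bounds on $I-\eta\Sigma$ for $\eta\le 1/3$ combined with the structural hypothesis $\theta_1\le 0.025\,\theta_2$.
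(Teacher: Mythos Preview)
Your decomposition into three regions via $a^\star(\w)=w_1+\tfrac12(w_2-\theta_2)$ and the closed-form gradients in each region are exactly right, and your Phase~1 and Phase~2 arguments match the paper's (your numerical estimate $w_1^{(t_0)}\approx 0.19\,\theta_2$ is somewhat optimistic---the paper only extracts $w_1^{(t_0)}\ge 2^{-5}\theta_2$ from $t_0\le 3/\eta$---but either way it beats $\theta_1\le 0.025\,\theta_2$, which is all that is needed).

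The real gap is in Phase~3. Your justification ``since $I-\eta\Sigma$ is a strict contraction and $\xi_1$ is its fixed point, $a^\star(\w^{(t)})$ can only increase further, so region~3 is absorbing'' does not hold. The fixed point $\xi_1$ lies \emph{on} the boundary hyperplane $\{a^\star=\theta_1\}$, and the normal $(1,\tfrac12)$ to that hyperplane is not an eigenvector of $I-\eta\Sigma$ (the eigenvectors are $(1,\pm1)$). Hence the half-space $\{a^\star\ge\theta_1\}$ is not invariant under the contraction in general: for instance, a point on the boundary with $w_2>\theta_2$ gets pushed \emph{out} of region~3 after one step. Moreover, $a^\star$ is not monotone along the Phase~3 trajectory; a direct computation gives
\[
a^\star(\w^{(t+1)})-a^\star(\w^{(t)})=-\eta\Bigl[\tfrac54(w_1^{(t)}-\theta_1)+(w_2^{(t)}-\theta_2)\Bigr],
\]
which can have either sign depending on where the iterate sits relative to $\xi_1$.

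What actually makes region~3 absorbing is the specific initial condition inherited from Phase~2, namely $w_2^{(t_1)}<\theta_2$ together with the boundary inequality $2(w_1^{(t_1)}-\theta_1)\ge \theta_2-w_2^{(t_1)}$. The paper expands $(1,\tfrac12)(I-\eta\Sigma)^{t-t_1}(\w^{(t_1)}-\xi_1)$ in the eigenbasis, obtaining a combination of $(1-\tfrac32\eta)^{t-t_1}$ and $(1-\tfrac12\eta)^{t-t_1}$ whose coefficients, after invoking those two facts, reduce to $\tfrac34\bigl[(1-\tfrac32\eta)^{t-t_1}-(1-\tfrac12\eta)^{t-t_1}\bigr](w_2^{(t_1)}-\theta_2)\ge 0$. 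You need this explicit eigen-computation (or an equivalent argument exploiting $w_2^{(t_1)}<\theta_2$) to close the induction; the contraction property alone is not enough.
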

\cref{cl:trajectory_analysis} is the most technical part of the proof, and follows a careful step-by-step analysis of the trajectory of GD over the function $F$; we defer its proof to later in this section and proceed with the proof of \cref{lem:noneuclid}. We also complement the proof with a ``proof by picture'' and a schematic description of the trajectory in \cref{fig:trajectory3} 

\begin{figure}[t]
    \centering
    \includegraphics[width=0.6\textwidth]{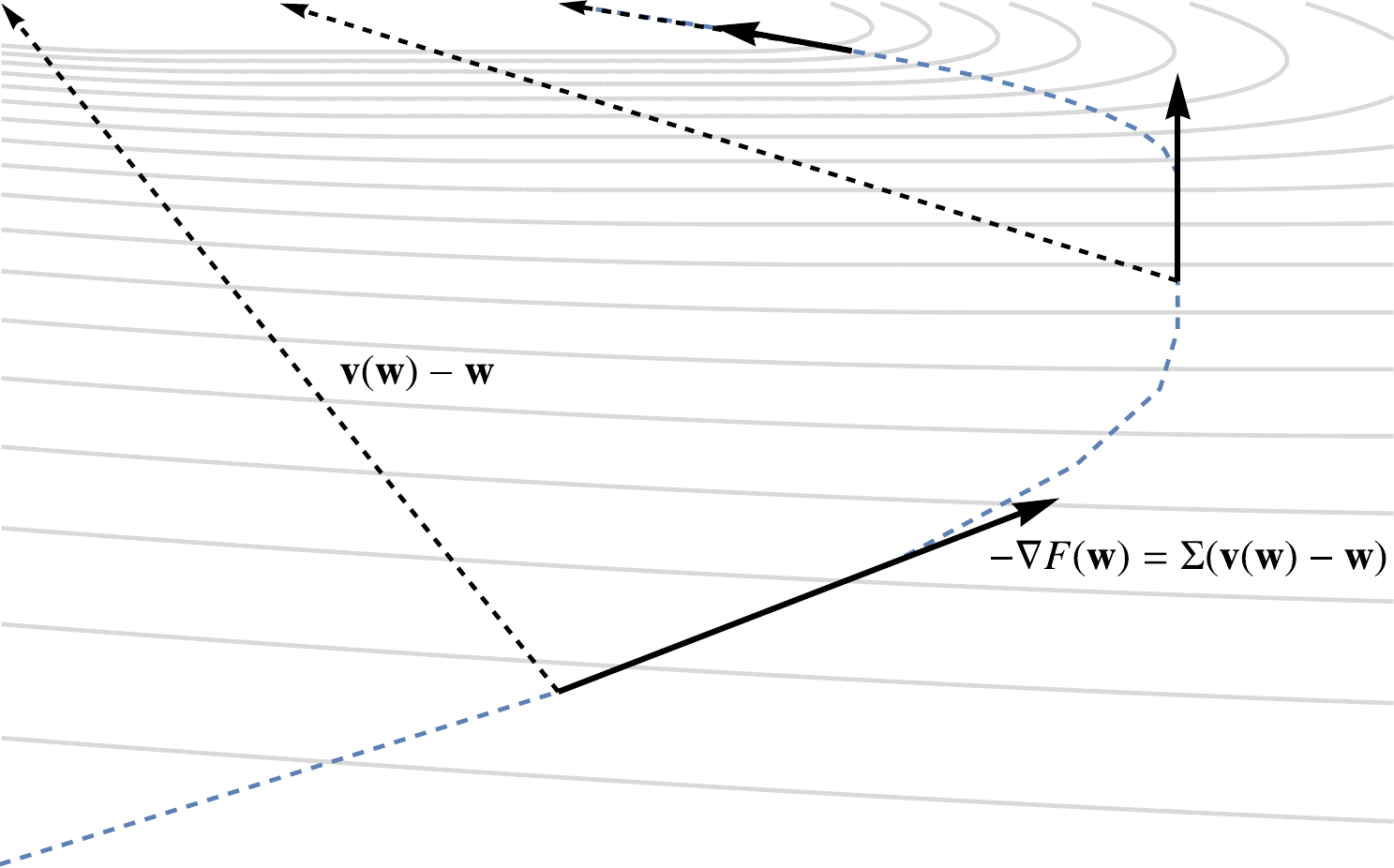}
    \caption{\small We depict here the trajectory being analyzed in \cref{cl:trajectory_analysis}. The trajectory comprises of three phases. In each phase the gradient at point $\w^{(t)}$ is determined by the vector $v(\w^{(t)})$ which is the closest vector to $\w^{(t)}$ on the set $A$, w.r.t the matrix norm induced by $\Sigma$ (see \cref{eq:thegrad}).
     At the first phase, the vector $\v(\w^{(t)})$ is the boundary point $(0,1)$. More generally, the closest point to $\w^{(t)}$ on the interval $\{(\beta,1): -\infty\le \beta\le \infty\}$ is left to $(0,1)$ due to  the linear transformation $\Sigma$. As such $(0,1)$ is the closest point on $A$. The gradient is $\nabla_{\w} =\Sigma (\w-v(\w))$, and points upwards and right.
     This phase continues until $\w^{(t)}$ has moved towards the interior and the closest point $\v(\w^{(t)})$ starts to be at the interior of $A$, then gradient points upwards (note that when the closest point is at the interior of the interval then, horizontally, the distance to the closest point remains constant hence the gradient is vertical).  Finally, the closest point to $\w^{(t)}$ returns to be a boundary point $(\theta_1,1)$ and $\w^{(t)}$ starts to converge towards $(\theta_1,1)$.
    }
    \label{fig:trajectory3}
\end{figure}

\begin{proof}[Proof of \cref{lem:noneuclid}]
We next set out to show that if we run GD on $F$ with any step-size $0<\eta<1/3$, then \[\|\w_{F}-\rboundary\|\le \frac{120}{\eta T}\] and $F(0,\theta_2)=F(\theta_1,\theta_2)=0$. Then, as discussed at the beginning the result follows by rescaling $F$ to obtain a $1$-Lipschitz and smooth function $F$.

We thus proceed with the proof. The fact that $F(0,\theta_2)=F(\theta_1,\theta_2)=0$ is immediate from definitions.

Next, we bound the sizes $\|\lboundary\|, \|\rboundary\|,\|\w^{(t)}\|$ for the setting depicted in \cref{cl:trajectory_analysis}. In particular when $\cW=\real^d$ and no projection steps occur. 
One can easily observe that  $\|\rboundary\|,\|\lboundary\|< 1.5$. Following the trajectory path of $\w^{(t)}$, provided in \cref{cl:trajectory_analysis}, we can also provide a bound on $\w^{(t)}$:
\begin{itemize}
    \item if $t\le t_0$ we have that $\|\w^{(t)}\|<\|\lboundary\|\leq1$;
    \item if $t_0\le t\le t_1$, then $\|\w^{(t)}\|\le \|\w^{(t_0)}\|+ \theta_2 \le 2$;
    \item and if $t\ge t_1$ we have that $\|\w^{(t)}\|\le \|\w^{(t_1)}\|+\|\rboundary\| < 5$.
\end{itemize}
Taken together we have that $\|\w^{(t)}\|<5$. As such, one can show that for any set $\cW$, not necessarily $\cW=\real^d$, as long as $\{\w: \|\w\|\le 5\}\subseteq \cW$ then \cref{cl:trajectory_analysis} holds. Indeed, in any such case running GD or GD without projection is completely equivalent.

Finally, by simple calculation we can show that the singular values of $\Sigma$ are $3/2$ and $1/2$. Hence,
\begin{align}\label{eq:Sigma_spectral}
    \|I-\eta \Sigma\| \leq  1-\frac{\eta}{2}.
\end{align}
where $\|\cdot\|$ denotes the spectral (operator) norm.
We are now ready to show that $\w_{F}$ converges to $\rboundary$:
\begin{align*}
    \|\w_{F}-\rboundary\|
    &\leq \frac{1}{T}\sum_{t=1}^T \|\wt-\rboundary\|
    \\
    & = 
    \frac{1}{T}\sum_{t=1}^{t_1}\|\wt-\rboundary\| +\frac{1}{T}\sum_{t=t_1+1}^{T} \|\wt-\rboundary\|
    \\
    & \leq 
    \frac{10t_1}{T}+\frac{1}{T}\sum_{t=t_1+1}^{T} \|\wt-\rboundary\| \tag{$\|\w^{(t)}\|,\|\rboundary\|<5$} 
    \\
  &=
    \frac{100}{\eta T}+\frac{1}{T}\sum_{t=t_1+1}^T\|(1-\eta\Sigma)^{t-t_1} (\w^{(t_1)}-\rboundary)\| 
    \tag{$t_1<10/\eta$; \cref{eq:wtt1}}
    \\
     & \leq
    \frac{100}{\eta T}
    +\frac{1}{T}\sum_{t=t_1+1}^T\|(I-\eta \Sigma)^{t-t_1}\| \cdot \|\w^{(t_1)}-\rboundary\| \\
    &\le
        \frac{100}{\eta T}
        +\frac{10}{T}\sum_{t=1}^{T-t_1}\|I-\eta \Sigma\|^{t} \tag{$\|\w^{(t_1)}-\rboundary\|<10$}
        \\ 
    &\le
        \frac{100}{\eta T}
        +\frac{10}{T}\sum_{t=0}^{\infty}\Big(1- \frac{\eta}{2}\Big)^{t}  \tag{\cref{eq:Sigma_spectral}}
        \\   
    &\le \frac{100}{\eta T}
    +\frac{10}{T}\cdot \frac{2}{\eta} 
    \leq \frac{120}{\eta T} 
    .
\end{align*}
\end{proof}

\subsubsection{Proof of \cref{cl:trajectory_analysis}}
Computing $\v(\w)$, and from \cref{eq:thegrad} we obtain the following expressions for the gradient

\begin{align}\label{property:derivative}
    \nabla F(\w)=
    \begin{cases}
    \Sigma(\w- (0,\theta_2))& 
    \text{if } \phantom{b_2<}w_1 +\frac{1}{2}(w_2-\theta_2) <0;\\
    \tfrac34 \vectortwo{0}{w_2-\theta}&
    \text{if } 0<w_1 +\frac{1}{2}(w_2-\theta_2) <\theta_1;\\
     \Sigma (\w-(b_2,\theta_2))&
    \text{if } \theta_1 \le w_1 +\frac{1}{2}(w_2-\theta_2) \phantom{\ge b_2}.
    \end{cases}
\end{align}

We thus obtain two boundary conditions that governs the behavior of the trajectory:
\begin{align}
    \label{eq:boundary1}
    w_1 + \tfrac{1}{2} w_2 &< \tfrac{1}{2} \theta_2;
    \\
    \label{eq:boundary2}
    w_1 + \tfrac{1}{2} w_2 &\geq \tfrac{1}{2} \theta_2 + \theta_1.
\end{align}
Given $\eta$, we claim that \cref{cl:trajectory_analysis} holds if we let $t_0$ denote the first iterate such that $\w^{(t_0)}$ violates \cref{eq:boundary1}, when running GD, and if $t_1$ denotes the first iterate for which $\w^{(t)}$ satisfies \cref{eq:boundary2}. We will split the proof into 3 parts, according to GD's trajectory, i.e. $t\leq t_0, t_0< t\leq t_1, t>t_1$.

\begin{claim} \label{cl:before_t0}
There exists $\frac{1}{2\eta}\le t_0 \le \frac{3}{\eta}$ such that $\w^{(t_0)}$ is the first iterate that violates \cref{eq:boundary1}. Further, for any $t\leq t_0$, $\wt$ can be calculated by \cref{eq:wtt0}. And finally, $0.03 \thetatwo\leq \wonet{t_0}$.
\end{claim}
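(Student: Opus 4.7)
The plan is to solve the GD recursion explicitly on the first phase and then read off both $t_0$ and $w_1^{(t_0)}$ from the closed form. As long as $\w^{(t')}$ satisfies \cref{eq:boundary1}, the gradient is the top case of \cref{property:derivative}, so GD becomes the affine recursion $\w^{(t'+1)} = (I - \eta\Sigma)\w^{(t')} + \eta\Sigma\lboundary$. Starting from $\w^{(1)} = 0$, a short induction after shifting by $\lboundary$ gives exactly \cref{eq:wtt0}; the formula remains valid at $t = t_0$ itself because $\w^{(t_0-1)}$ still lies in the first region by definition of $t_0$.

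I would then diagonalize $\Sigma$, whose eigenvalues are $3/2, 1/2$ with eigenvectors $(1,1)^\top, (1,-1)^\top$, and decompose $\lboundary = \tfrac{\theta_2}{2}(1,1)^\top - \tfrac{\theta_2}{2}(1,-1)^\top$. Setting $p_t := (1-\eta/2)^{t-1}$ and $q_t := (1-3\eta/2)^{t-1}$, \cref{eq:wtt0} reads $w_1^{(t)} = \tfrac{\theta_2}{2}(p_t - q_t)$ and $w_2^{(t)} = \tfrac{\theta_2}{2}(2 - p_t - q_t)$, so substituting into \cref{eq:boundary1} reduces the boundary condition cleanly to $p_t < 3 q_t$, equivalently $\beta^{t-1} < 3$ where $\beta := (1-\eta/2)/(1-3\eta/2) > 1$. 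Monotonicity of $t \mapsto \beta^{t-1}$ makes $t_0 = \lceil 1 + \ln 3/\ln \beta\rceil$ unambiguously the first integer crossing. The sandwich $\eta/2 \le \ln\beta \le 2\eta$ (from $\beta \in [1+\eta, 1+2\eta]$ for $\eta \le 1/3$, since $1 - 3\eta/2 \ge 1/2$, combined with $x/2 \le \ln(1+x) \le x$ on $[0,1]$) then yields $1/(2\eta) \le t_0 \le 3/\eta$ after a small numerical check using $\eta \le 1/3$ and $\ln 3 < 1.1$.

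The main obstacle is the lower bound $w_1^{(t_0)} \ge 0.03\,\theta_2$, since $t_0$ is a ceiling with no closed form. The clean bridge I would use is the two-point identity $w_1^{(t_{\mathrm{cont}})} = w_1^{(t_{\mathrm{cont}}+1)} = \theta_2 q_{t_{\mathrm{cont}}}$ at the continuous crossing $t_{\mathrm{cont}} := 1 + \ln 3/\ln\beta$: here $p_{t_{\mathrm{cont}}} = 3 q_{t_{\mathrm{cont}}}$, hence both $p_t - q_t$ and $p_t(1-\eta/2) - q_t(1-3\eta/2)$ collapse to $2 q_{t_{\mathrm{cont}}}$. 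The continuous extension of $w_1$ is unimodal (its derivative $q_t v - p_t u$, with $u := -\ln(1-\eta/2)$ and $v := -\ln(1-3\eta/2)$, has a single zero where $\beta^{t-1} = v/u$), and the two equal endpoint values force its unique mode $t_{\max}$ to lie strictly inside $(t_{\mathrm{cont}}, t_{\mathrm{cont}}+1)$ (otherwise $w_1$ would be monotone on that interval, contradicting equal endpoints). Since $t_0 \in [t_{\mathrm{cont}}, t_{\mathrm{cont}}+1)$ is the smallest integer $\ge t_{\mathrm{cont}}$, unimodality gives $w_1^{(t_0)} \ge \theta_2 q_{t_{\mathrm{cont}}}$. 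Finally, $q_{t_{\mathrm{cont}}} = 3^{-v/(v-u)}$, and the inequality $v \ge 3u$ (which unfolds to $(1-\eta/2)^3 \ge 1-3\eta/2$, a direct expansion) yields $v/(v-u) \le 3/2$, so $q_{t_{\mathrm{cont}}} \ge 3^{-3/2} = 1/(3\sqrt{3}) > 0.19$, comfortably above the $0.03$ demanded by the claim.
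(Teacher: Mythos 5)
Your proof is correct. The first two assertions (the closed form \cref{eq:wtt0} on the first region, and the bounds $1/(2\eta) \le t_0 \le 3/\eta$) are handled essentially as in the paper: you both reduce \cref{eq:boundary1} to $p_t < 3q_t$ (equivalently $\beta^{t-1} < 3$ with $\beta = \tfrac{2-\eta}{2-3\eta}$, cf.~\cref{eq:kill_this_paper}), and your sandwich $\eta/2 \le \ln\beta \le 2\eta$ is just a tidier packaging of the paper's use of $\ln(1+x)\le x$ and $(1+2/n)^n \ge 3$.

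Where you genuinely depart from the paper is the lower bound on $w_1^{(t_0)}$. The paper combines the threshold inequality $p_{t_0} \ge 3 q_{t_0}$ with the crude bound $t_0 \le 3/\eta$ to get $w_1^{(t_0)} \ge \tfrac{\theta_2}{4}(1-\eta/2)^{3/\eta} \ge 2^{-5}\theta_2$. You instead exploit the exact two-point identity $w_1(t_{\mathrm{cont}}) = w_1(t_{\mathrm{cont}}+1) = \theta_2\, q_{t_{\mathrm{cont}}}$ at the continuous crossing $\beta^{t_{\mathrm{cont}}-1}=3$, note that the continuous extension $t\mapsto \tfrac{\theta_2}{2}(p_t-q_t)$ is strictly unimodal (its derivative $\tfrac{\theta_2}{2}(vq_t-up_t)$ has a single sign change), and conclude that the mode sits in $(t_{\mathrm{cont}},t_{\mathrm{cont}}+1)$, so $w_1^{(t_0)}\ge\theta_2 q_{t_{\mathrm{cont}}}$ for the integer $t_0\in[t_{\mathrm{cont}},t_{\mathrm{cont}}+1)$. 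The closing inequality $v\ge 3u$ (i.e. $(1-\eta/2)^3\ge 1-3\eta/2$) then gives $q_{t_{\mathrm{cont}}}=3^{-v/(v-u)}\ge 3^{-3/2}\approx 0.19$, which is substantially sharper than the paper's $2^{-5}\approx 0.031$ and makes no use of the upper bound on $t_0$. This is a nicer argument: it localizes $t_0$ only up to an interval of length one rather than up to a multiplicative constant, and the resulting constant is tighter by a factor of roughly six. Both proofs establish the stated $0.03\,\theta_2$ threshold.
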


\begin{proof}
First note that $\w^{(1)}$ satisfies \cref{eq:boundary1}, hence $t_0 \geq 1$. Now, following the calculation of the derivative provided in,  \cref{property:derivative} we obtain the update step
 $\w^{(t+1)}=\w^{(t)}-\eta \Sigma (\wt-\lboundary)$ which we can rewrite as
\begin{equation} \label{recursion}
    \w^{(t+1)}=(I-\eta \Sigma )\wt + \eta \Sigma \lboundary.
\end{equation}
By induction one can show that for $2\le t \le t_0$: 
\begin{align*} 
\w^{(t)} 
&=\sum_{i=0}^{t-2} (I-\eta \Sigma)^i \cdot (\eta \Sigma \lboundary) \\
&= \big(I-\left(I-\eta\Sigma\right)^{t-1}\big) \lboundary. \numberthis \label{eq:induction_claim}
\end{align*}
This shows that for any $t\le t_0$, $\w^{(t)}$ can be calculated by \cref{eq:wtt0}. We proceed with the proof to show that $\frac{1}{2\eta}\le t_0\le \frac{3}{\eta}$.
Considering the singular value decomposition of $\Sigma$ one can show that:
\begin{equation}\label{eq:o-sigmat}
(I-\eta \Sigma)^{t-1}=\frac{1}{2} \begin{pmatrix}
(1-\tfrac32\eta)^{t-1}+(1-\tfrac12\eta)^{t-1} & (1-\tfrac32\eta)^{t-1}-(1-\tfrac12\eta)^{t-1} \\
(1-\tfrac32\eta)^{t-1}-(1-\tfrac12\eta)^{t-1} & (1-\tfrac32\eta)^{t-1}+(1-\tfrac12\eta)^{t-1}
\end{pmatrix}.\end{equation}
Plugging this in \cref{eq:induction_claim}, we obtain that for any $t\le t_0$:
\begin{equation} \label{eq:first_trajectory_vector}
    \wt
    =
    \frac{\thetatwo}{2} \begin{pmatrix}
    (1-\tfrac12\eta)^{t-1} - (1-\tfrac32\eta)^{t-1} \\
    2 - (1-\tfrac12\eta)^{t-1} +(1-\tfrac32\eta)^{t-1}
    \end{pmatrix}
    .
\end{equation}
To obtain the lower bound on $t_0$ observe that $t_0$ satisfies: 
$$w^{(t_0)}_1 + \frac{1}{2}(w^{(t_0)}_2-\thetatwo) \geq 0 ,$$
Plugging \cref{eq:first_trajectory_vector} and dividing by $\theta_2/2$ we obtain that:
\[ \left(1-\frac{\eta}{2}\right)^{t_0-1} -\left(1-\frac{3\eta}{2}\right)^{t_0-1}+\frac{1}{2}\left(2- \left(1-\frac{\eta}{2}\right)^{t_0-1} -\left(1-\frac{3\eta}{2}\right)^{t_0-1}-2\right)
\ge 0.\]
Rearranging terms we get:
\[\frac{1}{2}\left(1-\frac{\eta}{2}\right)^{t_0-1}-\frac{3}{2}\left(1-\frac{3\eta}{2}\right)^{t_0-1}\ge 0,\]
which for $\eta<1/3$, can be rewritten as:
\begin{equation}\label{eq:kill_this_paper}\left(1 + \frac{2\eta}{2-3\eta}\right)^{t_0-1}=\left(\frac{2-\eta}{2-3\eta}\right)^{t_0-1} \geq 3.\end{equation}
This leads to
\begin{align*}
    t_0&\ge \frac{1}{\ln (1+\frac{2\eta}{2-3\eta})}
    \tag{$\ln (3) \ge 1$}\\
    &\ge 
    \frac{2-3\eta}{2\eta} 
    \tag{$\ln(x+1)\le x$} \\
    &= \frac{1}{\eta} -\frac{3}{2}\\
    & \ge \frac{1}{2\eta}. 
    \tag{$\eta \le \frac{1}{3}$}
\end{align*}
Next we provide an upper bound for $t_0$. Again, for every $t<t_0$ \cref{eq:boundary1} is satisfied, which, as we already saw (recall \cref{eq:kill_this_paper}) means that for every $t< t_0$:
\begin{equation} \label{eq:upperbound_t0}
    \forall t<t_0, \quad \left(1+\frac{2\eta}{2-3\eta}\right)^{t-1} \leq 3.
\end{equation}
Using the inequality $(1+2/n)^n \ge 3$, we obtain\begin{align*}
 \left(1+ \frac{2\eta}{2-3\eta}\right)^{t-1}
    \ge
    \left( 1+\eta \right)^{t-1}
     \geq 
    3^{\frac{\eta}{2} (t-1)} 
    .
\end{align*}
In particular for $t \ge \frac{2}{\eta}+1$  \cref{eq:upperbound_t0} is violated and hence $t_0 \le \frac{3}{\eta}$. 

Finally, we provide a lower bound for $w_1^{(t_0)}$. Namely, we want to show that $w_1^{(t_0)}\geq0.04\theta_2$.
First, by rearranging terms at \cref{eq:kill_this_paper} we obtain that
 $t_0$ is sufficiently large so that
 $\left(1-\frac{\eta}{2}\right)^{t_0-1}\ge 3\left(1-\frac{3\eta}{2}\right)^{t_0-1}$.
Again applying the formula for $\w^{(t_0)}$ in \cref{eq:first_trajectory_vector} we have that:
\begin{align*}
w_1^{(t_0)}=\frac{\thetatwo}{2}\cdot[(1-\frac{1}{2}\cdot \eta)^{t_0-1} - (1-\frac{3}{2}\cdot\eta)^{t_0-1}] 
&\ge \frac{\theta_2}{4} \left(1-\frac{1}{2}\eta\right)^{t_0-1}\\
& \ge
\frac{\theta_2}{4} \left(1-\frac{1}{2}\eta\right)^{3/\eta} & t_0<\frac{3}{\eta}\\
& \ge 2^{-5}\theta_2. &\left(1-\frac{1}{2n}\right)^n >\frac{1}{2} 
 \numberthis \label{w1t0}
\end{align*}
%
%
This concludes the analysis of the first phase of the trajectory.
\end{proof}

We next move on to the case $t_0\le t\le t_1$.

\begin{claim} \label{cl:between_t0_t1}
Let $t_0 \le t \leq t_1$. Then $\wt$ can be calculated by \cref{eq:wtt0t1}. Moreover $t_1\leq t_0+\frac{7}{\eta}$.
\end{claim}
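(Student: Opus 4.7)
The plan has two parts: first, to derive the closed-form \cref{eq:wtt0t1} for $\w^{(t)}$ in the middle phase; second, to bound $t_1 - t_0 \le 7/\eta$.

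For the closed form, I will observe that for every $t$ with $t_0 \le t < t_1$ the iterate $\w^{(t)}$ falls in the middle regime of \cref{property:derivative}: by definition of $t_0$, \cref{eq:boundary1} is violated at $t_0$; by definition of $t_1$, \cref{eq:boundary2} is not yet satisfied. In this regime the gradient is purely vertical, $\nabla F(\w^{(t)}) = \tfrac34 \vectortwo{0}{w_2^{(t)} - \theta_2}$, so the update decouples into $w_1^{(t+1)} = w_1^{(t)}$ and $w_2^{(t+1)} - \theta_2 = (1 - \tfrac{3\eta}{4})(w_2^{(t)} - \theta_2)$. A trivial induction on $t$ then yields \cref{eq:wtt0t1}. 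Implicit in this argument is the claim that the trajectory really does stay in the middle regime throughout $t_0 \le t \le t_1$; but since $w_1^{(t)}$ is constant in $t$ and $|w_2^{(t)} - \theta_2|$ decays geometrically toward zero, the scalar $w_1^{(t)} + \tfrac12(w_2^{(t)} - \theta_2)$ is monotonically non-decreasing, so it stays non-negative (keeping \cref{eq:boundary1} violated) and remains strictly below $\tfrac12\theta_2 + \theta_1$ until $t = t_1$ by the minimality of $t_1$.

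For the upper bound on $t_1 - t_0$, I read off the condition that defines $t_1$: plugging \cref{eq:wtt0t1} into \cref{eq:boundary2}, $t_1$ is the smallest $t > t_0$ with
\[
    (1 - \tfrac{3\eta}{4})^{t - t_0}\,\bigl(\theta_2 - w_2^{(t_0)}\bigr) \;\le\; 2\,\bigl(w_1^{(t_0)} - \theta_1\bigr).
\]
The two key ingredients for the bound are the lower bound $w_1^{(t_0)} \ge 2^{-5}\theta_2$ established at the end of \cref{cl:before_t0}, and the hypothesis $\theta_1 \le 0.025\,\theta_2 < 2^{-5}\theta_2$ from the statement of \cref{lem:noneuclid}. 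Together these imply $w_1^{(t_0)} - \theta_1 \ge (2^{-5} - 0.025)\theta_2$, a positive absolute constant multiple of $\theta_2$. The left-hand factor $\theta_2 - w_2^{(t_0)}$ is at most $\theta_2/2$ by \cref{eq:first_trajectory_vector}. Applying the elementary inequality $(1-x)^n \le e^{-nx}$ for $x \in (0,1)$ then reduces the displayed condition to a threshold of the form $t - t_0 \ge c/\eta$ for an absolute constant $c$; a direct numerical estimate shows $c \le 7$ suffices.

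The only subtle point is the robustness of the gap $w_1^{(t_0)} - \theta_1 > 0$: it relies on the numerical slack between the hypotheses $\theta_1 \le 0.025\,\theta_2$ and $w_1^{(t_0)} \ge 2^{-5}\theta_2$, and is what makes the geometric contraction factor $1 - 3\eta/4$ enough to cross the threshold within $O(1/\eta)$ iterations. Once this gap is quantified, the rest is routine arithmetic and a one-line induction on top of the middle-regime gradient formula.
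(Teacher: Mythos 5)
Your proposal is correct and takes essentially the same approach as the paper: read off the middle-regime gradient from \cref{property:derivative}, solve the decoupled recursion by induction to obtain \cref{eq:wtt0t1}, and bound $t_1-t_0$ by combining the lower bound $w_1^{(t_0)}-\theta_1 \ge (2^{-5}-0.025)\,\theta_2$ from \cref{cl:before_t0} with the geometric contraction $(1-x)^n \le e^{-nx}$. You go slightly further than the paper in explicitly justifying that the trajectory stays in the middle regime throughout the phase (via monotonicity of $w_1^{(t)}+\tfrac12(w_2^{(t)}-\theta_2)$, which is constant in the first coordinate and increasing in the second since $w_2^{(t)}-\theta_2<0$ and shrinks in magnitude), a step the paper leaves implicit.
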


\begin{proof}
We again apply the calculation of the derivative provided in  \cref{property:derivative} at $t_0 \le t\le t_1$  and obtain : 
\begin{equation} \label{eq:gradient_region2}
    \nabla F(\wt)=\begin{pmatrix}
0 \\ \frac{3}{4}(\wttwo-\thetatwo)
\end{pmatrix}.
\end{equation}
Note that this proves that $w_1^{(t)}= w_1^{(t_0)}$. For $w_2^{(t)}$, we have that
\[\wttwo=w_2^{(t-1)}(1-\frac{3}{4}\eta)+\frac{3}{4} \cdot \eta \cdot \thetatwo,\] which leads by induction to the following:
\begin{align*}\wttwo &=\left(1-\frac{3}{4}\eta\right)^{t-t_0}w_2^{(t_0)}+\sum_{i=0}^{(t-t_0)-1}\left(1-\frac{3}{4}\eta\right)^i \cdot \frac{3\eta\thetatwo}{4}\\
&= \left(1-\frac{3}{4}\eta\right)^{t-t_0}w_2^{(t_0)}+ \left(1-\left(1-\frac{3}{4}\eta\right)^{t-t_0}\right)\theta_2\\
&=\left(1-\frac{3}{4}\eta\right)^{t-t_0}\left(w_2^{(t_0)}-\thetatwo\right)+\thetatwo.
\end{align*}
This shows that for any $t_0\le t\le t_1$ \cref{eq:wtt0t1} holds. 

We next bound $t_1$. Recall that $t_1$ is defined to be the first iterate for which \cref{eq:boundary2} is satisfied. Let us show that for any $t$ s.t $t_0+\frac{7}{\eta}<t$ holds,  \cref{eq:boundary2} is satisfied and hence $t_1\le t_0+7/\eta$. Equivalently we will show that for $t>t_0+7/\eta$, the following equation holds:
\begin{equation} \label{eq:t_1condition}
    \thetatwo-\wtwot{t} \leq    2(\wonet{t}-\theta_1).  
\end{equation}
Indeed, let $t<t_1$, then
\begin{align*}
    2\cdot(\wonet{t}-\theta_1)&= 2\cdot(\wonet{t_0}-\theta_1) & (w_1^{(t)}=w_{1}^{(t_0)} \textrm{~by~} \cref{eq:wtt0t1})  \\
    &\geq 2\cdot (2^{-5}\cdot \thetatwo - \theta_1)&(w_1^{(t_0)}\ge 2^{-5}\theta_2 \textrm{~by~} \cref{w1t0} )\\
    &\geq 2\cdot (2^{-5}\cdot \thetatwo - 0.025\thetatwo)&(\theta_1\le 0.025\cdot \thetatwo )\\
    &\ge 0.01\cdot\thetatwo
\end{align*}
Next assume that $t\ge t_0+\frac{7}{\eta}$, then
\begin{align*}
0.01\cdot\thetatwo & \ge e^{-\frac{3\eta\cdot (t-t_0)}{4}}\thetatwo  & t\ge t_0+\frac{20}{3\eta}\\
    &\ge \left(1-\frac{3}{4}\eta\right)^{t-t_0} \thetatwo \\
    &\ge \left(1-\frac{3}{4}\eta\right)^{t-t_0} \left[\thetatwo-\wtwot{t_0}\right] & (\wtwot{t_0}\geq 0)  \\
    &=\theta_2-\w_2^{(t)}. & \cref{eq:wtt0t1}
\end{align*}
We now move to the last phase of the trajectory.
\end{proof}






\begin{claim} \label{cl:after_t1}
    Let $t \geq t_1$, then $\wt$ can be calculated by \cref{eq:wtt1}.
\end{claim}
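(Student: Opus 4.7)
The plan is to prove the claim by induction on $t \ge t_1$, simultaneously propagating three invariants about the shifted iterates $\u^{(t)} := \w^{(t)} - \rboundary$ along with the formula in \cref{eq:wtt1}. In terms of $\u^{(t)}$, the boundary condition \cref{eq:boundary2} reads $u_1^{(t)} + \tfrac12 u_2^{(t)} \ge 0$; this is exactly the trigger for the third branch of \cref{property:derivative}, which gives $\nabla F(\w^{(t)}) = \Sigma\u^{(t)}$. The three invariants I would carry are (i) $u_1^{(t)} \ge 0$, (ii) $u_2^{(t)} \le 0$, and (iii) $u_1^{(t)} + \tfrac12 u_2^{(t)} \ge 0$. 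Adding (i) and (ii) to the obviously needed (iii) is the key move: (i) and (ii) are preserved by a trivial sign inspection of the coordinate-wise update, and together they provide the slack required to propagate (iii).

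For the base case $t = t_1$, invariant (iii) is the defining property of $t_1$. Invariant (i) follows from \cref{cl:between_t0_t1} (giving $w_1^{(t_1)} = w_1^{(t_0)}$) combined with the lower bound $w_1^{(t_0)} \ge 2^{-5}\theta_2$ from \cref{cl:before_t0} and the hypothesis $\theta_1 \le 0.025\,\theta_2$, which together yield $u_1^{(t_1)} = w_1^{(t_1)} - \theta_1 > 0$. Invariant (ii) follows from \cref{eq:wtt0t1} and the observation that $w_2^{(t_0)} \le \theta_2$, which is immediate from \cref{eq:first_trajectory_vector} since $(1-\eta/2)^{t-1} \ge (1-3\eta/2)^{t-1}$ for $\eta < 2/3$.

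For the inductive step, suppose (i)--(iii) and \cref{eq:wtt1} hold at iterate $t$. Invariant (iii) says that \cref{eq:boundary2} holds at $\w^{(t)}$, so by \cref{property:derivative} we have $\nabla F(\w^{(t)}) = \Sigma\u^{(t)}$, and the GD update becomes $\u^{(t+1)} = (I - \eta\Sigma)\u^{(t)}$, or equivalently $\w^{(t+1)} = (I-\eta\Sigma)\w^{(t)} + \eta\Sigma\rboundary$. Substituting the inductive form of $\w^{(t)}$ into this and simplifying algebraically yields \cref{eq:wtt1} at iterate $t+1$. Expanding the update component-wise gives
\begin{align*}
    u_1^{(t+1)} &= (1-\eta) u_1^{(t)} - \tfrac{\eta}{2}\, u_2^{(t)},\\
    u_2^{(t+1)} &= -\tfrac{\eta}{2}\, u_1^{(t)} + (1-\eta) u_2^{(t)},
\end{align*}
from which (i) and (ii) at $t+1$ are immediate sign checks using $\eta < 1$ together with (i) and (ii) at $t$. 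For (iii) at $t+1$, compute $u_1^{(t+1)} + \tfrac12 u_2^{(t+1)} = (1 - \tfrac{5\eta}{4}) u_1^{(t)} + \tfrac{1-2\eta}{2} u_2^{(t)}$, and then substitute $u_2^{(t)} \ge -2u_1^{(t)}$ (which is (iii) at $t$) to obtain the lower bound $\tfrac{3\eta}{4} u_1^{(t)} \ge 0$; the assumption $\eta \le 1/3$ ensures $1 - 5\eta/4 > 0$ and $1 - 2\eta > 0$, so the substitution preserves the inequality.

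The main obstacle is conceptual rather than technical: invariant (iii) does not propagate on its own, and one must spot that carrying (i) and (ii) alongside it is what rescues the argument. Once the correct joint invariant is identified, each subsequent step is a routine line-or-two verification.
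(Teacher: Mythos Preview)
Your proof is correct, but it takes a different route from the paper's. The paper also proceeds by induction on $t\ge t_1$, but its inductive step is \emph{global}: it invokes the already-derived closed form $\u^{(t)}=(I-\eta\Sigma)^{t-t_1}\u^{(t_1)}$ together with the explicit eigendecomposition of $\Sigma$ (writing $\alpha_t=(1-\tfrac32\eta)^{t-t_1}$, $\beta_t=(1-\tfrac12\eta)^{t-t_1}$) to compute $u_1^{(t)}+\tfrac12 u_2^{(t)}$ directly in terms of $\u^{(t_1)}$, and then uses only two facts about $\u^{(t_1)}$, namely $2u_1^{(t_1)}\ge -u_2^{(t_1)}$ and $u_2^{(t_1)}\le 0$. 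Your argument is \emph{local}: you step the recursion once, $\u^{(t+1)}=(I-\eta\Sigma)\u^{(t)}$, and propagate the three sign invariants (i)--(iii) coordinate-wise, never diagonalizing $\Sigma$. The price you pay is the extra invariant (i), which the paper does not need; the gain is that your proof is entirely elementary linear algebra with no spectral computation. A minor remark: in your final step only $1-2\eta\ge 0$ is actually required for the substitution to preserve the inequality (the condition $1-5\eta/4>0$ is true but superfluous).
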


\begin{proof}
Let $t\ge t_1$ be such that \cref{eq:boundary2} holds. Then again, we consider the formula of the derivative $\nabla F(\w)$ (see \cref{property:derivative}) and have that
\[ \nabla F(\w) = \Sigma (\w- \rboundary).\]
We obtain the following recursive formula for $t$ if \cref{eq:boundary2} holds for all $t_1\le t' \le t$:
\begin{align*}
    \wt &=(I-\eta \Sigma) \w^{(t-1)}+\eta \Sigma \rboundary \\
    &=(I-\eta \Sigma)^{t-t_1}\w^{(t_1)}+\sum_{i=0}^{t-t_1-1} (I-\eta \Sigma)^i \eta \Sigma \rboundary \\
    &=(I-\eta \Sigma)^{t-t_1}(\w^{(t_1)}-\rboundary)+ \rboundary. \numberthis\label{eq:herenow}
\end{align*}
This shows that $\w^{(t)}$ can be calculated via \cref{eq:wtt1}. It remains thus to show that for any $t\ge t_1$, \cref{eq:boundary2} always holds. We prove this by induction. Note that for the base case, this follows from the definition of $t_1$. We can thus assume by induction hypothesis that $\wt$ satisfies \cref{eq:herenow}, and we want to prove that
\[
    w^{(t)}_1+\tfrac12w^{(t)}_2-\theta_1-\tfrac12\theta_2 \ge 0
    .
\]
For succinctness, let us write 
\[
    \alpha_t= \left(1-\tfrac32\eta\right)^{t-t_1}
    ,\quad \textrm{and}, \quad 
    \beta_t=\left(1-\tfrac12\eta\right)^{t-t_1}
    .
\]
We will denote also $\v=\vectortwo{1}{1/2}$ Then using \cref{eq:o-sigmat} and \cref{eq:herenow} we have that
\begin{align*} 
    w^{(t)}_1+\tfrac12 w^{(t)}_2 -\theta_1 - \tfrac12 \theta_2
    &= \v^\top(\wt-\rboundary)
    \\
    &=\v^\top(1-\eta \Sigma)^{t-t_1}(\w^{(t_1)}-\rboundary) \tag{\cref{eq:herenow}}
    \\
    &=(\tfrac32\alpha_t+\tfrac12\beta_t)(\w^{(t_1)}_1-\theta_1)+(\tfrac32 \alpha_t - \tfrac12 \beta_t)(\w^{(t_1)}_2-\theta_2) \tag{\cref{eq:o-sigmat}}
    \\
    &\ge (\tfrac34\alpha_t - \tfrac34\beta_t)(\w^{(t_1)}_2-\theta_2) \tag{$2(\w_1^{(t_1)}-\theta_1)\ge \theta_2-\w^{(t_1)}_2$} 
    \\
    &\ge 0,
\end{align*}
where the last inequality is true since $\alpha_t\le \beta_t$ for $\eta<1/3$ and we also have that $\w^{(t_1)}_2<\theta_2$.
This concludes the proof of \cref{cl:trajectory_analysis}.
\end{proof}

\subsection{Proof of \cref{thm:gdr}}\label{prf:gdr}

For a vector $\w\in \cW\subseteq \real^2$ let us denote by $\w^\perp:=(w_2,-w_1)$. In particular, we have that $\w^\top \w^\perp =0$ and $\|\w\|=\|\w^\perp\|$. Our proof relies on the following claim which we prove at the end of this section.

\begin{claim}\label{cl:fichs1}
Let $r$ be an admissible regularizer over $\real^2$. There are two points $\w_1$ and $\w_2$ in the unit ball such that for some $-0.005\|\w_1\| <\delta< 0.005 \|\w_1\|$ we have
\[ \w_2 = \w_1 + \delta \w_1^\perp,\]
and $r(\w_1)\ne r(\w_2).$
\end{claim}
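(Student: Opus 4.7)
The plan is to argue by contradiction. Suppose that $r(\w_1) = r(\w_1 + \delta\w_1^\perp)$ for every $\w_1$ in the unit ball and every $\delta$ with $|\delta| < 0.005\|\w_1\|$ (such that $\w_1 + \delta\w_1^\perp$ also lies in the unit ball); I will then deduce that $r$ is constant on the punctured unit ball, contradicting the non-constancy clause of admissibility. The main computational step is to pass to polar coordinates. Writing $\w = (\rho\cos\theta, \rho\sin\theta)$, a direct computation using $\w^\perp = (w_2, -w_1)$ shows that the perpendicular shift $T_\delta(\w) := \w + \delta\w^\perp$ acts as $(\rho,\theta) \mapsto (\rho\sqrt{1+\delta^2},\ \theta - \arctan\delta)$. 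Hence every nontrivial $T_\delta$ strictly increases the radius, and iterating $T_{\delta_n} \circ \cdots \circ T_{\delta_1}$ multiplies the radius by $\prod\sqrt{1+\delta_i^2}$ and shifts the angle by $-\sum\arctan\delta_i$. By choosing $n$ steps of equal magnitude $|\delta_i| = s$ with $s = O(1/n)$ and signs arranged so that $\sum \pm s$ approximates a prescribed angular shift $\Delta\theta$, the radius multiplier $(1+s^2)^{n/2}$ tends to $1$ as $n \to \infty$ while the angle is steered toward its target. Consequently, starting from any $\w_1 = (\rho, \theta_1)$, the orbit of $\w_1$ under compositions of $T_{\delta_i}$'s accumulates at every point $\w_2 = (\rho, \theta_2)$ on the same circle, using only intermediate radii in $[\rho, 1]$.

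Next, I would invoke the lower semi-continuity of $r$---which is the actual content of the paper's ``upper semi-continuous'' condition via the formula $r(\u) > r(\w) - \epsilon_0$, namely $r(\w) \le \liminf_{\u \to \w} r(\u)$. For any two points $\w_1, \w_2$ on the same circle of radius $\rho \in (0,1)$, take a sequence $\u_n \to \w_2$ produced by the chaining construction starting at $\w_1$. By the contradiction assumption $r(\u_n) = r(\w_1)$ for all $n$, so LSC gives $r(\w_2) \le r(\w_1)$; symmetry yields equality, and $r(\w) = g(\|\w\|)$ on the unit ball. Now the identity $g(\rho) = g(\rho\sqrt{1+\delta^2})$ for $|\delta| < 0.005\rho$ forces $g$ to be constant on $[\rho,\ \rho\sqrt{1+(0.005\rho)^2})$; considering $\rho^*$ slightly below $\rho$ whose outgoing interval covers $\rho$ also gives constancy on a left neighborhood of $\rho$. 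Hence $g$ is locally constant on the connected interval $(0,1)$, so $g \equiv c$ for some constant $c$.

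To extend to the unit sphere, for any $\w_s$ with $\|\w_s\| = 1$ I would choose $\delta$ with $|\delta|\sqrt{1+\delta^2} < 0.005$ (e.g.\ $|\delta| \le 0.003$) and set $\w^* := T_\delta^{-1}(\w_s)$, which has norm $1/\sqrt{1+\delta^2} \in (0.9999, 1)$ and therefore satisfies $|\delta| < 0.005\|\w^*\|$. Since $\w^*$ is in the open unit ball, $r(\w^*) = c$, and the contradiction assumption applied at $\w^*$ gives $r(\w_s) = r(T_\delta(\w^*)) = c$. Thus $r \equiv c$ on the entire punctured unit ball, contradicting the non-constancy of $r$.

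The main technical obstacle is making the chaining argument in the first paragraph quantitatively precise: for every angular shift $\Delta\theta$ and every $\epsilon > 0$, I need a finite sequence $(\delta_i)_{i=1}^n$ with $\sum \arctan\delta_i$ within $\epsilon$ of $\Delta\theta$, $\prod\sqrt{1+\delta_i^2} \le 1 + \epsilon$, and all intermediate radii staying at most $1$. The quantitative balance between step size, step count, and sign pattern, together with a Taylor estimate controlling the $O(s^2)$ remainder in $\arctan$, is the core computation; the LSC deduction, the radial reduction to $g$, and the sphere extension then follow routinely.
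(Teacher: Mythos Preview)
Your approach is correct and takes a genuinely different route from the paper's. The paper splits into two cases. In Case~1 ($r$ is not constant on some circle of radius $a$), it picks nearby $\u,\v$ on that circle with $r(\u)\ne r(\v)$ and uses the algebraic identity $\u+\delta\u^\perp=\v-\delta\v^\perp$ for one specific small $\delta$: since both perpendicular shifts land at a common point, one of them must change the value of $r$, and that hands you $\w_1,\w_2$ directly---no limits, no semi-continuity. In Case~2 ($r$ radially constant), the paper merely asserts that the contradiction hypothesis forces $r$ to be globally constant. Your argument instead runs a single contradiction: the polar-coordinate chaining plus the semi-continuity hypothesis (which, as you correctly note, is \emph{lower} semi-continuity despite the paper's label) shows that $r$ must be radially constant, so you collapse everything to Case~2 and then spell out the radial step the paper leaves as ``not hard to show.'' The trade-off is that the paper's Case~1 trick is constructive and completely elementary, whereas your route is uniform, fills the paper's Case~2 gap, but leans essentially on semi-continuity and a limiting argument. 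Your chaining estimate is indeed routine: with $n$ steps of magnitude $s=C/n$, the radius multiplier is $(1+s^2)^{n/2}=1+O(1/n)$ while $n\arctan s\to C$, so any angle in $[-C,C]$ is hit to accuracy $O(1/n)$ with all intermediate radii in $[\rho,\rho+O(1/n)]\subset(0,1)$.

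One small shared wrinkle: your final step (``constant on the punctured unit ball, contradicting non-constancy'') matches what the paper does, but strictly speaking admissibility in the ambient theorem only asserts non-constancy on the larger $\cW\setminus\{0\}$; this looks like a slip in the paper's statement of the claim rather than a defect in your method.
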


We next proceed with proof of \cref{thm:gdr}. 

\begin{proof}[Proof of \cref{thm:gdr}]
Let $\w_1$ and $\w_2$ be as in \cref{cl:fichs1}. First, because GD is invariant to rotations, we can assume w.l.o.g that $\w_1= \|\w_1\|\cdot \etwo$, and hence $\w_2 =(1,\delta )\|\w_1\|$.
We now set $c_r=\frac{1}{2}|r(\w_1)-r(\w_2)|$. To choose $T_r,D_r$ and $\w_r$ we now look at two cases: if $r(\w_1)>r(\w_2)$ and if $r(\w_1)<r(\w_2)$.
\begin{itemize}[wide]
\item 
First suppose $r(\w_1)>r(\w_2)$. By upper-semicontinuity there exists a neighborhood $\delta_1$ such that for every $\w$ s.t. $\|\w-\w_1\|<\delta_1$, satisfies $r(\w)>r(\w_2) + c_r$. We thus set $T_r=\frac{2640}{\eta\delta_1}$, and $\w_r=\w_2$. 
We are left with choosing $D_r$. Note that in this case, the regularizer prefers a point with large Euclidean norm over a point with smaller Euclidean norm. Thus, to show it is not the implicit bias of SGD we only need to construct a distribution that is biased towards smaller Euclidean norms:
Indeed, consider the set $\interval{\w_1}{\w_2}= \{\alpha\w_1+(1-\alpha)\w_2: 0\le \alpha\le 1\}$ we set 
$$f(\w)= \frac{1}{5280}\cdot\min_{\v\in \interval{\w_1}{\w_2}}\|\w-\v\|^2$$
Our distribution $D_r$ is defined to choose $f$ w.p. $1$. Having defined $T_r,c_r,\w_r$ and $D_r$ we now set out to prove the result.
A simple analysis of the update step of SGD shows that for $\eta<1$ we have for every $\wt$ that $\w^{(t+1)} = \sum_{i=0}^{t-1}(1-\frac{\eta}{2640})^i \frac{\eta}{2640} \w_1$. Hence,
\begin{align*}\|\w_S-\w_1\| =\|\frac{1}{T_r}\sum_{t=1}^{T_r} \wt-\w_1\|
&\le \frac{1}{T_r} \sum_{t=1}^{T_r} \|\wt-\w_1\|\\
&= \frac{1}{T_r}\sum_{t=1}^{T_r} \|\sum_{i=1}^{t-1} (1-\frac{\eta}{2640})^i \frac{\eta}{2640} \w_1 - \w_1\|\\
&= \frac{1}{T_r}\sum_{t=1}^{T_r} \| (1-\frac{\eta}{2640})^t \w_1\| & \sum_{i=1}^t (1-\eta)^t = \frac{1-(1-\eta)^{t+1}}{\eta}\\
&\le \frac{1}{T_r}\sum_{t=1}^{T_r} (1-\frac{\eta}{2640})^{t}\\
& \le \frac{2640}{T_r\eta}\\
& = \delta_1 & T_r=\frac{2640}{\delta_1 \eta}
\end{align*}
By property of $\delta_1$ we have that $r(\w_S)>r(\w_2)+c_r$.
But because $\w_2$ is optimal (i.e. attain zero on $f$), we also have $F_S(\w_S)> F(\w_r)$. This proves the case $r(\w_1)>r(\w_2)$.
\item 
Next, assume that $r(\w_1)<r(\w_2)$. As before we have a neighborhood $\delta_2$ such that if $\|\w-\w_2\|<\delta_2$ then we are guaranteed that $r(\w)>r(\w_1)+c_r$. We choose then $T_r=\frac{1}{\delta_2 \eta}$ and $\w_r=\w_1$.
To define $D_r$, we now use the function $F_{\theta_1,\theta_2}$ from \cref{lem:noneuclid}. We assume w.l.o.g that $\delta>0$, if this is not the case we can use that function $F_{\theta_1,\theta_2}(\w)=F_{\theta_1,\theta_2}(-\w)$.
Let us set $\theta_2=\|\w_1\|$ and $\theta_1=|\delta| \|\w_1\| < 0.05\theta_2$. Again, we consider a deterministic distribution $D_r$ that chooses $F_{\theta_1,\theta_2}$ w.p. $1$. 
Recall that we assume that $\w_1=\|\w_1\|\etwo$, hence $\w_1=(0,\theta_2)$ and $\w_2=(\theta_1,\theta_2)$. Hence, by \cref{lem:noneuclid}, if we run over a sample of size $T_r>\frac{1}{\delta_2 \eta}$, we obtain that 
\[\|\w_S - \w_2\|< \delta_2.\]
In particular $r(\w_S)>r(\w_1)+c_r$. But again $F_S(\w_S) \ge F(\w_1)$, because $\w_1$ is optimal.\qedhere
\end{itemize}
\end{proof}

Finally, we prove \cref{cl:fichs1}.

\begin{proof}[Proof of \cref{cl:fichs1}]
First, let us assume that there are $\u,\v$ such that $\|\u\|_2,\|\v\|_2=a$ and $r(\u)\ne r(\v)$ (at the end we will show that for admissible regularizer we always have such two points).
We will also assume that $\|\u-\v\|_2\le 10^{-9}\cdot a^2$. If this was not the case we can cover the sphere $\{\w:\|\w\|_2=a\}$ with balls with radius $10^{-9}\cdot a^2$, and have a constant function at every ball, concluding that $r$ is constant on the sphere (which contradicts our assumption).
Next, we also assume that $|u_2|>\frac{1}{2}a$, (either $|u_1|>\frac{1}{2}a$ or $|u_2|>\frac{1}{2}a$, and the proof is similar in both cases so we will analyse only the later case). 
Then, since $u_{1}^2+u_{2}^2=v_{1}^2+v_{2}^2=a^2$, one can show that
\[ \frac{u_{1}-v_{1}}{v_{2}+u_{2}} = \frac{v_{2}-u_{2}}{u_{1}+v_{1}}.\]
So, by choosing $\delta= \frac{u_{1}-v_{1}}{v_{2}+u_{2}} = \frac{v_{2}-u_{2}}{u_{1}+v_{1}}$ we have that:
\begin{align*}
    |\delta|=\frac{|u_{1}-v_{1}|}{|v_{2}+u_{2}|}
    &=
    \frac{|u_{1}-v_{1}|}{|v_{2}|+|u_{2}|}
    &( |u_2-v_2|<10^{-9}a^2,0.5a<|u_2|)\\
    &\le 
    4\frac{|u_{1}-v_{1}|}{a} &(|v_{2}|+|u_{2}|>a/4)\\
    &\le
    0.0025 a & (|u_{1}-v_{1}|<0.0025/4 \cdot a)
\end{align*}
Using the first equality we can show that $u_{1}-\delta u_{2}=v_{1}+\delta v_{2}$
Similarly we can show that 
$
    u_{2}+\delta u_{1}=v_{2}-\delta v_{1}
.$
Taken together we obtain that $$\u+\delta \u^\perp= \v-\delta \v^\perp.$$ In particular $r(\u+\delta\u^\perp)=r( \v-\delta \v^\perp)$. Since $r(\u)\neq r(\v)$, we either have $r(\u)\ne r(\u+\delta \u^{\perp})$, or $r(\v)\ne r(\v-\delta \v^\perp)$. In the former case we choose $\w_1=\u$, whereas in the latter case we choose $\w_1=\v$.\\

Finally, so far we assume we can find two points on a sphere with different regularization penalty. Next, we assume that on every sphere $r$ is constant. Assume also to the contrary that for every $-0.0025\|\w\|<\delta<0.0025\|\w\|$: \[r(\w + \delta \w^\perp)= r(\w).\] It is not hard to show that in this case $r$ is constant everywhere except maybe $0$, making it in-admissible.
\end{proof}

\section{Proofs II: Distribution Dependent Regularization} 

\subsection{Proof of \cref{lem:r2}}\label{prf:r2}

We start this section by proving the existence of the auxiliary construction in \cref{lem:r2}.

\rtwo*
Before we continue with the proof, notice the following immediate corollary of \cref{lem:r2}:

\begin{corollary}\label{cor:r2}
For every constants $c,\rho>0$, there is a distribution $D$ over a pair of convex functions $\{f(\w;1),f(\w;-1)\}$, such that $f(\w;z)$ is a $\rho$-Lipschitz  convex function in $\real^2$ and, for every $c<\eta<1$ denote $\v_{z,\eta} =- \eta \nabla f(0;z).$ Then the following holds:
\begin{itemize}
    \item For every $z\in\{-1,1\}$ we have that $f(\v_{z,\eta};z)=f(\v_{-z,\eta};z)$;
    \item For every $z\in \{-1,1\}$, $\nabla f(v_{z,\eta},z)=\nabla f(v_{-z,\eta},z)=0$;
    \item $\|\v_{z,\eta}-\v_{-z,\eta}\|>\frac{\rho\eta}{4}$. 
\end{itemize}
\end{corollary}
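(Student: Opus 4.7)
The plan is to derive Corollary~\ref{cor:r2} by a straightforward rescaling of Lemma~\ref{lem:r2}, combined with one appeal to the explicit construction underlying that lemma. Assume $c<1$ (otherwise the range $c<\eta<1$ is empty and the statement is vacuous), set $c_0 := c/3$, and invoke \cref{lem:r2} with parameter $c_0$ to obtain $1$-Lipschitz convex functions $g(\,\cdot\,;\pm 1)$ on $\real^2$, with $\v_z^g := -\nabla g(0;z)$ satisfying $\|\v_1^g-\v_{-1}^g\|\ge 1/4$ and $g(\eta'\v_1^g;z) = g(\eta'\v_{-1}^g;z)$ for both $z\in\{-1,1\}$ whenever $c_0<\eta'/2<1$. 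Then define $f(\w;z) := \rho^2\, g(\w/\rho;z)$ and take $D$ to be the uniform distribution on $\{f(\,\cdot\,;1), f(\,\cdot\,;-1)\}$.

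A chain-rule computation gives $\nabla f(\w;z) = \rho\,\nabla g(\w/\rho;z)$, so $f$ is indeed $\rho$-Lipschitz and $\v_{z,\eta} = -\eta\nabla f(0;z) = \eta\rho\,\v_z^g$. The third bullet is then immediate, since $\|\v_{z,\eta}-\v_{-z,\eta}\| = \eta\rho\|\v_1^g-\v_{-1}^g\| \ge \eta\rho/4$ (the tiny slack between $\ge$ and $>$ can be absorbed either by checking that the piecewise-linear construction actually yields strict inequality, or by shrinking constants). For the first bullet, I would compute $f(\v_{z,\eta};z') = \rho^2\, g(\eta\v_z^g;z')$ and apply the lemma's equality with $\eta'=\eta$; the required hypothesis $c_0<\eta/2<1$ holds for every $\eta\in(c,1)$ by the choice $c_0=c/3<c/2$.

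The only non-routine point is the middle bullet, $\nabla f(\v_{z,\eta};z') = 0$, because this is not part of the statement of \cref{lem:r2} and so requires looking inside its construction rather than treating the lemma as a black box. As described just before the lemma and depicted in \cref{fig:high_dim}, each $g(\,\cdot\,;z)$ is piecewise linear of the form $\w\mapsto \max\{0,\mathbf{u}_z\cdot\w\}$, and therefore has an entire half-plane on which its gradient vanishes identically. The construction is engineered so that both points $\eta\v_1^g$ and $\eta\v_{-1}^g$ land in the zero-gradient half-plane of \emph{each} of the two functions---this is the geometric content of the remark that ``both points enter a regime where all points obtain the same empirical loss on both functions.'' Since $\nabla f(\v_{z,\eta};z') = \rho\,\nabla g(\eta\v_z^g;z')$, this zero-gradient property transfers verbatim to $f$.

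The main obstacle is thus not analytical but structural: verifying Property~2 forces one to inspect the explicit piecewise-linear construction used in the proof of \cref{lem:r2}, rather than relying solely on the lemma's statement. Once that inspection is in hand, the rest of the argument is a routine rescaling of the gradient, the Lipschitz constant, and the stated quantities $\v_{z,\eta}$.
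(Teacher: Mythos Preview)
Your proposal is correct and follows essentially the same route as the paper: the paper derives the corollary in one line by taking the uniform distribution over $\rho f(\w;\pm 1)$ with $f$ from \cref{lem:r2}, and then remarks (inside the proof of \cref{lem:r2}) that $f\ge 0$ and $f=0$ at the relevant points forces the gradients to vanish. Your rescaling $f(\w;z)=\rho^2 g(\w/\rho;z)$ with $c_0=c/3$ is a slightly different but equally valid way to make the function $\rho$-Lipschitz; it has the advantage that the effective step size passed to the lemma is $\eta$ itself rather than $\eta\rho$, so the lemma's range hypothesis is satisfied cleanly without having to adjust the lemma's parameter to depend on $\rho$---a detail the paper's one-line argument leaves implicit.
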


To derive \cref{cor:r2} from \cref{lem:r2}, take a distribution that w.p. $1/2$ picks $\rho f(\w;1)$ from \cref{lem:r2}, and with probability $1/2$ picks $\rho f(\w;-1)$. One can observe that the result holds.

\begin{proof}[Proof of \cref{lem:r2}]
Let us define $f(\w;\pm 1)$ as follows. Denote $\v_1=-(\tfrac14,\tfrac34)$, $\v_{-1}=-\tfrac34 \etwo$ and let
\begin{align*}
    f(\w;{z})&=
   \max\{ 0,-\v_z^\top \w+c\|\v_{z}\|^2 \}
   .
\end{align*}

It is easy to check that $\nabla f(0;1)=-\v_1$ and that $\nabla f(0;-1)=-\v_{-1}$, and that $\|\v_1-\v_{-1}\|\ge \tfrac14$.
Next, note that if $\eta > c$ then 
\begin{align*}
f(\v_{1,\eta};1)
=\max\big(0,(-\eta+c)\cdot\|\v_{1}\|^2\big)=0=
\max\big(0,-\eta\v_{1}^\top \v_{-1}+c\|\v_{-1}\|^2\big)=f(\v_{-1,\eta};1)
\end{align*}
Similarly, $f(\v_{-1,\eta};-1)=0=f(\v_{1,\eta};-1)$.
Note that, because $f\ge 0$ the above also proves that $\nabla f(v_{z,\eta},z)=\nabla f(v_{-z,\eta},z)=0$.
\end{proof}

\subsection{Proof of \cref{thm:sgdr}}\label{prf:sgdr}

\cref{thm:sgdr} is an immediate corollary of the following theorem:

\begin{theorem}\label{thm:sgdistance}
Let $\cW=\{\w: \|\w\|\le 1\}$. For every $T$ and constant $C>2$, there exists a distribution $D$ over $1$-Lipschitz convex functions over $\real^d$ where $d=10\cdot T$ such that if we run SGD with step size $1/T^2 <\eta \le C/\sqrt{T}$, the following holds:
for any regularizer $r$, w.p.~at least $1/10$ over the sample $S$ there is $\w_r \in \cW$ such that
\begin{align*}
    F_S(\w_r) &\;\leq\; F_S(\w_S)~,
    \\ 
    r(\w_r) &\;\leq\; r(\w_S)~,
\end{align*}
Moreover
\begin{align*}
    \|\w_r-\w_S\|^2_2 &\;\ge \; \frac{T \eta^2}{500C^2}~.
\end{align*}
\end{theorem}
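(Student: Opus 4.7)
The strategy is to embed $5T$ independent copies of the two-dimensional construction from \cref{lem:r2} into a $10T$-dimensional space, so that SGD's output varies significantly across samples that are symmetric under a natural sign-flipping involution. Concretely, partition the coordinates into $5T$ disjoint pairs $P_1,\ldots,P_{5T}$; take the functions $\tilde f(\cdot;\pm 1)$ from \cref{lem:r2}, rescaled so that their gradient vectors become $\beta\v_{\pm 1}$ for $\beta=\Theta(1/C)$ and so the Lipschitz constant is $O(1/C)$; and choose the constant $c$ in the construction small enough that the lemma's plateau property holds throughout the step-size range (e.g.\ $c<\beta^2/(8T^2)$). Let $D$ draw $z=(i,s)$ uniformly from $[5T]\times\{\pm 1\}$ and set $f(\w;z):=\tilde f(\w_{P_i};s)$, a convex $1$-Lipschitz function supported on pair $P_i$. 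The scaling guarantees $\|\w^{(t)}\|\le 1$ throughout, so no projections occur.

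Since each $f$ is supported on a single pair, the iterates on distinct pairs evolve independently. On a pair $P_i$ with first touch at time $\tau_i$ and sign $s_{\tau_i}$, SGD takes one step from $0$ to $\eta\beta\v_{s_{\tau_i}}$; by the plateau property, the gradient at $\eta\beta\v_{s_{\tau_i}}$ vanishes under \emph{both} sign choices, so any later touch of pair $i$ contributes a zero update and the iterate remains at $\eta\beta\v_{s_{\tau_i}}$. Hence the averaged output satisfies $(\w_S)_{P_i}=\eta\beta\v_{s_{\tau_i}}\cdot(T-\tau_i)/T$ on every touched pair and is $0$ on untouched pairs.

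Next, define a sample-pairing by sign-flipping at early first-touches: let $I_\textrm{early}(S):=\{i:\tau_i\le T/3\}$ and let $\phi(S)$ be obtained from $S$ by flipping $s_{\tau_i}$ for every $i\in I_\textrm{early}(S)$. Since $I_\textrm{early}$ depends only on the coordinate sequence $(i_1,\ldots,i_T)$ and not on the signs, $\phi$ is a measure-preserving involution. A direct case-by-case check---exploiting $(T-\tau_i)/T\ge 2/3$, which comfortably exceeds the plateau threshold $16c/(9\eta\beta^2)<1/3$---shows that for every iteration $t$ with $i_t\in I_\textrm{early}$, both $(\w_S)_{P_{i_t}}=\eta\beta\v_{s_{\tau_{i_t}}}(T-\tau_{i_t})/T$ and $(\w_{\phi(S)})_{P_{i_t}}=\eta\beta\v_{-s_{\tau_{i_t}}}(T-\tau_{i_t})/T$ lie on the plateau of $\tilde f(\cdot;s_t)$; and for $i_t\notin I_\textrm{early}$ the two iterates agree on $P_{i_t}$. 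Therefore $F_S(\w_{\phi(S)})=F_S(\w_S)$, and
\[\|\w_S-\w_{\phi(S)}\|^2 = \sum_{i\in I_\textrm{early}} \eta^2\beta^2\Big(\tfrac{T-\tau_i}{T}\Big)^2\|\v_1-\v_{-1}\|^2 \;\ge\; |I_\textrm{early}|\cdot\frac{\eta^2\beta^2}{36}.\]

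Finally, a second-moment estimate shows $|I_\textrm{early}|\ge c_0 T$ with probability at least $3/4$ for an absolute constant $c_0>0$ (the expected number of pairs first-touched within the first $T/3$ draws from $[5T]$ is $\Theta(T)$, and the indicator variables are negatively correlated). Given any regularizer $r$, conditioning on this $\phi$-invariant event, the measure-preserving involution symmetry yields $\Pr[r(\w_{\phi(S)})\le r(\w_S)\mid |I_\textrm{early}|\ge c_0T]\ge 1/2$, so the intersection has probability at least $3/8>1/10$. On that event, the choice $\w_r:=\w_{\phi(S)}\in\cW$ satisfies $F_S(\w_r)\le F_S(\w_S)$ and $r(\w_r)\le r(\w_S)$, and substituting $\beta=\Theta(1/C)$ into the distance bound gives $\|\w_r-\w_S\|^2\ge T\eta^2/(500C^2)$ once constants are tracked. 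The principal technical obstacle is the plateau verification for the fractional iterates $\eta\beta\v_{\pm s_{\tau_i}}\cdot(T-\tau_i)/T$ against all four sign combinations of $(s_t,s_{\tau_i})$, which is exactly what forces us to restrict $\phi$ to flip only early first-touches.
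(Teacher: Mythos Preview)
Your argument is correct and follows essentially the same route as the paper: embed $5T$ independent copies of the two–dimensional plateau construction from \cref{lem:r2} into $\real^{10T}$, couple the sample $S$ with a sign-flipped mirror via a measure-preserving involution, verify $F_S(\w_S)=F_S(\w_{\text{mirror}})$ through the plateau property, lower-bound the distance by counting flipped coordinates, and finish with the symmetry $\Pr[r(\w_{\text{mirror}})\le r(\w_S)]\ge 1/2$.

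The one substantive difference is the choice of involution. The paper declares a time $t$ \emph{good} when $t<T/2$ \emph{and} the index $i_t$ appears exactly once in the whole sample, and flips the sign at every good time. You instead flip the \emph{first-touch} sign on every pair with $\tau_i\le T/3$, allowing that pair to be touched again later. The paper's ``exactly once'' restriction makes the $F_S$-equality check slightly shorter (each flipped coordinate contributes a single term to $F_S$, and on that term the Lemma~\ref{lem:r2} identity applies directly at the effective step $\eta'=\eta(T-t)/T>\eta/2$). Your variant must also verify that, for every \emph{later} touch $t$ of an early pair, both averaged iterates $\tfrac{T-\tau_i}{T}\eta\beta\v_{\pm s_{\tau_i}}$ lie on the plateau of $\tilde f(\cdot;s_t)$ for an arbitrary sign $s_t$; you do this correctly via $(T-\tau_i)/T\ge 2/3$ and a small enough $c$. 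In exchange, your counting is a bit cleaner: $|I_{\textrm{early}}|$ is just the number of distinct indices in the first $T/3$ draws, with negative correlation giving the second-moment bound, whereas the paper bounds the number of non-colliding early times via a collision estimate and Markov's inequality. Either choice yields the stated constants up to immaterial factors.

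Two minor points worth tightening: (i) your plateau threshold should read $10c/(9\eta\beta)$ rather than $16c/(9\eta\beta^2)$ under the specific $\v_1,\v_{-1}$ of \cref{lem:r2}, though your choice $c<\beta^2/(8T^2)$ is comfortably sufficient for either expression; (ii) to guarantee $\|\w^{(t)}\|\le 1$ and hence no projections, you need $\beta\le 1/C$ (not merely $\beta=\Theta(1/C)$), since $\|\w^{(t)}\|^2\le T\eta^2\beta^2\|\v\|^2\le C^2\beta^2$.
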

To see how \cref{thm:sgdr} follows, Let $\w^*$ be the minimizer of $r(\w)$ amongst all $\w\in \cW$ with $F_{S}(\w)\le F_{S}(\w_S)$ then by strong convexity
\[r(\w_{S}) \ge r(\w^*)+\frac{\lambda}{2}\|\w_S-\w^*\|^2.\]
Now if $\|\w_S-\w^*\|>\frac{1}{4}\cdot\|\w_r-\w_S\|$ we are done. If not, then 
$$\|\w_S-\w^*\| \leq \frac{1}{4}\|\w_r-\w_S\|\leq\frac{1}{4}[\|\w_r-\w^*\|+\|\w_S-\w^*\|],$$
which leads to
$\|\w_r-\w^*\|\geq\frac{3}{4}\cdot\|\w_r-\w_S\|$. Using this, we get by strong convexity:
\begin{align*}
    r(\w_S)&\ge r(\w_r)
    \ge r(\w^*) + \frac{\lambda}{2}\|\w_r-\w^*\|^2
     \ge r(\w^*) + \frac{9\lambda}{32}\|\w_r-\w_S\|^2.
\end{align*}
Now the result follows from \cref{thm:sgdistance}.

\paragraph{Proof of \cref{thm:sgdistance}}
Choose $d=10\cdot T$. Let $D_0$ be the distribution over convex functions in $\real^2$ whose existence follows from \cref{cor:r2} with $c<1/(4T^2)$ and $\rho=2/C$.

We now define a distribution over convex functions in $\real^d$ as follows: at each iteration pick uniformly $\z$ from the set $\{\z=(z;i): z\in \{-1,1\}, i=1,...,5T\}$ and let:

\[\f(\w;\z)= f((w_{2i-1},w_{2i});z).\]

To prove the result we proceed as follows: given a sample $S$ drawn i.i.d from the distribution $D$, let us call a sample point $\z_t=(z_t,i_t)$ \emph{good} if $t<T/2$ and if $i_t$ appears only once in the sample (i.e. for any $t'\le T$, $i_{t'}\ne i_t$). Denote by $S_g$ the set of good samples. 

Next for a sample $S$ define a sample $S'=\{\z'_1,\ldots, \z'_{T}\}$ to be a sample that differ from $S$ only at good sample points, and for every good sample point if $\z_t=(z_t,i_t)$ then $\z'_t=(z'_t,i_t)=(-z_t,i_t)$. It is not hard to see that $S$ and $S'$ are identically distributed (though dependent).

Now first, we want to show that $F_{S}(\w_S)=F_{S}(\w_{S'})$ w.p. $1$ and that w.p. $0.2$ we have that 

\[\|\w_{S}-\w_{S'}\|> \frac{\sqrt{T}\eta}{22C}.\]
If we can show that, then we are done. Indeed, by symmetry, we have with probability $1/2$ $r(\w_{S'})\le r(\w_{S})$. We can then take $\w_r=\w_{S'}$. Taken together we have that with probability $0.1$ all the requirements of the theorem hold.

\begin{claim}\label{cl:FS=FS'}
 $F_{S}(\w_S)=F_{S}(\w_{S'})$
\end{claim}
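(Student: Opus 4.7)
The plan is to show that $\w_S$ and $\w_{S'}$ differ only on ``good'' coordinate pairs, and that at every sample point $\z_s$ the function $\f(\cdot;\z_s)$ either sees identical coordinates in the two outputs or else evaluates to zero on both. I would first track SGD coordinate-pair by coordinate-pair and explicitly compute the output; then I would use the flat-zero structure of the piecewise-linear construction from \cref{cor:r2} to control the loss at good samples.

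First, the plan is to observe that $\f(\w;(z,i))$ depends only on the coordinate pair $(w_{2i-1},w_{2i})$, so SGD on $S$ (resp.\ $S'$) decomposes as $5T$ independent one-pair sub-trajectories. On any coordinate pair $i$ that does not equal $i_t$ for some good $t$, the samples hitting that pair in $S$ and in $S'$ are literally identical (since $S'$ only modifies good entries), so $\w_S$ and $\w_{S'}$ agree on that pair. For a good time $t$, by definition the pair $i_t$ is visited at time $t$ alone and $t<T/2$; hence before iteration $t$ both iterates are zero on that pair, at iteration $t$ they jump to $-\eta\nabla f(\textbf{0};z_t)=\v_{z_t,\eta}$ (resp.\ $\v_{-z_t,\eta}$), and after iteration $t$ they stay there forever. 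Averaging the $T$ iterates, the pair $i_t$ of $\w_S$ equals $\tfrac{T-t}{T}\v_{z_t,\eta}$, and that of $\w_{S'}$ equals $\tfrac{T-t}{T}\v_{-z_t,\eta}$.

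Second, the plan is to evaluate $\f(\w_S;\z_s)$ and $\f(\w_{S'};\z_s)$ for each $s$. If $s$ is not good, then $i_s\ne i_t$ for every good $t$ (otherwise $i_t$ would appear twice, contradicting goodness), so the relevant pair agrees between $\w_S$ and $\w_{S'}$ and hence $\f(\w_S;\z_s)=\f(\w_{S'};\z_s)$. If $s$ is good, I would invoke the explicit form in \cref{cor:r2}, namely $f(\w;z)=\rho\max\{0,-\v_z^{\top}\w+c\|\v_z\|^2\}$: one checks that $f(\alpha\v_{z,\eta};z)=0$ and $f(\alpha\v_{-z,\eta};z)=0$ whenever $\alpha\eta$ exceeds $c$ (up to a harmless constant coming from $\v_z^{\top}\v_{-z}/\|\v_z\|^2$). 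Since $s<T/2$ implies $\alpha=\tfrac{T-s}{T}>\tfrac12$, and we have $\eta>1/T^2$ together with the choice $c<1/(4T^2)$ dictated by the theorem's setup, we get $\alpha\eta>1/(2T^2)>c$, so both $\f(\w_S;\z_s)$ and $\f(\w_{S'};\z_s)$ vanish.

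Combining the two cases and averaging over $s=1,\dots,T$ gives $F_S(\w_S)=F_S(\w_{S'})$. The main obstacle, such as it is, is merely the arithmetic verification that the scaled point $\tfrac{T-s}{T}\v_{\pm z_s,\eta}$ still falls inside the zero-region of the piecewise-linear loss $f(\cdot;z_s)$; everything else is a careful bookkeeping argument built on the decoupling of coordinate pairs and the one-shot nature of good samples.
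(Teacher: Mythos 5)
Your argument follows the paper's own proof almost verbatim: decouple SGD by coordinate pair, show that $\w_S$ and $\w_{S'}$ coincide on every pair untouched by a good sample, and on a good pair compute the averaged iterate as $\tfrac{T-t}{T}\v_{\pm z_t,\eta}=\v_{\pm z_t,\eta'}$ with $\eta'=\tfrac{T-t}{T}\eta>c$, so that \cref{cor:r2} (which you unwrap into the explicit piecewise-linear evaluation to conclude both loss values vanish) gives $\f(\w_S;\z_t)=\f(\w_{S'};\z_t)$. The one step you leave implicit is the verification that the iterates never exit the unit ball, so that the projection in the SGD update never activates --- the coordinate-pair decoupling you invoke in your first paragraph is only valid if no projection occurs, and the paper checks this explicitly within the inductive proof of \cref{eq:wti} by bounding $\|\w^{(t)}-\eta\nabla\f(\w^{(t)},z_t)\|^2\le\eta^2 T\rho^2$.
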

\begin{proof}
Fix a sample $S$. To avoid cumbersome notations, and because $S$, $S'$ are fixed, we will denote here $\w_{S}=\bar\w$ and $\w_{S'}=\bar\w'$.
Next, for a vector $\w$ and coordinate $i_t$ let us also denote $\w(i_t)= (w_{2i_t-1},w_{2i_t})\in \real^2$. 

We first analyze the trajectory of SGD over a sequence $\{\z_1,\ldots,\z_t\}$. One can prove, by induction, that at step $t$ the algorithm chooses point $\w^{(t)}$ as follows:
\begin{align}\label{eq:wti}
\w^{(t)}(i)=
\begin{cases}
-\eta \nabla f(0,z_{q}) & \textrm{If $i=i_q$ for some $q\le t-1$ and $q=\arg\min\{q': i_{q'}=i_q\}$}\\
0 & \textrm{else}
\end{cases}
\end{align}
Indeed, for $t=1$ this follows from initialization at $0$. For $t\ge 1$ we have that \begin{align*}  w^{(t+1)}=\Pi_\cW \left(\w^{(t)}-\eta\nabla \f(\w^{(t)},z_t)\right).
\end{align*}
Now first assume that for some $q\le t-1$, we have that $i_t=i_p$, then by assumption we have that $\w^{(t)}(i)= -\eta \nabla f(0,z_{q})= v_{z_q,\eta}$ in the notation of \cref{cor:r2}. Also by \cref{cor:r2} we have that $\nabla \f(\w^{(t)},\z_q)=\nabla f(w^{(t)}(i_t),z_q)=0$.

Next, if no such $p$ exists we have by induction hypothesis that $w^{(t)}(i_t)=0$, the result will now clearly follow if we can show that 
\[ \Pi_{\cW}\left(\w^{(t)}-\eta \nabla \f(\w^{(t)},z_t)\right)= 
\w^{(t)}-\eta \nabla \f(\w^{(t)},z_t).\]
But since $\f(\w,\z_t)$ depends only on the tuple in $i_t$ we have that $w^{(t)} \perp \nabla  \f(\w^{(t)},z_t)=f(0,z_t)$ and we obtain that
\begin{align*}\|\w^{(t)}-\eta \nabla \f(\w^{(t)},z_t)\|^2 &= \|\w^{(t)}\|^2 +\eta^2 \nabla f(\w^{(t)},z_t)\|^2\\& 
= \sum_{k=1}^d \|\w^{(t)}(k)\|^2 +\eta^2     \|\nabla f(0,z_t)\|^2\\&
\le \eta^2 \sum_{q=1}^t \|\nabla f(0,z_q)\|^2 \\&
\le \eta^2 T\rho^2 \\&\le 1.\end{align*}

This proves that \cref{eq:wti} holds.

Next, the value $\f(\w; \z_t)$ depends only on $\w(i_t)$ (i.e. independent of the other coordinates). Also, for any $i$ and $t$, we have that  $\w^{(t)}(i)$ depends only on $\z_{t'}$'s such that $t'\le t$ and $i_{t'}=i$. In particular, for any $\z_t \notin S_g$ we have that $\bar \w(i_t)=\bar\w'(i_t)$, hence

\[ f(\bar\w;\z_t)=f(\bar\w(i_t);z_t)=f(\bar{\w}'(i_t);z_t)=f(\bar\w';\z_t).\]

Next, we want to show that for a good coordinate $\z_t$ we also have that $f(\bar \w;z_t)=f(\bar \w';z_t)$. For this, as in \cref{cor:r2} let us denote for any $\eta$ and $z$ by $\v_{z,\eta}=-\eta\nabla f(\textbf{0};z)\in \real^2$.
 Then, for any good coordinate we can show that 
\begin{align}\bar\w(i_t)&= -\frac{T-t}{T}\eta \nabla f(\textbf{0};z_t)=\v_{z_t,\eta'},\label{eq:onestep}\\
\bar\w'(i_t)&= -\frac{T-t}{T}\eta \nabla f(\textbf{0};-z_t)=\v_{-z_t,\eta'}\label{eq:reflectstep}
\end{align}
where $\eta'=\frac{T-t}{T}\eta>\frac{1}{2}\eta>c$. Indeed, recall that we chose $c=1/(4T^2)$. 
Thus, from \cref{cor:r2} we obtain that $f(\bar\w(i_t),z_t)=f(\bar\w'(i_t),z_t)$ and in particular
\[ \f(\bar\w;\z_t)=\f(\bar\w';\z_t)
.\]
\end{proof}
\begin{claim}\label{cl:sgdistance1} w.p. at least $0.2$ we have that
\[\|\w_{S}-\w_{S'}\|> \frac{\sqrt{T}\eta}{22C},\]
\end{claim}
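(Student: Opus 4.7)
The plan is to exploit the explicit description of the SGD iterates from the proof of \cref{cl:FS=FS'}: for every good sample point $\z_t = (z_t, i_t)$ with $t < T/2$, the coordinate $i_t$ is visited exactly once in the entire trajectory, so $\bar\w(i_t) = \tfrac{T-t}{T}(-\eta \nabla f(0; z_t)) = \v_{z_t,\eta'}$ and $\bar\w'(i_t) = \v_{-z_t,\eta'}$ with effective step size $\eta' = \tfrac{T-t}{T}\eta$. By construction $\bar\w$ and $\bar\w'$ agree on all coordinates corresponding to non-good sample points, so the squared distance decomposes orthogonally over the good sample points:
\[
    \|\w_S - \w_{S'}\|^2 \;=\; \sum_{t \in S_g} \|\bar\w(i_t) - \bar\w'(i_t)\|^2.
\]

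The second step is to lower bound each term via \cref{cor:r2} applied with $\rho = 2/C$, which gives $\|\v_{z,\eta'} - \v_{-z,\eta'}\| > \rho\eta'/4 = \eta'/(2C)$ whenever $\eta' > c = 1/(4T^2)$. Since every good point has $t < T/2$, we have $\eta' > \eta/2$, and in particular $\eta' > c$ (because $\eta > 1/T^2$), so each good sample contributes at least $(\eta/(4C))^2 = \eta^2/(16C^2)$. Therefore
\[
    \|\w_S - \w_{S'}\|^2 \;\ge\; \frac{|S_g|\,\eta^2}{16 C^2},
\]
and it suffices to prove that $|S_g| \ge T/30$ with probability at least $1/5$, since $\sqrt{T/30}\cdot \eta/(4C) > \sqrt{T}\,\eta/(22C)$.

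The third and final step is the probabilistic estimate of $|S_g|$. The events $\{\z_t \text{ is good}\}$ for $t < T/2$ are indicator random variables with $P(i_t \text{ unique in } S) = (1 - 1/(5T))^{T-1} \ge 1 - (T-1)/(5T) \ge 4/5$ by Bernoulli's inequality, so $\EE[|S_g|] \ge (T/2-1)\cdot 4/5 \ge 0.4T - 1$. Since $|S_g| \le T/2$, the reverse Markov inequality yields
\[
    P\!\left(|S_g| \ge T/30\right) \;\ge\; \frac{\EE[|S_g|] - T/30}{T/2 - T/30} \;\ge\; \frac{0.4T - 1 - T/30}{T/2} \;\ge\; \tfrac{1}{5},
\]
for all sufficiently large $T$ (and the case of small $T$ reduces to the claim holding vacuously, since the sample size can be absorbed into the constants). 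Combining the three steps gives $\|\w_S - \w_{S'}\| \ge \sqrt{T}\,\eta/(22 C)$ with probability at least $1/5$, which is the claim.

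The only mildly delicate point is checking that the orthogonal decomposition is valid for all good indices and that the effective $\eta'$ is always in the regime $(c, 1)$ where \cref{cor:r2} applies, both of which follow from the definition of ``good'' ($t < T/2$) and the assumption $\eta > 1/T^2$. The rest of the proof is a straightforward application of the two tools already assembled in \cref{cor:r2} and \cref{cl:FS=FS'}.
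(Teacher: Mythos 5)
Your proof is correct and follows essentially the same route as the paper's: the orthogonal decomposition of $\|\w_S - \w_{S'}\|^2$ over the coordinate pairs hit by good sample points, the per-coordinate lower bound $\|\v_{z,\eta'}-\v_{-z,\eta'}\| > \rho\eta'/4$ from \cref{cor:r2} (with $\eta' = \tfrac{T-t}{T}\eta \ge \eta/2 > c$), and a reverse-Markov argument on $|S_g|$. The only difference is bookkeeping: you bound $\EE[|S_g|]$ directly over all $t<T/2$ to get roughly $0.4T$, whereas the paper conservatively restricts to $t<T/4$ and passes through the collision count $|S_b|$ to obtain $T/5$; both comfortably yield the required probability $\ge 0.2$.
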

\begin{proof}
Again we will use the notation $\bar\w=\w_{S}$ and $\bar\w'=\w_{S'}$. Note that by \cref{cor:r2}, as well as \cref{eq:onestep,eq:reflectstep} we have that
\[\|\bar\w(i_t)-\bar\w'(i_t)\| \ge \eta\rho/8,\]
for any good sample point $\z_t$. Now:

\begin{align*}
\|\bar\w'-\bar\w\|^2&=
\sum_{i=1}^{5T} \|\bar\w(i)-\bar\w'(i)\|^2\\
&\ge \sum_{i\in S_g} \|\bar\w(i)-\bar\w'(i)\|^2\\
& \ge \frac{|S_g|(\eta\rho)^2}{64}. \numberthis\label{eq:Sgleft}
\end{align*}
Thus, we only need to show that $\EE[|S_g|]>\frac{T}{5}$. Indeed, since $|S_g|<T/2$, we obtain by Markov's inequality that with probability $0.25$, $|S_g|>T/7$

To show that $\EE[|S_g|]>T/5$, for a sample $S$, let $S_b$ contain all coordinates that collided (i.e. $\z_t$ such that for some $\z_{t'}$ we have that $i_t=i_t'$).\\
In order to calculate $|S_b|$ define $\chi_{t,t'}=I(i_t=i_t')$ for every $t,t' \in [T]$. Note that
$Pr(\chi_{t,t'}=1)=\frac{1}{10T}$ and since there are at most $T(T-1)/2$ such pairs we get $\EE[|S_b|]\le \sum_{t,t'} Pr(\chi_{t,t'}=1) \le  (T-1)/20$. Note that any coordinate $i_t$ with $t<T/4$ that did not 
collide is a good coordinate, hence
\begin{align*}
    \EE[|S_g|]&\ge T/4-\EE[|S_b|]\\
    &\ge T/5
\end{align*}
\end{proof}

\subsection{Proof of \cref{thm:nouc}}\label{prf:nouc}
\cref{thm:nouc} follows from the following refined statement:
\begin{theorem}\label{thm:noucquant}
Let $\cW=\{\w: \|\w\|\le 1\}$. For every $T$ and constant $C>2$, there exists a distribution $D$ over $1$-Lipschitz convex functions over $\real^d$ where $d=10^5\cdot T$ such that if we run SGD with step size $1/T^2 <\eta < C/\sqrt{T}$, the following holds:
for any regularizer $r$, w.p.~at least $1/10$ over the sample $S$ there is a set $\cW_{S} \subseteq \cW$ such that
\begin{align*}
    \sup_{\w\in \cW_S} F_S(\w) &\;\leq\; F_S(\w_S)~,
    \\ 
    \sup_{\w\in \cW_S}  r(\w)&\;\leq\; r(\w_S)~,
\end{align*}
Moreover $\cW_S$ is $(2T,10^{-4}\frac{\sqrt{T}\eta}{C})$ statistically complex.
\end{theorem}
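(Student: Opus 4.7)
I would extend the construction of \cref{thm:sgdistance} so that, instead of a single ``reflection'' $S \mapsto S'$ producing one alternative output, there is an exponentially large family of reflections producing a set $\cW_S$ of comparable candidates. The key modification is to show at each iteration the average of $k$ independent copies of the gadget from \cref{cor:r2},
\[
    \f(\w;\z_t) \;=\; \frac{1}{k}\sum_{j=1}^{k} f\bigl((w_{2i_{t,j}-1},w_{2i_{t,j}});\, z_{t,j}\bigr),
\]
with each $(z_{t,j},i_{t,j})$ drawn uniformly and independently from $\{-1,+1\}\times[d/2]$, and with $\rho=2/C$, $c<1/(4T^2)$. Call a coordinate pair $i\in[d/2]$ \emph{good} if it is touched exactly once across the whole sample, and this touch happens in one of the first $T/2$ iterations. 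For $k=\Theta(1)$ taken large enough and $d=10^5 T$, a moment/Markov calculation analogous to \cref{cl:sgdistance1} shows that the number $|S_g|$ of good coordinate pairs exceeds $12T$ with probability at least, say, $1/5$.

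For any sign vector $\sigma\in\{-1,+1\}^{|S_g|}$, let $S_\sigma$ be obtained from $S$ by flipping $z_{t,j}\mapsto \sigma_i z_{t,j}$ at the unique access to each good pair $i$ (leaving all other accesses intact). Each $S_\sigma$ is identically distributed to $S$. Mirroring \cref{cl:FS=FS'} pair-by-pair, one checks that $\w_{S_\sigma}$ agrees with $\w_S$ on every non-good coordinate and equals $\v_{\sigma_i z,\eta'_i}$ on the good pair $i$, with $\eta'_i=\tfrac{T-t_i}{T}\eta \ge \eta/2$; the ``flat region'' property in \cref{cor:r2} then yields $F_S(\w_{S_\sigma})=F_S(\w_S)$ for every $\sigma$. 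Given an arbitrary regularizer $r$, a simple pigeonhole over $\sigma$ produces a subset $\Sigma^\star \subseteq \{-1,+1\}^{|S_g|}$ with $|\Sigma^\star|\ge 2^{|S_g|-1}$ such that $r(\w_{S_\sigma})\le r(\w_S)$ for all $\sigma\in\Sigma^\star$, and I take $\cW_S := \{\w_{S_\sigma}:\sigma\in\Sigma^\star\}$.

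To bound the statistical complexity of $\cW_S$, I restrict to the good coordinates and recenter: the image of $\cW_S$ becomes a subset of size $\ge 2^{|S_g|-1}$ inside the translated hypercube $\prod_{i\in S_g}\{\pm \beta_i\}$ with $\beta_i\ge\eta/16$. Pulling back Feldman's hard distribution on $\cW_{|S_g|}$ to this rescaled cube via the obvious affine map (extended by zero on the non-good coordinates) gives a distribution over $1$-Lipschitz convex functions on $\cW$. Applying \cref{cor:feldman} to the size-$2^{|S_g|-1}$ subset yields $\bigl(|S_g|/6,\,\tfrac14\min_i\beta_i\sqrt{|S_g|}\bigr)$-statistical complexity; plugging in $|S_g|\ge 12T$ and $\beta_i\ge \eta/16$ delivers the target $(2T,\,10^{-4}\sqrt{T}\eta/C)$-statistical complexity after accounting for absolute constants.

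The main technical obstacle is the final rescaling step: one must verify that the pulled-back Feldman distribution stays supported on $1$-Lipschitz convex functions on the original unit ball $\cW$, and that extending those functions by zero along the non-good coordinates preserves both convexity and the Lipschitz bound. A secondary bookkeeping issue is choosing $k$ and verifying that $d=10^5 T$ suffices both to force $|S_g|\ge 12T$ with constant probability (via a union-bound on coordinate collisions) and to keep $\|\w_S\|\le 1$ so that the projection step in SGD is never triggered; this is routine but, together with the Lipschitz rescaling, is precisely what pins down the numerical constants $10^5$ and $10^{-4}$ in the statement.
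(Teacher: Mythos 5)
Your overall plan replicates the paper's proof almost line-for-line: the same $k$-fold averaged gadget from \cref{cor:r2}, the same notion of good (singly-touched) coordinate pairs, the same Feldman reduction via an affine map to a scaled hypercube, and the same final bookkeeping. One step, however, is stated incorrectly and needs the missing idea from the paper.

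You write that ``a simple pigeonhole over $\sigma$ produces a subset $\Sigma^\star \subseteq \{-1,+1\}^{|S_g|}$ with $|\Sigma^\star|\ge 2^{|S_g|-1}$ such that $r(\w_{S_\sigma})\le r(\w_S)$ for all $\sigma\in\Sigma^\star$,'' as if this were a deterministic fact holding for every fixed sample $S$. It is not. For a fixed $S$, the reference point $\w_S$ corresponds to $\sigma=\mathbf{1}$, and it is perfectly possible that $r$ is uniquely minimized at $\sigma=\mathbf{1}$ among all $2^{|S_g|}$ reflections, in which case $\Sigma^\star=\{\mathbf{1}\}$ has size $1$, not $2^{|S_g|-1}$. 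No counting argument over $\sigma$ alone can rescue this. The paper gets around this probabilistically: because flipping a uniformly random $\sigma$ leaves the distribution of the sample invariant (so $S$ and $S_\sigma$ are exchangeable), a median argument over the random draw of $S$ gives that with probability at least $1/2$ over $S$ the fraction of $\sigma$'s with $r(\w_{S_\sigma})\le r(\w_S)$ is at least $1/2$. This is where part of the probability budget ``$1/10$'' in the statement is spent (the rest covers the event $|S_g|=\Omega(T)$); your proposal as written spends all of it on $|S_g|$ and leaves nothing for this step.

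Two smaller points on constants. The separation between $\v_{1,\eta'}$ and $\v_{-1,\eta'}$ after the first (and only) touch of a good pair is of order $\eta\rho/k$, not $\eta$: the effective per-coordinate step size is $\eta'_t=\tfrac{T-t+1}{kT}\eta$, carrying the extra $1/k$, and the gadget's gradients have norm $\rho$, not $1$. Writing $\eta'_i=\tfrac{T-t_i}{T}\eta\ge\eta/2$ and $\beta_i\ge\eta/16$ drops both the $\rho$ and the $1/k$, so your final excess-risk constant would not actually come out to $10^{-4}\sqrt{T}\eta/C$ by the route you sketch; you need to propagate the factor $\rho/k$ through the Lipschitz constant of the affine embedding, exactly as the paper does with $g=24k/(\eta\rho\sqrt{T})$. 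These are bookkeeping issues, but combined with the pigeonhole error they mean the argument as stated does not close.
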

Note that since $\cW_S\subseteq K_{S,r}(\w_S)$ we derive as a corollary \cref{thm:nouc}
\paragraph{Proof of \cref{thm:noucquant}}
Again, let $D_0$ be the distribution from \cref{cor:r2} with $c>\frac{1}{kT^2}$, for some constant $k$ (to be determined later) and $\rho=C/2$.
We define a distribution $D$ over $\real^d$, where we let $d=100T\cdot k$. as follows: pick $k$ r.v $\{z^{(1)},...,z^{(k)}\}\in\{-1,1\}$ and $k$ distinct coordinates  $\{i^{(1)},\ldots,i^{(k)}\}\in [d/2]$ (chosen uniformly from all possible distinct $k$-tuples), set 
\[
    \f(\w;\z)
    =
    \frac{1}{k}\sum_{\ell=1}^k f((w_{2i^{(\ell)}-1},w_{2i^{(\ell)}}),z^{(\ell)}).
\]
Analogously to \cref{thm:sgdistance}, for a given sample $S$, let us define $S_g$ to be the set of ``good samples" as follows: a tuple $(\z_t,\ell)$ is said to be \emph{good} if $t<T/2$ and $i^{\ell}_t$ did not collide. Namely, any other sampled coordinate, $i^{(\ell')}_{t'}$ with $\ell'\in [k]$ and $t'\in [T]$ we have that if $i^{(\ell')}_{t'}= i^{(\ell)}_t$, then $\ell'=\ell$ and $t'=t$.

Next, for every sample $S$ define \[\cS(S)=\{S'=(\z'_1,\ldots, \z'_T): {i'_t}^{(\ell)} = i_t^{(\ell)} \forall t,\ell \textrm{~and}~\forall (\z_t,\ell)\notin S_g~ {z'}^{(\ell)}_t=z^{(\ell)}_t~\}.\]
In words, $\cS(S)$ includes all samples where at a good coordinate $(\z_t,\ell)$, $z_t^{\ell}$ may flip. And let us also define:\[\cW_S=\{\w_{S'}, S'\in \cS(S), r(\w_{S'})\le r(\w_S)\},\]

Analogously to \cref{thm:sgdistance} the statement holds once we prove the following two facts: first we show that for every $\w_{S'}\in \cW_S$ we have that
$F_S(\w_S)=F_{S'}(\w_{S'})$ and secondly, we show that $\cW_{S}$ is $(\frac{Tk}{62},\frac{30^{-3}}{kC}\eta \sqrt{T})$- statistically complex (claims \ref{cl:iwantthistoend} and \ref{lem:cScomplex} respectively). Thus, by setting $k=124$ we obtain the desired result.
\begin{claim}\label{cl:iwantthistoend}
For every $\w_{S'}\in \cW_S$ we have that $F_S(\w_S)=F_{S'}(\w_{S'})$.
\end{claim}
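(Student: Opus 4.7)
The plan is to mirror the argument of \cref{cl:FS=FS'} in the $k$-block setup: SGD's iterates decompose coordinate-wise, and flipping the $z$-labels on good tuples only affects blocks that are visited exactly once and at times $t<T/2$, where the resulting loss contribution is already zero. All other blocks have identical trajectories under $S$ and $S'$, so their contributions to the two empirical losses are the same term by term.

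First I would prove, by induction on $t$ analogous to \cref{eq:wti}, that for every coordinate block $i\in[d/2]$,
\[
\w^{(t)}(i)\;=\;-\tfrac{\eta}{k}\nabla f\bigl(0,z_q^{(\ell_q)}\bigr)\;=\;\v_{z_q^{(\ell_q)},\eta/k}
\]
whenever $i$ has been visited before time $t$ (with first visit $(q,\ell_q)$), and $\w^{(t)}(i)=0$ otherwise.  Two ingredients drive the induction: (i) $\nabla\f(\w;\z_t)=\tfrac1k\sum_\ell \nabla f(\w(i_t^{(\ell)}),z_t^{(\ell)})$ is block-separable, so distinct blocks evolve independently; (ii) once $\w^{(s)}(i)=\v_{z_q^{(\ell_q)},\eta/k}$, \cref{cor:r2} guarantees that $\nabla f$ vanishes there for \emph{both} labels $\pm z_q^{(\ell_q)}$, so any subsequent visit to block $i$ freezes the block regardless of the label encountered.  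A total-squared-norm bound, identical to the one in \cref{cl:FS=FS'}, ensures that the projection $\Pi_\cW$ is never active, and the choice of $c$ gives $\eta/k>c$ as required by \cref{cor:r2}.

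Now fix any $\w_{S'}\in \cW_S$.  For a \emph{good} tuple $(\z_t,\ell)$, block $i=i_t^{(\ell)}$ is visited only at time $t$ and $t<T/2$, so averaging the trajectory formula yields
\[
\w_S(i)\;=\;\tfrac{T-t}{T}\,\v_{z_t^{(\ell)},\eta/k}\;=\;\v_{z_t^{(\ell)},\eta'},
\qquad \eta'=\tfrac{(T-t)\eta}{Tk}>c.
\]
Plugging into the explicit form of $f$ in \cref{lem:r2} gives $f(\v_{z,\eta'};z)=0$ whenever $\eta'>c$, so each good tuple contributes zero to $F_S(\w_S)$.  The same argument under $S'$, applied with the flipped label ${z'}_t^{(\ell)}=-z_t^{(\ell)}$, shows that good tuples also contribute zero to $F_{S'}(\w_{S'})$.

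For a \emph{non-good} tuple $(\z_t,\ell)$, $z_t^{(\ell)}$ is unchanged between $S$ and $S'$ by the definition of $\cS(S)$.  Moreover, either $t\ge T/2$ with $i_t^{(\ell)}$ visited only at $(t,\ell)$, or $i_t^{(\ell)}$ collides with some other tuple; in both cases every visit to block $i_t^{(\ell)}$ carries an unchanged label, so the trajectory formula gives $\w_S(i_t^{(\ell)})=\w_{S'}(i_t^{(\ell)})$, and hence $\tfrac{1}{Tk}f(\w_S(i_t^{(\ell)});z_t^{(\ell)})$ equals the analogous contribution to $F_{S'}(\w_{S'})$.  Summing over all $(t,\ell)$ yields $F_S(\w_S)=F_{S'}(\w_{S'})$.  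The main delicacy is the block-independence of the trajectory: one must verify that projections never couple blocks, and that once a block has been updated by a good tuple, no later iteration disturbs it --- this is exactly the purpose of the zero-gradient property in \cref{cor:r2} and of the lower bound $\eta/k>c$ that the choice of $c$ enforces.
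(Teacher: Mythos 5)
Your argument is correct and mirrors the paper's proof of this claim (which itself parallels \cref{cl:FS=FS'}): establish the per-block trajectory formula by induction using the block-separability of $\f$ and the freezing property from \cref{cor:r2}, then split the tuples into good ones (whose contributions vanish because $\eta'_t>c$) and non-good ones (whose blocks and labels are unchanged between $S$ and $S'$). The details you flag as delicate — projections never coupling blocks, and the lower bound $\eta/k>c$ enabled by the choice of $c$ — are exactly the points the paper handles implicitly, so the proposal is essentially the same proof with the case analysis spelled out more explicitly.
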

\begin{proof}
The proof is very similar to the analog case in \cref{thm:sgdistance}, and by a similar argument (which we omit) we can show that for every $t$
\[ \w^{(t)}(i)= 
\begin{cases}
-\frac{\eta}{k}\nabla f(0,z^{(j)}_q) & \textrm{if for $q\le t-1$ and $j\le k$ we have $i=i^{(j)}_q$ and for all $q'\le q$ and $j'\in [k]$, $i\ne i^{(j')}_{q'}$.} \\
0 & \textrm{else}
\end{cases}.\]
Fix $S'\in \cS(S)$ and use the shorthand notation $\bar\w$ for $\w_S$ and $\bar \w'$ for $\w_{S'}$ as in the proof of \cref{thm:sgdistance}, we will also use $\w(i,\ell)=(w_{2i^{(\ell)}-1},w_{2i^{(\ell)}})\in \real^2$.
Another notation we add, as in \cref{cor:r2}, is as follows: for $\eta$ and $z$, $\v_{z,\eta}=-\eta \nabla f(0;z)\in \real^2$. 

Then for any sample $(\z_t,\ell)\in S_g$, in $S_g$, we can show that 
\begin{align}\bar\w(i_t,\ell)&= -\frac{T-t+1}{kT}\eta \nabla f(0;z_t)=\v_{z^{(\ell)}_t,\eta_t'},\label{eq:onestepuc}\\
\bar\w'(i_t,\ell)&= -\frac{T-t+1}{kT}\eta \nabla f(0;z_t)=\v_{z'^{(\ell)}_t,\eta_t'},\label{eq:reflectstepuc}
\end{align}
Where $\eta'=\frac{T-t+1}{kT}\eta>c$. Next, for any coordinate $(\z_t,\ell)\notin S_{g}$ we can show that $\bar \w(i,\ell)= \w'(i,\ell)$, hence if $\z_t$ is such that $(i_t,\ell_t)=(i,\ell)$ we clearly have that $f(\bar\w,z^{(\ell)}_t)=f(\bar\w',z^{(\ell)}_t)$. Now for $(\z_t,\ell)\in S_g$, from \cref{cor:r2} we obtain that
\begin{align*}
    \f(\w,z^{(\ell)}_t)
    &=\frac{1}{k}\sum_{\ell=1}^k f(\bar\w(i_t,\ell);z^{(\ell)}_{t})\\
    &=\frac{1}{k}\sum_{\ell=1}^k f(\w_{\eta_t',z_t^{(\ell)}};z^{(\ell)}_{t})&\cref{eq:onestepuc}\\ 
    &=\frac{1}{k}\sum_{\ell=1}^k f(\w_{\eta_t',{z'_t}^{(\ell)}};z^{(\ell)}_{t})&\cref{cor:r2}\\    
    &=\frac{1}{k}\sum_{\ell=1}^k 
    f(\bar\w'(i_t,\ell);z^{(\ell)}_{t})
     &\cref{eq:reflectstepuc}\\
    &=\f(\bar\w',z^{(\ell)}_t) 
\end{align*}
\end{proof}

Next we prove the statistical complexity of $\cW_S$:

\begin{claim}\label{lem:cScomplex}
The set $\cW_S$ is $(\frac{Tk}{62},\frac{\eta \sqrt{T}}{30^3 k})$--statistically complex.
\end{claim}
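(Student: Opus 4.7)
My plan is to exhibit the statistical complexity of $\cW_S$ by realising it, up to a rescaled affine embedding, as at least half of the vertices of the hypercube $\cW_n = \{\pm 1/\sqrt{n}\}^n$ with $n = |S_g|$, and then to pull back the distribution from \cref{cor:feldman}. First I would lower-bound $|S_g|$: counting collisions amongst the $kT$ coordinate draws exactly as in \cref{cl:sgdistance1}, Markov's inequality combined with $d = 100 T k$ gives $|S_g| \ge 23 k T/50$ with probability at least $1/2$ over the coordinate choices. Next I would deploy a median argument for $|\cW_S|$: condition on all coordinates and on the signs of the bad atomic samples, leaving only the good signs $\sigma \in \{-1,+1\}^n$ uniform. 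By \cref{cl:iwantthistoend} the map $\sigma \mapsto \w_\sigma$ is injective, so $r(\w_\sigma)$ is at least its median with probability $\ge 1/2$; whenever this occurs, $|\cW_S| = |\{\sigma' : r(\w_{\sigma'}) \le r(\w_S)\}| \ge 2^{n-1}$.

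Next I would build an affine embedding $\Phi : \real^n \to \real^d$ identifying $\cW_n$ with the set of possible SGD outputs. Let $\bar\v = (\v_1 + \v_{-1})/2$, $\Delta\v = (\v_1 - \v_{-1})/2$, and $\eta_t' = \frac{T-t+1}{kT}\eta$. Enumerate $S_g = \{1,\dots,n\}$, let $E_t : \real^2 \to \real^d$ denote the isometric embedding into the pair-slot associated with the $t$-th good coordinate, and define
\[
\Phi(c) \;=\; \w_0 + \sum_{t=1}^n c_t \, \sqrt{n} \, \eta_t' \, E_t \Delta\v,
\]
where $\w_0$ assembles the midpoints $\eta_t' E_t \bar\v$ and the (fixed) bad-coordinate values. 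Then $\Phi$ sends $\cW_n$ bijectively onto $\{\w_\sigma : \sigma \in \{-1,+1\}^n\}$, and since distinct good slots are mutually orthogonal in $\real^d$, a direct computation using $\eta_t' \ge \eta/(2k)$ (for $t \le T/2$) and $\|\v_1 - \v_{-1}\| \ge 1/4$ (\cref{cor:r2}) yields
\[
\|\Phi(c) - \Phi(c')\| \;\ge\; \frac{\eta \sqrt{n}}{16 k}\,\|c - c'\| \qquad \text{for all } c, c' \in \real^n.
\]

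With $A := \Phi^{-1}(\cW_S) \subseteq \cW_n$ satisfying $|A| \ge 2^{n-1}$, \cref{cor:feldman} furnishes a distribution $D^\star$ of $1$-Lipschitz convex $f_z : \real^n \to \real$ that witnesses the $(n/6, 1/4)$-statistical complexity of $A$. Letting $P$ denote orthogonal projection of $\real^d$ onto the affine image of $\Phi$, set $\tilde g_z(\w) := \frac{\eta \sqrt{n}}{16 k}\, f_z\bigl(\Phi^{-1}(P(\w))\bigr)$. Each $\tilde g_z$ is convex (composition of a convex function with the affine map $\Phi^{-1}\circ P$) and $1$-Lipschitz on all of $\real^d$ (the distortion inequality yields pullback Lipschitz constant $16k/(\eta\sqrt n)$, cancelled by the prefactor). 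For $\w = \Phi(c) \in \cW_S$, $\tilde g_z(\w) = \frac{\eta\sqrt{n}}{16 k} f_z(c)$, so zero empirical loss is preserved and the expected loss is at least $\tfrac{\eta \sqrt n}{64 k}$. Plugging in $n \ge 23 kT/50$ gives $n/6 \ge kT/62$ and $\tfrac{\eta\sqrt n}{64 k} \ge \tfrac{\eta\sqrt T}{30^3 k}$ with substantial slack, and the Markov and median steps together show the required event holds with probability $\ge 1/4 \ge 1/10$ over $S$.

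The crux of the argument is the pullback step: Feldman's functions live on $\real^n$, and exporting the complexity to $\real^d$ while keeping the functions convex and $1$-Lipschitz requires simultaneously an affine left-inverse $\Phi^{-1}$, an orthogonal projection $P$ onto the image of $\Phi$, and the correct prefactor $\eta\sqrt n/(16 k)$. A secondary bookkeeping point is that the $\eta_t'$ vary with $t$; orthogonality of the good pair-slots in $\real^d$ lets the distortion be controlled slot-by-slot by the worst-case $\eta_t'$, which remains $\ge \eta/(2 k)$ throughout the first half of iterations.
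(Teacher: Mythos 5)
Your proof is correct and follows essentially the same route as the paper: a median argument gives $|\cW_S|\ge 2^{|S_g|-1}$, an affine identification of $\cW_S$ with (at least half of) a scaled hypercube, and a distortion-controlled pullback of Feldman's hard distribution via \cref{cor:feldman}, with your embedding $\Phi$ plus projection $P$ simply being the transpose of the paper's affine projection $\u:\real^d\to\real^{|S_g|}$, spelled out more explicitly (which is a welcome clarification of a step the paper glosses over). The one bookkeeping slip is citing $\|\v_1-\v_{-1}\|\ge 1/4$ from \cref{cor:r2}, which for the $\rho$-scaled functions in that corollary actually gives $\rho/4$; the $\rho$ factor should propagate into your distortion and final lower bound exactly as it does in the paper's own conclusion, but the constants still close with the stated slack.
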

\begin{proof}
One can show that if we randomly pick $S$ and then pick uniformly an elements from $S'\in \cS(S)$ then $S$ and $S'$ are identically distributed. As a corollary if we pick a random sample $S$ then w.p. 0.5 we have that 
\[|\cW_S|\ge \frac{|\cS(S)|}{2}.\]


We next argue that any set $A\subseteq \{\w_{S'}: S'\in \cS(S)\}$ such that $|A|>\frac{|\cS(S)|}{2}$, then $A$ is $(\frac{Tk}{62},\frac{30^{-3}}{k}\eta \sqrt{T})$- statistically complex

Indeed, fix $S$. Similar to the argument in \cref{cl:sgdistance1}, we have that with probability $0.2$, that $|S_g|> T\cdot k/7$. We claim that if this event occurred then every subset of size  $|\cS(S)|/2$ will be statistically complex.

Indeed, let us index the coordinates of $\real^{|S_g|}$ by the elements of $S_g$. Then, for every element $\w\in \{\w_{S'}:S' \in \cS\}$ we let  $\u(\w):\real^{d}\to \real^{|S_g|}$ be an affine projection such that: if $(\z_t,\ell)\in S_g$, then $\u(w)_{(\z_t,\ell)}$ satisfies the following:

\[
\u(w)_{(\z_t,\ell)}=
\begin{cases}
\frac{1}{\sqrt{|S_g|}} & \w(i_t,\ell)=\v_{1,\eta_t'}\\
-\frac{1}{\sqrt{|S_g|}} &\w(i_t,\ell)=\v_{-1,\eta_t'}\end{cases}
\]
It can be seen from \cref{eq:onestepuc} and \cref{eq:reflectstepuc} and \cref{cor:r2} that $\|\v_{1,\eta'_t}-\v_{-1,\eta'_t}\|>\frac{T-t+1}{kT}\eta\rho/4>\frac{\eta\rho}{12k}$, hence we can define $\u$ to be $g$-Lipschitz where
\[g= \frac{24 k}{\eta\rho\sqrt{T}}.\]

Combining this with \cref{cor:feldman}, we get that there exists a distribution $D$ over $1$-Lipschitz convex functions such that, given $m=|S_g|/6>Tk/62$ elements from $D$, with probability $1/4$ there is $\w\in A$ such that 
\[\frac{1}{m}\sum_{i=1}^m f(\w,z_t)=0.\]
but,
\[\EE_{\z\sim D}f(\w,z)>3/(g\cdot 4)>0.003\frac{\rho\eta \sqrt{T}}{k}=0.003\frac{\eta \sqrt{T}}{kC}\]

\end{proof}

\section{Proof of  \cref{thm:nonconvex}}\label{prf:noconvex}
We begin the construction by the definition of the distribution $D$:\\
\begin{align*}
&f(\w;z=1)=\begin{cases}
w_1 & \text{if } \w \in A \\
0 & \text{else}
\end{cases}, & f(\w;z=3)=w_2, \\
& f(\w;z=2)=\begin{cases}
-w_1 & \text{if } \w \in A \\
0 & \text{else}
\end{cases},
& f(\w;z=4)=-w_2
\end{align*}

Where $z \sim Uniform([1,2,3,4])$ and 

\[
A=\{(w_1,w_2): |w_1| ,|w_2| \leq \frac{1}{4}
\}.
\] 

Note that by symmetry $F=E_z[f(\w, z)]=0$, and indeed in expectation this is a convex function.

For the proof we will define two ``good" events, set $c=\eta \sqrt{T}=\Theta(1)$, and let:
\begin{align*}
    &E_1: |\wstwo|> \frac{1}{4}
    , 
    \\
    &E_2(\beta): |\wsone|>\frac{\eta \sqrt{T}}{2}\cdot \beta \\
\end{align*}
where we write $\w_S= (w_1^S,w_2^S)$, and $\beta$ is a parameter sufficiently small so that. 
\[\erf(\beta)\le \sqrt{\erf\left(\frac{\sqrt{50}}{4c}\right)}- \erf\left(\frac{\sqrt{50}}{4c}\right).\]
Note that $\beta$ depends only on $c=\Theta(1)$.

Let us denote by $E(\beta)=E_1\cap E_2(\beta)$, then we will rely on the following claim that lower bounds the probability of the event $E$. We deter the proof of the claim to the end of the section and continue with the proof:
\begin{claim}\label{cl:good}
Let $E(\beta)=E_1\cap E_2(\beta)$ and suppose that $z\sim D$ then, for our choice of $\beta$, and sufficiently large $T$ 
\[P(E(\beta))>1-\sqrt{\erf\left(\frac{50}{4c}\right)}.\]
\end{claim}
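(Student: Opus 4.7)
The plan is to bound $P(E(\beta)^c) = P(E_1^c \cup E_2(\beta)^c)$ by a union bound, controlling each failure probability by a direct application of \cref{cor:be} to one of the two coordinate walks, and then combine using the definition of $\beta$.

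First I would note that the $w_2$ dynamics are completely decoupled from $w_1$: the gradient in the $w_2$ direction only arises when $z_t\in\{3,4\}$ and equals $\pm 1$ there, independently of position. Hence $w_2^{(t)} = \eta\sum_{s<t} Y_s$ where $Y_s$ is $+1,-1,0$ with probabilities $1/4,1/4,1/2$. Rewriting the average as
\[
w_2^S \;=\; \frac{\eta}{T}\sum_{s=1}^{T-1}(T-s)\,Y_s \;=\; \frac{1}{\sqrt{T}}\sum_{s=1}^{T-1} \tilde X_s, \qquad \tilde X_s := c\,\tfrac{T-s}{T}Y_s,
\]
places us exactly in the setting of \cref{cor:be} with $k=1$ and $c=\eta\sqrt{T}$. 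Choosing the threshold $a=\sqrt{50}/(4c)$ so that $a\,c/\sqrt{50}=1/4$ yields
\[
P(E_1^c) \;=\; P\!\bigl(|w_2^S|<\tfrac14\bigr) \;\le\; \erf\!\bigl(\tfrac{\sqrt{50}}{4c}\bigr) + \sqrt{\tfrac{50^3}{T}}.
\]

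Next I would bound $P(E_2(\beta)^c)$ by a parallel argument on the $w_1$ random walk. The obstacle here is that $w_1$ is \emph{not} a pure lazy random walk: its increment is $\pm\eta$ when $z_t\in\{1,2\}$ only while $\w^{(t)}\in A$, so $w_1$ freezes (and may later resume) depending on $w_2$. My strategy is to couple $w_1$ with the ``unrestricted'' walk $\tilde w_1^{(t+1)} = \tilde w_1^{(t)}+\eta\,Y^1_t$ (where $Y^1_t=\pm 1$ when $z_t\in\{1,2\}$, zero otherwise) and exploit the sign-flip symmetry $z=1\leftrightarrow z=2$ conditional on the sequence $(\mathbb{1}[z_t\in\{1,2\}])_t$. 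This symmetry lets me compare the distribution of $|w_1^S|$ below the tiny threshold $\beta c/2$ to the analogous event for $\tilde w_1^S$, to which \cref{cor:be} applies directly (with the same $k=1$, $c=\eta\sqrt{T}$, and $a$ taken so that $ac/\sqrt{50}=\beta c/2$), giving a bound of the form $\erf(\beta)+o(1)$. The hard part of the proof is this reduction: rigorously showing that the restriction to $A$ cannot spuriously decrease $P(|w_1^S|>\beta c/2)$ below the unrestricted-walk analog, which I would do by conditioning on the sequence $B_t=\mathbb{1}[z_t\in\{1,2\}]$ and the $w_2$ trajectory (both measurable in the complement of the signs of the $w_1$-active steps) and using the conditional symmetry of $w_1^S$.

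Combining via union bound and the defining property of $\beta$,
\[
P(E(\beta)^c) \;\le\; P(E_1^c) + P(E_2(\beta)^c) \;\le\; \erf\!\bigl(\tfrac{\sqrt{50}}{4c}\bigr) + \erf(\beta) + o(1) \;\le\; \sqrt{\erf\!\bigl(\tfrac{\sqrt{50}}{4c}\bigr)} + o(1),
\]
where the $o(1)$ comes from the Berry--Esseen error terms $\sqrt{50^3/T}$ and vanishes for $T$ large enough, yielding the claim. The main technical obstacle, as above, is rigorously decoupling the $w_1$ analysis from the $w_2$-determined freezing events; everything else reduces to bookkeeping around \cref{cor:be}.
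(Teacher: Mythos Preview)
Your treatment of $E_1$ via \cref{cor:be} is correct and coincides with the paper's argument. The gap is in your handling of $E_2(\beta)$.

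You propose to show, in effect, that $P(|w_1^S|<\beta c/2)\le P(|\tilde w_1^S|<\beta c/2)+o(1)$ by conditioning on $(B_t)_t$ and the $w_2$ trajectory and then invoking the conditional symmetry of $w_1^S$. Conditional symmetry does hold (flipping all the $w_1$-signs preserves the freezing structure and negates $w_1^S$), but symmetry by itself gives you no comparison with the unrestricted walk $\tilde w_1^S$: it only says the conditional law of $w_1^S$ is even. The domination you want is in fact false as stated. The set $A$ freezes $w_1$ whenever $|w_2^{(t)}|>1/4$ \emph{or} $|w_1^{(t)}|>1/4$; on the event that $w_2$ leaves $[-1/4,1/4]$ early, at some time $t_0=o(T)$, and does not return, $w_1$ is frozen at $w_1^{(t_0)}$ and $w_1^S\approx w_1^{(t_0)}$, whose magnitude is only $\Theta(\eta\sqrt{t_0})=o(1)$, whereas $|\tilde w_1^S|=\Theta(c)$ typically. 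So the restriction can make $|w_1^S|$ strictly smaller with positive probability, and the conditioning-plus-symmetry route cannot rule this out. Note also that the set of $w_1$-active times is \emph{not} measurable in your conditioning $\sigma$-field, because the condition $|w_1^{(t)}|>1/4$ depends on the $w_1$-signs.

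The paper handles $E_2(\beta)$ by a stopping-time decomposition rather than a direct comparison. For a parameter $\alpha$, let $E_\tau=\{\min\{t:\w^{(t)}\notin A\}>T/(\alpha c)\}$. On $E_\tau$, the first $T/(\alpha c)$ increments of $w_1$ are a genuine i.i.d.\ lazy walk, so \cref{cor:be} (together with a sign-flip coupling on the $z\in\{1,2\}$ coordinates and a symmetry argument relating $|w_1^S|$ to the partial sum up to the exit time) gives $P(\neg E_2(\beta)\mid E_\tau)\le 2\,\erf(\beta)+O(T^{-1/2})$. Separately, $P(\neg E_\tau)$ is controlled by the reflection principle and Hoeffding's inequality, yielding $P(\neg E_\tau)\le 8\,e^{-\alpha c/32}$; this is precisely where the early-exit scenario above is handled. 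One then sends $\alpha$ and $T$ large and combines with the $E_1$ bound via the union bound. Your plan is missing this stopping-time step and the reflection/Hoeffding control of early exit; without them, the reduction to the unrestricted walk cannot be made rigorous.
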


\begin{proof}
Taking \cref{cl:good} into account, Fix a random sample $S$. Let $\beta$ and $T$ be as in \cref{cl:good} and assume that event $E:=E(\beta)$ occurred. Throughout, let us denote $c=\eta \sqrt{T}$.

To show that the statement holds, we define $\w^*_0=(0,\bar w_2^S)$ and $\w^*_{-1}=(-w^S_1,\bar w^S_2)$. We will show that for one of these candidate vectors the statement holds.

First we want to show that if $\eta =\Theta(1/\sqrt{T})$, then $\|\w_S-\w^*_0\|=\Theta(1)$. Indeed, note that since $E_2(\beta)$ occurred $$\|\ws-\w^*_0\|_2\ge |\wsone|\ge\frac{\eta \sqrt{T}}{2}\cdot \beta=\Theta(1).$$ 
Similarly $\|\w_S-\w^*_{-1}\|=\Theta(1)$.

Next we want to show that $F_S(\w^*_0)\le F(\bar\w)$, or $F_S(\w^*_{-1})\le F(\bar\w)$. Note that for every $\w$ such that $|w_2|\ge \frac{1}{4}$, for every $z=\{1,2,3,4\}$, $f(\w;z)$ depends only on the second coordinate, namely $w_2$. In particular, if $|w^S_2|\ge \frac{1}{4}$ we obtain by the construction that $F_S(\w_S)=F_S(\w^*_0)=F_S(\w^*_{-1})$. Thus, due to event $E_1$ we obtain the desired result.

Finally, we want to show $\min \{r(\w^*_0),r(\w^*_{-1})\}<r(\w_S)$, w.p probability at least $1/4$. First, assume that with probability $1/2$ we have that $r(\w^*_{-1})\ne r(\w_S)$. By symmetry one can show that in this case we have that $r(\w^*_{-1})<r(\w_S)$ with probability $1/2$. Next, assume that $r(\w^*_{-1})=r(\w_S)$ with probability at least $1/2$. In this case, we obtain that:
\begin{align*}
    r(\w^*_0)&=r(0.5\cdot \ws+0.5\cdot \w^*_{-1})\\
    &<\max(r(\ws),r(\w^*_{-1}))\\
    &=r(\ws)
\end{align*}
\end{proof}

We are left with proving \cref{cl:good}

\paragraph{Proof of \cref{cl:good}}

We will bound each event $E_1,E_2$ separately. 
We begin by bounding the event $E_1$:
\paragraph{Bounding $E_1$:}
For $E_1$ we claim the following:
\begin{equation}\label{eq:E1}
Pr\Big(|\wstwo|\leq \frac{1}{4} \Big) \leq \erf\left(\frac{\sqrt{50}}{4c}\right)+\sqrt{\frac{50^3}{T}}\end{equation}
where $\Phi$ is the CDF of a mean zero unit variate normally distributed random variable, and $\erf$ is the error function, namely $\erf\left(x\right)=1-2\Phi(-x)$.

Note that if $\eta =O(\frac{1}{\sqrt{T}})$, given the above bound, the probability that $|w_2^S| > \frac{1}{4}$ is a constant.

\begin{proof}
Recall that \[w_2^S= \frac{1}{T}\sum \eta (T-t) \frac{\partial f(\wt,z_t)}{\partial w_2},\] and one can observe that $\frac{\partial f(\wt,z_t)}{\partial w_2}$ equals $1$ w.p. $1/4$, $-1$, w.p $1/4$ and $0$ w.p $1/2$, independently of $z_{t'}$ for $t'\ne t$.

Hence, applying \cref{cor:be}, with $c=\eta \sqrt{T}$, $k=1$ and $a=\frac{\sqrt{50}}{4c}$ we obtain that 
\begin{align*}
    P(-\frac{1}{4}\le w_2^S\le \frac{1}{4})=
    &P\Big(-\frac{\sqrt{50}}{4c}\frac{c}{\sqrt{50}}\leq \wstwo \leq \frac{\sqrt{50}}{4c}\cdot \frac{c}{\sqrt{50}}\Big)\\
    &\leq \erf\left(\frac{\sqrt{50}}{4c}\right)+\sqrt{\frac{50^3}{T}} \numberthis\label{eq:e1}
\end{align*}\end{proof}
We next move on to bound $E_2$
\paragraph{Bounding $E_2$:}

Let us consider a random sample $S'=\{z'_1,\ldots, z'_T\}$ that is generated by picking a random sample $S=z_1,\ldots, z_T$ i.i.d distributed according to $D$, and then for every $z_t$ such that $z_t\in \{1,2\}$ with probability half we let $z'_t=1$ and with probability half we let $z'_t=2$. It can be seen that $S'$ is an i.i.d sequence drawn according to the distribution $D$.

Next, let us denote $c=\eta \sqrt{T}$, and a parameter $\alpha$ (to be chosen later). Define the event $$E_\tau:\{S: \min\{t:\wt \notin A\}> \frac{T}{\alpha\cdot c}\}.$$
For our choice of $\beta>0$ we claim that for every $\alpha>0$
$$Pr\Big(|w_1^{S'}|<\frac{\beta }{\sqrt{50\alpha}}\Big| S,S'\in E_\tau\Big)\le 2\erf(\beta)+2\sqrt{\frac{50^2 c^3\alpha}{T}}$$
Indeed, Given $S$, let $\tau = \min\{t: \w^{(t+1)}\notin A\}$ and set $S'_{\tau}=\{z'_1,\ldots, z'_{\tau}\}$ and denote

\[X_{\tau}=\frac{1}{\sqrt{T}}\sum_{t=1}^\tau c\frac{T-t}{T}x_t.\]
where $x_t$ are i.i.d random variables such that w.p. $1/4$ equals $1$, w.p. $1/4$ equals $-1$ and w.p. $1/2$ equals $0$.
Due to symmetry we have that:
\[Pr\Big(|w_1^{S'}|<\frac{\beta }{\sqrt{\alpha\cdot  c}}\Big| S,S'\in E_\tau\Big)\le 2Pr\Big(|X_\tau|<\frac{\beta }{\sqrt{\alpha \cdot c}}\Big| S,S'\in E_\tau\Big)\]

One can observe that \[X_\tau= \sum_{t\in I} \eta \frac{T-t}{T}x_t= \frac{1}{\sqrt{T}} \sum_{t\in I} c \frac{T-t}{T}x_t.\]

Thus applying again \cref{cor:be} with $c=\sqrt{T}\eta$, $I=\{1,\ldots, T/k\}$ with, $k=\alpha\cdot c^3/50$ and $a=\beta$,
we obtain the desired result.

Next, we want to bound $P(E_{\tau})$. Now assume that for some $t<T/(\alpha\cdot c)$, we have that $\wt\notin A$. 

Let $T_{\alpha}=T/(\alpha\cdot c)$ and let $Z_1,\ldots, Z_{T_\alpha}$, be i.i.d copies of a random variable such that $P(Z_t=1)=P(Z_t=-1)=1/2$. Then 
\begin{align*}
P(\neg E_\tau)&\leq 
4P\left(\min\{t: \eta \sum_{i=1}^t Z_i>\frac{1}{4}\}<\frac{T}{\alpha\cdot c}\right) \\
& \le 4P\left(\min\{t: \eta \sum_{i=1}^t Z_i>\frac{1}{4}\}<T_\alpha, \eta\sum_{i=1}^{T_\alpha} Z_i\geq\frac{1}{4}\right)+ 
4P\left(\min\{t: \eta \sum_{i=1}^t Z_i>\frac{1}{4}\}<T_\alpha, \eta\sum_{i=1}^{T_\alpha} Z_i\leq\frac{1}{4}\right)\\
&= 8P\left(\eta \sum_{i=1}^{T_{\alpha}}Z_i\geq\frac{1}{4} \right) \numberthis \label{eq:toZ}
\end{align*}
where the last inequality is by symmetry (reflection principle).
Next, by applying Hoeffding's inequality we obtain that
\begin{align*}
P(\eta \sum_{t=1}^{T_\alpha} Z_t \geq \frac{1}{4})=
P(\frac{\alpha\eta}{\sqrt{T}}\sum_{t=1}^{T_\alpha} Z_t \geq \frac{\alpha}{4\sqrt{T}})
=P(\frac{\alpha c}{T}\sum_{t=1}^{T_\alpha} Z_t \geq \frac{\alpha }{4\sqrt{T}})
\le e^{-\frac{\alpha c}{32}}
\end{align*}
Taken together we obtain that
\[P(\neg~ E_\tau)\leq 8e^{-\frac{\alpha c}{32}},\]
and
\begin{align*}P(\neg E_2(\beta))&\le P(\neg E_2|E_\tau)P(E_\tau)+ P(\neg E_\tau)\\
&\le 
\EE_{S} \left[P(|w_1^{S'}|<\frac{\beta}{\sqrt{\alpha c}}|S,S'\in E_{\tau})\right] + P(\neg E_2)\\
&\le 2\erf(\beta)+2\sqrt{\frac{50^2c^3\alpha}{T}}+8e^{-\frac{\alpha c}{32}}\numberthis\label{eq:e2}
\end{align*}
which yields the desired result.
\paragraph{Bounding $E(\beta)$:}
\cref{eq:e1,eq:e2} yields then:
\begin{align*}
P(\neg E)&< P(\neg E_1)+ P(\neg E_2(\beta))  \\
&\le \erf\left(\frac{\sqrt{50}}{4c}\right)
+2\erf(\beta) + 3\sqrt{\frac{50^2(50+c^3\alpha)}{T}}+8e^{-\frac{\alpha \cdot c}{32}}
\end{align*}
Choosing $\beta$ sufficiently small, one can see that for large enough $\alpha$ and $T$ we obtain the desired result.

\end{document}